\theoremstyle{plain}
\newtheorem{theorem}{Theorem}
\newtheorem{proposition}{Proposition}
\newtheorem{lemma}{Lemma}
\newtheorem{corollary}{Corollary}
\theoremstyle{definition}
\newtheorem{definition}{Definition}
\newtheorem{assumption}{Assumption}
\theoremstyle{remark}
\newtheorem{step-subsection}{Step}
\newtheorem{step}{Step}
\newcommand{\argmax}{\mathop{\rm argmax}\limits}
\newcommand{\argmin}{\mathop{\rm argmin}\limits}
\newcommand{\var}{\text{Var}}
\newcommand{\Var}{\mathrm{Var}}
\title{Counterfactual Learning with General Data-generating Policies}
\author {
    % Authors
    Yusuke Narita, \textsuperscript{\rm 1}
    Kyohei Okumura, \textsuperscript{\rm 2}
    Akihiro Shimizu, \textsuperscript{\rm 3}
    Kohei Yata \textsuperscript{\rm 4}
}
\begin{document}

\maketitle

%\begin{comment}
\begin{abstract}
Off-policy evaluation (OPE) attempts to predict the performance of counterfactual policies using log data from a different policy. We extend its applicability by developing an OPE method for a class of both full support and deficient support logging policies in contextual-bandit settings. This class includes deterministic bandit (such as Upper Confidence Bound) as well as deterministic decision-making based on supervised and unsupervised learning. We prove that our method's prediction converges in probability to the true performance of a counterfactual policy as the sample size increases. We validate our method with experiments on partly and entirely deterministic logging policies. Finally, we apply it to evaluate coupon targeting policies by a major online platform and show how to improve the existing policy.
\end{abstract}
%\end{comment}

\section{Introduction}\label{intro}

In bandit and reinforcement learning, off-policy (batch) evaluation attempts to estimate the performance of some counterfactual policy given data from a different logging policy.
% \footnote{Key prior studies include \citep{li2010contextual,Strehl2010,li2011unbiased,Li2012,bottou2013counterfactual,Dudik2014,Swaminathan2015,Swaminathan2015b,Schnabel2016recommend,wang2016optimal,swaminathan2017off,narita2018efficient,bennett2019confounder, uehara2020covariate,su2020shrinkage,saito2021open} for bandit, and \citep{precup2000eligibility, Jiang16, Thomas16, liu2018representation, Farajtabar2018MoreRD, Irpan2019OffPolicyEV, kallus2019DRL, Uehara2020MWL} for reinforcement learning.
% }
Off-policy evaluation (OPE) is essential when deploying a new policy might be costly or risky, such as in education, medicine, consumer marketing, and robotics. OPE relates to other fields that study counterfactual/causal reasoning, such as statistics and economics.

Most existing OPE studies focus on \emph{full support} logging policies, which take all actions with positive probability in any context, such as stochastic bandit (e.g. $\epsilon$-greedy and Thompson Sampling) and random A/B testing.
However, real-world decision-making often uses \emph{deficient support} logging policies, including deterministic bandit (e.g. Upper Confidence Bound) as well as deterministic decision-making based on predictions obtained from supervised and unsupervised learning.
An example in the latter group is a policy that greedily chooses the action with the largest predicted reward.
OPE is difficult with a deficient support logging policy, since its log data contain no information about the reward from actions never chosen by the logging policy.
There appears to be no established OPE estimator for deficient support logging policies 
% \cite{sachdeva2020deficient}
\cite{Sachdeva2020-qi}.

We provide a solution to this problem. Our proposed OPE estimator is applicable not only to full support logging policies but also to deficient support ones.
%for stochastic and deterministic logging policies.
%We show that OPE is possible with log data from a wide class of machine learning decision algorithms (logging policies), and provide a general method for conducting OPE.
%This class includes stochastic policies such as contextual $\epsilon$-greedy and Thompson Sampling, as well as their non-contextual analogs and random A/B testing.
%Importantly, this class also includes deterministic policies such as Upper Confidence Bound (UCB) as well as deterministic decision rules based on predictions obtained from supervised and unsupervised learning.
We also allow for hybrid stochastic and deterministic logging policies, i.e., logging policies that choose actions stochastically for some individuals and deterministically for other individuals.
%We develop an OPE estimator that is usable for the general class of logging policies.
%To do so, we use

{\bf Method.}
Our OPE estimator is based on a modification of the Propensity Score \citep{rosenbaum1983central}, which we dub the ``Approximate Propensity Score'' (APS) \citep{narita2021algorithm}.
APS of action (arm) $a$ at context (covariate) value $x$ is the average probability that the logging policy chooses action $a$ over a shrinking neighborhood around $x$ in the context space.
If two actions have nonzero APS at $x$, the logging policy chooses both actions locally around $x$.
This enables us to estimate the difference in the mean reward between the two actions by exploiting the local subsample around $x$.
When the logging policy is deterministic, the subsample consists of individuals near the decision boundary between the two actions.
%We show that if APS at $x$ is nonzero for a pair of actions, it is possible to estimate the difference in the mean reward between the two actions by exploiting the local subsample around $x$.
We then use the estimated reward differences to construct an estimator for the performance of any given counterfactual policy. %The general class is thus defined by the existence of APS.
%\textcolor{red}{***It may be better to remove or rewrite the last three sentences. The current writing seems difficult for the reader to understand.***}

As the main theoretical result, we prove that our proposed OPE estimator is consistent. That is, the estimator converges in probability to the true performance of a counterfactual policy as the sample size increases, under the assumption that the mean reward differences are constant over the context space (Theorem \ref{cor:opeconsistency}).
This result holds whether the logging policy is of full support or deficient support. The proof exploits results from differential geometry and geometric measure theory, which have not been applied in machine learning research as far as we know.

{\bf Simulation Experiments.}
We validate our method with two simulation experiments.
The first considers a mix of full support and deficient support policies as the logging policy. Actions are randomly chosen for a small A/B test segment of the population and are chosen by a deterministic supervised learning algorithm for the rest of the population.
For the task of evaluating counterfactual policies, our method produces smaller mean squared errors than a baseline estimator that only uses the A/B test subsample.
The second experiment considers a situation in which we have a batch of data generated by a deterministic bandit algorithm.
We find that our estimator outperforms a regression-based estimator in terms of mean squared errors.
%for the task of evaluating counterfactual policies.

{\bf Real-World Application.}
We empirically apply our method to evaluate and optimize coupon targeting policies. Our application is based on proprietary data provided by Mercari Inc., a major e-commerce company running online C2C marketplaces in Japan and the US.
This company uses a deterministic policy based on uplift modeling to decide whether they offer a promotional coupon to each target customer.
We use the data produced by their policy and our method to evaluate a counterfactual policy that offers the coupon to more customers.
Our method predicts that the counterfactual policy would increase revenue more than the cost of coupon offers, suggesting that redesigning the current policy is profitable.

%\subsection{Related Work}
{\bf Related Work.}
Widely-used OPE methods include inverse probability weighting (IPW) \citep{precup2000eligibility,Strehl2010}, self-normalized IPW \citep{Swaminathan2015b}, Doubly Robust \citep{Dudik2014}, and more advanced variants \citep{wager2017estimation,Farajtabar2018MoreRD,su2020shrinkage}.
These methods are based on importance sampling (IS) and require that the logging policy be of full support, i.e., assign a positive probability to every action potentially chosen by the counterfactual policy. This restriction makes them hard to use when the logging policy is of deficient support.

There are two existing approaches to deficient support logging policies.\footnote{Sachdeva et al.~(2020) also proposes another approach in which they restrict the policy space.}
The first approach considers a logging policy that varies over time or across individuals \citep{Strehl2010}.
Viewing the sequence of varying logging policies as a single full support logging policy, it is possible to apply IS-based OPE methods.
Unlike this approach, our approach is usable even when the logging policy is fixed.

The second approach, called the Direct Method or Regression Estimator, predicts the mean reward conditional on the action and context by supervised learning and uses the prediction to estimate the performance of a counterfactual policy \citep{beygelzimer2009offset,Dudik2014}.
Similar regression-based methods are proposed for reinforcement learning settings \citep{duan2020minimax}.
This approach is sensitive to the accuracy of the mean reward prediction. It may have a large bias if the regression model is not correctly specified.
This issue is particularly severe when the logging policy is of deficient support, since each action is observed only in a limited area of the context space.
Our approach instead predicts the mean reward differences between actions by exploiting local subsamples near the decision boundaries without specifying the regression model.
\citet{narita2021algorithm} originally develop and empirically apply this approach in the context of treatment effect estimation with a binary treatment. 
This paper extends their approach to OPE with multiple actions. 
This idea relates to regression discontinuity designs in the social sciences \citep{lee2010rdd}.

It is worth noting that our approach is applicable to \emph{off-policy selection}, in which the researcher is to design a decision rule to select a policy given a finite set of policies \citep{Kuzborskij2021-vc}.
Since our method can estimate the expected reward of the policies, we can first estimate the reward of each, and then choose the one with the highest expected reward.

\section{Framework}
%{\bf Data Generating Process.}
${\cal A} \coloneqq \{1,...,m\}$ is a set of \textit{actions} that the decision maker can choose from.
Let $\mathbb R^p$-valued random variable $X$ denote the \textit{context}
%(e.g., the user's demographic profile and browsing history)
that the decision maker observes when picking an action.
% , where $p$ is the number of context variables.
%We think of $(Y(\cdot),X)$ as a random vector with unknown distribution.
Let ${\cal X}$ denote the support of $X$.
To simplify the exposition, we assume that $X$ is continuously distributed. %absolutely continuous with respect to the Lebesgue measure.
%We denote the chosen action by $A\in{\cal A}$.
% Let $Y(\cdot): {\cal A}\rightarrow \mathbb{R}$ denote a potential reward function that maps actions into rewards, where $Y(a)$ is the reward when action $a$ is chosen.
%(e.g., whether an advertisement as an action results in a click).
% If some context variables in $X$ are discrete, our analysis still holds conditional on the discrete variables.
Let a tuple of $m$ $\mathbb R$-valued random variables $(Y(1), \dots, Y(m))$ denote \emph{potential rewards}; $Y(a)$ denotes a \emph{potential reward} that is observed when action $a$ is chosen.
$(Y(1), \dots, Y(m), X)$ follows distribution $P$, which is unknown to the decision maker.

% We consider policies that choose actions based on individual context $X$.
A \emph{policy} chooses an action given a context.
Let $ML:\mathbb{R}^p\rightarrow \Delta({\cal A})$ represent the \textit{logging policy}, where $ML(a|x)$ is the probability of taking action $a$ for individuals with context $x$.
We assume that the analyst knows the logging policy and is able to simulate it.
That is, the analyst is able to compute the probability $ML(a|x)$ for each action $a\in{\cal A}$ given any context $x\in\mathbb{R}^p$.
%In typical machine-learning scenarios, an algorithm first applies machine learning on $X$ to make some prediction and then uses the prediction to output the choice probability $ML(\cdot|X)$.
% We allow for the case with deterministic policies, in which $ML(a|x)\in\{0,1\}$ for every $(a,x)$.
%We also allow for the case with fully stochastic algorithms in which $ML(a|x)>0$ for all $a\in{\cal A}$ and $x\in\mathbb{R}^p$.
%In addition, the algorithm can be mixed stochastic and deterministic in that $ML(a|x)>0$ for all $a\in{\cal A}$ for some $x\in\mathbb{R}^p$ but $ML(a|x)=1$ for some $a\in{\cal A}$ and $x\in\mathbb{R}^p$.
Suppose we have log data $\{(Y_i,X_i,A_i)\}_{i=1}^n$ generated as follows. For each individual $i$, (1) $(Y_i(1), \dots, Y_i(m),X_i)$ is i.i.d.~drawn from $P$;
\footnote{
This assumption is valid when we have a batch of log data generated by a fixed policy.
}
(2) Given $X_i$, the action $A_i$ is randomly chosen based on the probability $ML(\cdot|X_i)$;
% so that $A_i$ is independent of $Y_i(\cdot)$ conditional on $X_i$; %Note that we allow $ML(\cdot|X_i)$ to be deterministic, so that $ML(a|X_i) = 1$ for some action $a$ and the probability of selecting any other action is 0.
(3) We observe the reward $Y_i \coloneqq Y_i(A_i)$.
%, which we denote by $Y_i$.
Note that only one of $Y_i(1), \dots, Y_i(m)$ is observed for individual $i$ and recorded as $Y_i$ in the log data.
The joint distribution of $(Y, X, A)$ is determined once $ML$ and $P$ are given.

\begin{comment}
Assumption \ref{assumption:ML} \ref{assumption:ML-continuity} allows the function $ML$ to be discontinuous on a set of points with the Lebesgue measure zero.
For example, $ML$ is allowed to be a discontinuous step function as long as it is continuous almost everywhere.
Assumption \ref{assumption:ML} \ref{assumption:ML-measure-zero-boundary} holds if the Lebesgue measures of the boundaries of ${\cal X}_0$ and ${\cal X}_1$ are zero.
\end{comment}

{\bf Prediction Target.}
We are interested in estimating the expected reward from any given {\it counterfactual policy} $\pi:\mathbb{R}^p\rightarrow \Delta({\cal A})$, which chooses a distribution of actions given individual context:
\vspace{-3pt}
\begin{align*}
% \label{eq:V^pi}
	V(\pi)&\coloneqq E \left[\sum_{a\in {\cal A}}Y(a)\pi(a|X) \right].
\end{align*}

\section{Learning with Infinite Data}\label{section:identification}

We first consider the identification problem, which asks whether it is possible to learn $V(\pi)$ if we had an infinite amount of data.
Formally, we say that $V(\pi)$ is \textit{identified} if it is uniquely determined by the joint distribution of $(Y,X,A)$.
A key step toward answering the identification question is what we call the \textit{Approximate Propensity Score} (APS). To define it, for $a\in{\cal A}$ and $x\in{\cal X}$, let:
\begin{align*}
	p^{ML}_\delta(a|x)&\coloneqq\frac{\int_{B(x,\delta)}ML(a|x^*)dx^*}{\int_{B(x,\delta)}dx^*},
\end{align*}
where $B(x, \delta)=\{x^*\in\mathbb{R}^p:\|x-x^*\|<\delta\}$ is the $\delta$-ball around $x\in {\cal X}$.
Here, $\|\cdot\|$ denotes the Euclidean norm on $\mathbb{R}^p$.
To make common $\delta$ for all dimensions reasonable, we normalize $X_{ij}$ to have mean zero and variance one for each $j=1,...,p$.
We assume that $ML$ is a Lebesgue measurable function so that the integrals exist.
We then define APS $p^{ML}$ as follows: for $a\in{\cal A}$ and $x\in{\cal X}$,
\begin{align*}
	p^{ML}(a|x)&\coloneqq \lim_{\delta\rightarrow 0}p^{ML}_\delta(a|x).
\end{align*}
%Intuitively, $p^{ML}(a|x)$ is the average probability of choosing action $a$ in a shrinking ball around $x$.
%We call this the {\it Approximate} Propensity Score, since this score modifies the standard propensity score $ML(\cdot|X)$ to incorporate local variation in the score.

%TODO: Decrease the font size in the figure
\begin{figure}[!t]
\begin{minipage}[t]{.48\textwidth}
\centering
	\includegraphics[width=\linewidth]{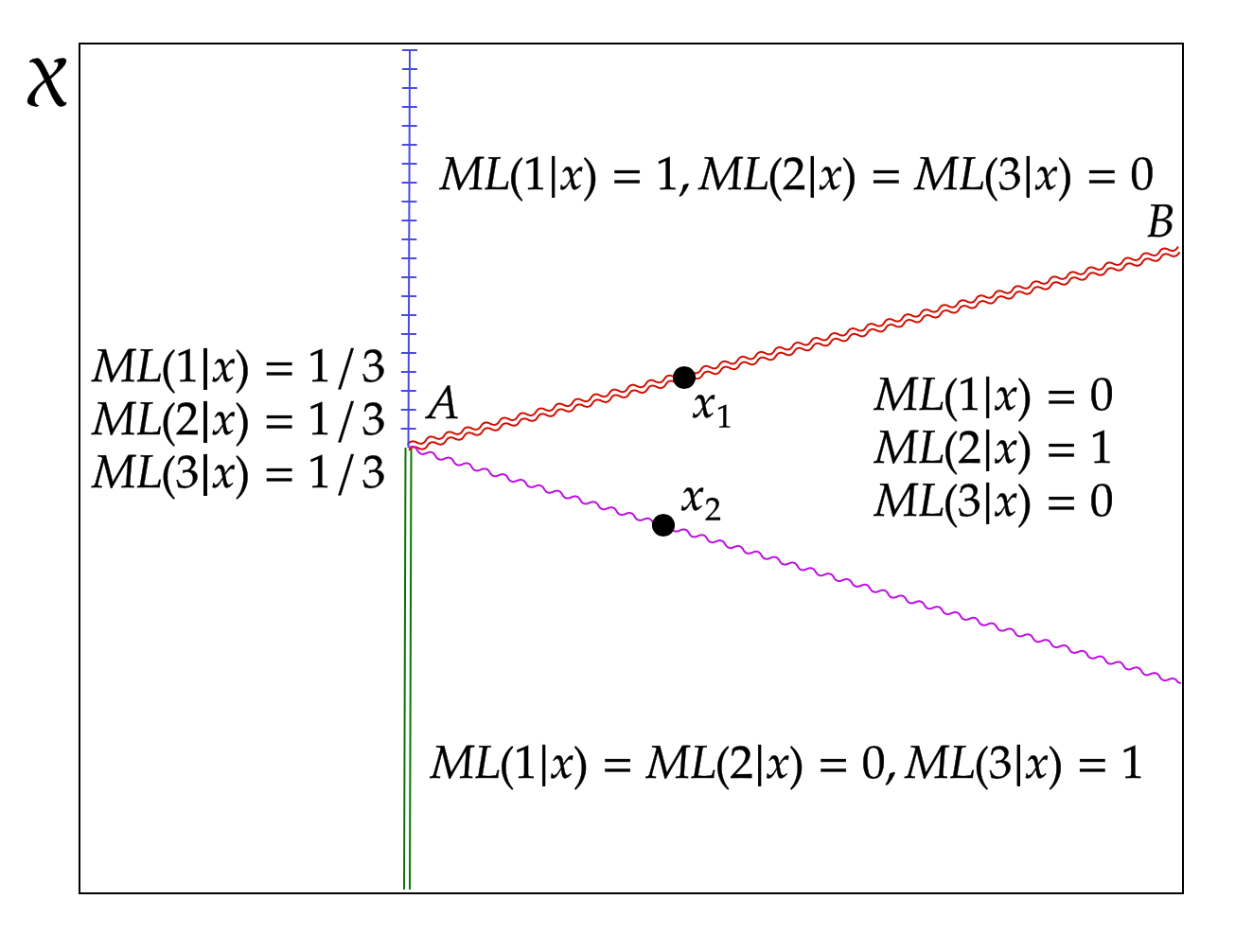}
% 	\subcaption{}
(a)  

\end{minipage}
\hfill
\begin{minipage}[t]{.48\textwidth}
\centering
	\includegraphics[width=\linewidth]{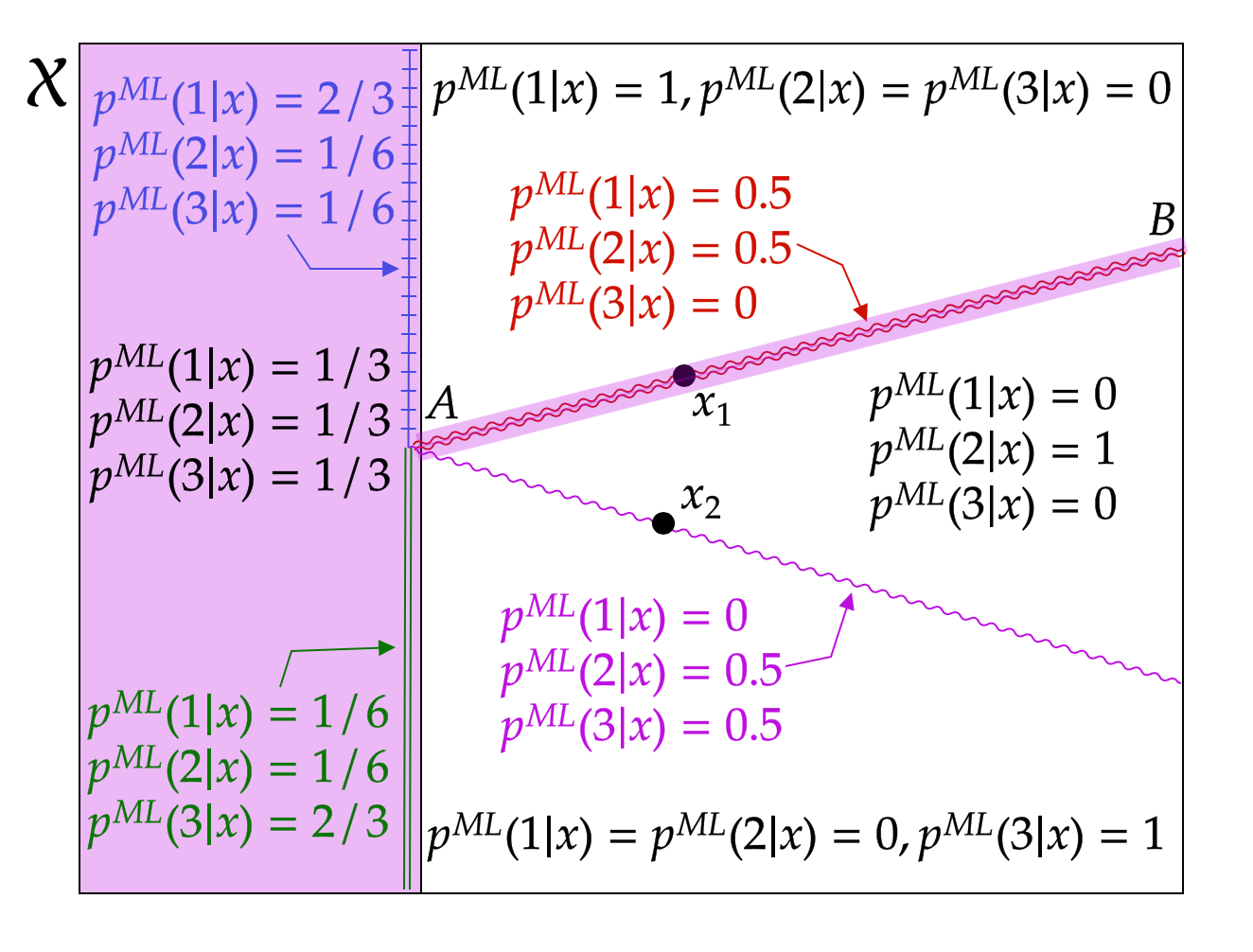}
% 	\subcaption[]{(b)}
(b)
\end{minipage}
\caption{Example of the Approximate Propensity Score}
\begin{flushleft}
	\footnotesize {\it Notes}: This figure shows an example of logging policy $ML$ (panel (a)) and corresponding APS $p^{ML}$ (panel (b)).
	The shaded region in panel (b) indicates the subpopulation for which $p^{ML}(1|x)>0$ and $p^{ML}(2|x)>0$.
	As discussed in Section \ref{section:estimation}, our method uses the subsample in the shaded region to estimate the conditional mean difference $E[Y(2)|X]-E[Y(1)|X]$.
\end{flushleft}
\label{figure:APS-example}
\end{figure}
%Let $\Delta_{int}({\cal A})\coloneqq \{p \in \Delta({\cal A}):p(a)\in (0,1) \}$.
% and $\Delta_{bd}^a({\cal A})\coloneqq \{p \in \Delta({\cal A}):p(a)\in \{0,1\} \}$ for each $a\in {\cal A}$.
%$\Delta_{int}({\cal A})$ is the set of probability vectors that assign every action a nonzero probability.
%, and $\Delta_{bd}^a({\cal A})$ is the set of probability vectors that assign action $a$ probability zero or one.

Figure \ref{figure:APS-example} illustrates APS. Here $\mathcal X \subseteq \mathbb{R}^2$, $\mathcal{A}=\{1,2,3\}$, and the support of $X$ is divided into four sets depending on the value of $ML$ as in panel (a).
Panel (b) shows the corresponding APS.
For the interior points of each of the four sets, APS is equal to $ML$.
On the border of any two sets, APS is the average of the $ML$ values in the two sets.

Our identification analysis uses the following assumption.

\begin{assumption}[Local Mean Continuity]\label{continuity}
	For any $a\in {\cal A}$, the conditional expectation function $E[Y(a)|X=x]$ is continuous at each $x\in {\cal X}$ such that $p^{ML}(a|x)>0$ and $ML(a|x)=0$.
\end{assumption}
$ML(a|x)=0$ means that action $a$ is never taken for individuals with context $x$.
If APS of $a$ at $x$ is nonzero ($p^{ML}(a|x)>0$), however, there exists a point close to $x$ that has a positive probability of receiving action $a$, which enables us to observe the reward from the action near $x$.
For any such point $x$, Assumption \ref{continuity} ensures that the points close to $x$ have similar conditional means of the potential reward $Y(a)$.
%Under the above assumption, APS provides an easy-to-check condition for whetherthe data from a logging policy allow us to identify the conditional mean potential rewards.
Thus, the conditional mean reward from action $a$ at $x$ is identified.
On the other hand, when $ML(a|x)>0$, action-context pair $(a,x)$ is observed, allowing us to identify the mean reward without any assumptions. Assumption \ref{continuity} therefore does not impose continuity at such points.
The lemma below summarizes the above argument.
For a set $A\subset\mathbb{R}^p$, let ${\rm int}(A)$ denote the interior of $A$.
\begin{lemma}[Identification of Conditional Means]\label{proposition:id-means}
	If Assumption \ref{continuity} holds, then for each $a\in {\cal A}$,
	$E[Y(a)| X=x]$ is identified for every $x\in {\rm int}({\cal X})$ such that $p^{ML}(a|x)> 0$.
\end{lemma}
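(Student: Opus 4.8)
The plan is to reduce everything to an unconfoundedness argument together with a local-averaging step, splitting according to the dichotomy already flagged in the text: (i) $ML(a|x)>0$, and (ii) $ML(a|x)=0$ but $p^{ML}(a|x)>0$. The engine in both cases is that, by construction of the log data, $A_i$ is drawn from $ML(\cdot|X_i)$ independently of $(Y_i(1),\dots,Y_i(m))$ given $X_i$, so $(Y(1),\dots,Y(m))\indep A\mid X$. Writing $f_X$ for the density of $X$ and using $Y\,\mathbf 1\{A=a\}=Y(a)\,\mathbf 1\{A=a\}$, this gives $E[Y\,\mathbf 1\{A=a\}\mid X=x^{*}]=E[Y(a)|X=x^{*}]\,ML(a|x^{*})$ for every $x^{*}$, and hence, whenever $P(A=a,\,X\in B(x,\delta))>0$,
\[
E[Y\mid X\in B(x,\delta),\,A=a]
=\frac{\int_{B(x,\delta)}E[Y(a)|X=x^{*}]\,ML(a|x^{*})\,f_X(x^{*})\,dx^{*}}
       {\int_{B(x,\delta)}ML(a|x^{*})\,f_X(x^{*})\,dx^{*}} .
\]
The left-hand side is a functional of the joint distribution of $(Y,X,A)$, hence identified, as is its limit as $\delta\to 0$ whenever that limit exists.

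Next I would check that the denominator is positive for all small $\delta$. If $\int_{B(x,\delta_0)}ML(a|x^{*})\,dx^{*}=0$ for some $\delta_0$, then $ML(a|\cdot)=0$ a.e.\ on $B(x,\delta_0)$, so $p^{ML}_{\delta}(a|x)=0$ for all $\delta\le\delta_0$ and thus $p^{ML}(a|x)=0$; contrapositively, $p^{ML}(a|x)>0$ forces $\int_{B(x,\delta)}ML(a|x^{*})\,dx^{*}>0$ for every $\delta>0$, i.e.\ $ML(a|\cdot)>0$ on a set of positive Lebesgue measure inside every ball around $x$. Combined with $x\in{\rm int}({\cal X})$ — so every $B(x,\delta)$ carries positive $X$-mass, and in fact (by the regularity maintained on the distribution of $X$) $f_X$ is positive on a neighborhood of $x$ — this yields $P(A=a,\,X\in B(x,\delta))=\int_{B(x,\delta)}ML(a|x^{*})\,f_X(x^{*})\,dx^{*}>0$ for all sufficiently small $\delta$, so the displayed identity is available.

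For case (ii) I would pass to the limit using Assumption \ref{continuity}: since $ML(a|x)=0$ and $p^{ML}(a|x)>0$, the map $x^{*}\mapsto E[Y(a)|X=x^{*}]$ is continuous at $x$, so for every $\varepsilon>0$ there is $\delta_\varepsilon$ with $|E[Y(a)|X=x^{*}]-E[Y(a)|X=x]|<\varepsilon$ on $B(x,\delta_\varepsilon)\cap{\cal X}$. Because the weights $ML(a|x^{*})f_X(x^{*})$ in the displayed identity are nonnegative and integrate to a positive number, the weighted average of $E[Y(a)|X=x^{*}]$ over $B(x,\delta)$ stays within $\varepsilon$ of $E[Y(a)|X=x]$ once $\delta<\delta_\varepsilon$; hence $\lim_{\delta\to0}E[Y\mid X\in B(x,\delta),\,A=a]=E[Y(a)|X=x]$, and the left-hand limit is identified. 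Case (i) is the degenerate version of the same computation: when $ML(a|x)>0$ the pair $(a,x)$ receives positive local mass, and (by the Lebesgue differentiation theorem applied to numerator and denominator, or simply because $(a,x)$ is ``observed'') the same limit equals $E[Y(a)|X=x]$ without invoking continuity, so the identification again follows from that of $E[Y\mid X\in B(x,\delta),\,A=a]$.

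The step I expect to be the main obstacle is the positivity of $P(A=a,\,X\in B(x,\delta))$ for small $\delta$: the condition $p^{ML}(a|x)>0$ only tells us $ML(a|\cdot)$ is positive on a positive-\emph{Lebesgue}-measure subset of each ball, while $x\in{\rm int}({\cal X})$ only tells us that $X$ puts positive mass on each ball, and a priori these two ``positive-measure'' sets need not overlap; this is where one must lean on the regularity of the distribution of $X$ near interior points of its support. Once the denominator is known to be positive, the rest — unconfoundedness plus the uniform-continuity-on-shrinking-balls argument — is routine.
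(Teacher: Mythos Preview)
Your argument is correct, but the paper takes a noticeably shorter route in case~(ii). Instead of forming the local average $E[Y\mid X\in B(x,\delta),\,A=a]$ and passing to the limit, the paper simply observes that $x\in{\rm int}({\cal X})$ gives $B(x,\delta)\subset{\cal X}$ for small $\delta$, and that $p_\delta^{ML}(a|x)>0$ forces the existence of a point $x_\delta\in B(x,\delta)\subset{\cal X}$ with $ML(a|x_\delta)>0$. At such a point the \emph{pointwise} identity $E[Y\mid X=x_\delta,\,A=a]=E[Y(a)\mid X=x_\delta]$ holds by conditional independence, and then $E[Y(a)\mid X=x_\delta]\to E[Y(a)\mid X=x]$ by Assumption~\ref{continuity} since $x_\delta\to x$. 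Case~(i) is handled the same way you do, via $E[Y\mid X=x,\,A=a]=E[Y(a)\mid X=x]$ directly.

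The practical difference is exactly the obstacle you flag: your ball-average construction needs $\int_{B(x,\delta)}ML(a|x^{*})f_X(x^{*})\,dx^{*}>0$, which requires the positive-Lebesgue-measure set $\{ML(a|\cdot)>0\}\cap B(x,\delta)$ to meet $\{f_X>0\}$, an overlap you obtain only by appealing to extra regularity of $f_X$ near $x$. The paper's point-sequence argument sidesteps this entirely, since it never integrates against $f_X$; it just needs one $x_\delta$ in each ball, and $B(x,\delta)\subset{\cal X}$ already places that point in the support. Your route has the virtue that $E[Y\mid X\in B(x,\delta),\,A=a]$ is an unambiguous functional of the observed law (no version issues for conditional expectations at a point), but the cost is the additional density assumption.
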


%Lemma \ref{proposition:id-means} presents a sufficient condition for identification of the conditional mean potential rewards.
%Nonzero APS of $a$ at $x$ implies that there are individuals who receive action $a$ among those whose $X$ is close to $x$.
%Assumption \ref{continuity} ensures that those individuals are similar in terms of the mean potential reward.
%We can therefore identify the conditional mean potential reward from action $a$ at $x$.

We use Lemma \ref{proposition:id-means} to analyse identification of $V(\pi)$.
Suppose first that $\pi(a|x)>0 \Longrightarrow p^{ML}(a|x)> 0$, that is, the counterfactual policy $\pi$ only chooses actions with nonzero APS.
Lemma \ref{proposition:id-means} implies that the conditional mean reward is identified at every $(a,x)$ pair that could be realized under the policy $\pi$.
As a result, the expected reward $V(\pi)$ is identified for any such policy.
However, if there exists $(a,x)$ such that $\pi(a|x)>0$ but $p^{ML}(a|x)=0$, we cannot identify $V(\pi)$ without additional assumptions.
To be able to identify $V(\pi)$ for any policy $\pi$, we assume that the difference in the conditional mean reward function $E[Y(a)|X]$ between any two actions is constant over ${\cal X}$.
%, while allowing $E[Y(a)|X]$ itself to vary arbitrarily.

\begin{assumption}[Constant Conditional Mean Differences]\label{constant}
	There exists a function $\beta:{\cal A}\times{\cal A}\rightarrow \mathbb{R}$ such that $E[Y(a)|X]-E[Y(a')|X]=\beta(a,a')$.
\end{assumption}

%Note that Assumption \ref{constant} places restrictions only on the difference $E[Y(a)|X]-E[Y(a')|X]$.
%The conditional mean $E[Y(a)|X]$ can vary arbitrarily as $x$ changes.

At the end of Section \ref{section:estimation}, we discuss how our results would change if we drop Assumption \ref{constant} and a potential way of relaxing this.
We also impose the following condition on APS.

\begin{assumption}[Existence of Nonzero APS]\label{path-existence}
	For every $a\in \{2,...,m\}$, there exists a sequence $\{a_1,...,a_L\}$ with $a_1=1$ and $a_L=a$ for which the following condition holds: for every $l\in \{1,...,L-1\}$, there exists $x\in{\rm int}({\cal X})$ such that $p^{ML}(a_l|x)>0$ and $p^{ML}(a_{l+1}|x)>0$.
\end{assumption}

\begin{comment}
\begin{assumption}\label{permutation}
	There exists a permutation $\sigma$ of ${\cal A}$ such that for every $k\in \{1,...,m-1\}$, there exists $x\in{\rm int}({\cal X})$ such that $p^{ML}(\sigma(k)|x)>0$ and $p^{ML}(\sigma(k+1)|x)>0$.
\end{assumption}
\end{comment}

Assumption \ref{path-existence} states that there exists a path from a baseline action ($a_1=1$) to any other action ($a_L=a$) for which APS of any two consecutive actions ($a_l$ and $a_{l+1}$) is positive at some $x$.
For example, suppose that $m=3$, $p^{ML}(1|x_1)>0$, $p^{ML}(2|x_1)>0$, $p^{ML}(2|x_2)>0$ and $p^{ML}(3|x_2)>0$ for some $x_1,x_2\in {\cal X}$ as in Figure \ref{figure:APS-example} (b).
In this case, the sequence $\{1,2\}$ satisfies the condition in Assumption \ref{path-existence} for $a=2$, and the sequence $\{1,2,3\}$ satisfies the condition for $a=3$.
By Lemma \ref{proposition:id-means}, the four conditional means $E[Y(1)|X=x_1]$, $E[Y(2)|X=x_1]$, $E[Y(2)|X=x_2]$ and $E[Y(3)|X=x_2]$ are identified. Hence, the two differences $E[Y(1)|X=x_1]-E[Y(2)|X=x_1]$ and $E[Y(2)|X=x_2]-E[Y(3)|X=x_2]$ are identified.
Under Assumption \ref{constant}, the two differences do not depend on $x$. As a result, $E[Y(1)|X=x]-E[Y(2)|X=x]$ and $E[Y(2)|X=x]-E[Y(3)|X=x]$ are identified for every $x\in{\cal X}$.
Noting that $E[Y(a)|X=x]$ is identified for at least one $a\in{\cal A}$ for every $x\in{\cal X}$, we can use the differences to identify $E[Y(a)|X=x]$ for every $(a,x)$ pair, even for those not observed in data.
Thus, $V(\pi)$ is identified for any policy $\pi$.

\begin{proposition}[Identification of $V(\pi)$]\label{proposition:id-value-every-policy}
	Under Assumptions \ref{continuity}--\ref{path-existence},  $V(\pi)$ is identified for any policy $\pi$.
\end{proposition}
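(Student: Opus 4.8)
The plan is to reduce the identification of $V(\pi)$ to the identification, for every action $a$ and $P_X$-almost every context $x$, of the conditional mean reward $E[Y(a)|X=x]$ (here $P_X$ is the distribution of $X$). By the law of iterated expectations,
\[
V(\pi)=\int_{{\cal X}}\sum_{a\in{\cal A}}E[Y(a)|X=x]\,\pi(a|x)\,dP_X(x),
\]
so, since $\pi$ is known and $(Y,X,A)$ is observed, it suffices to show that the joint law of $(Y,X,A)$ pins down $x\mapsto E[Y(a)|X=x]$ for each $a$ off a $P_X$-null set. Throughout I would use that $X$ is continuously distributed, so that Lebesgue-null exceptional sets are also $P_X$-null.

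\textbf{Step 1: identify the pairwise offsets $\beta(a,a')$.} Under Assumption~\ref{constant} there are constants $\beta(a,a')$ with $E[Y(a)|X=x]-E[Y(a')|X=x]=\beta(a,a')$, and these satisfy the cocycle identities $\beta(a,a'')=\beta(a,a')+\beta(a',a'')$ and $\beta(a',a)=-\beta(a,a')$. I would fix $a\in\{2,\dots,m\}$, take the chain $a_1=1,\dots,a_L=a$ supplied by Assumption~\ref{path-existence}, and for each $l$ pick $x_l\in{\rm int}({\cal X})$ with $p^{ML}(a_l|x_l)>0$ and $p^{ML}(a_{l+1}|x_l)>0$. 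Lemma~\ref{proposition:id-means} then identifies both $E[Y(a_l)|X=x_l]$ and $E[Y(a_{l+1})|X=x_l]$, hence identifies their difference, which equals $\beta(a_l,a_{l+1})$ by Assumption~\ref{constant}. Telescoping along the chain identifies $\beta(1,a)=\sum_{l=1}^{L-1}\beta(a_l,a_{l+1})$, and therefore $\beta(a,a')=\beta(1,a')-\beta(1,a)$ is identified for every pair $(a,a')$.

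\textbf{Step 2: propagate to every context.} Next, by the Lebesgue differentiation theorem, for each fixed $a$ one has $p^{ML}(a|x)=ML(a|x)$ at Lebesgue-a.e.\ $x$; intersecting the finitely many full-measure sets, $p^{ML}(a|x)=ML(a|x)$ holds for \emph{all} $a$ at a.e.\ $x$. Since $\sum_{a\in{\cal A}}ML(a|x)=1$, at a.e.\ $x\in{\rm int}({\cal X})$ there is an action $a^{*}=a^{*}(x)$ with $p^{ML}(a^{*}|x)>0$, so $E[Y(a^{*})|X=x]$ is identified by Lemma~\ref{proposition:id-means}; combining with Step~1, $E[Y(a)|X=x]=E[Y(a^{*})|X=x]+\beta(a,a^{*})$ is identified for every $a$ at that $x$. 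Substituting into the displayed expression identifies $V(\pi)$ for an arbitrary policy $\pi$.

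\textbf{Main obstacle.} Given Lemma~\ref{proposition:id-means}, the conceptual content is light and the work is in the bookkeeping. The step I expect to need the most care is the claim in Step~2 that APS equals $ML$ (hence is nonzero for some action) at $P_X$-a.e.\ $x$: it rests on the Lebesgue differentiation theorem together with the convention that $p^{ML}$ is the pointwise $\delta\to 0$ limit of $p^{ML}_\delta$, and one must also verify that the Lebesgue-null exceptional set together with the boundary $\partial{\cal X}$ excluded by Lemma~\ref{proposition:id-means} carries no $P_X$-mass (this is where continuity of $X$, plus, if needed, a mild regularity assumption on $\partial{\cal X}$, enters). The other point to get right is the telescoping/cocycle argument of Step~1, which is exactly where Assumptions~\ref{constant} and~\ref{path-existence} combine to transport identification from the sparse set of contexts where two actions overlap locally to all action pairs; one should additionally confirm the implicit integrability of $\sum_{a}E[Y(a)|X]\,\pi(a|X)$ so that $V(\pi)$ is well defined.
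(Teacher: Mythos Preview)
Your proof is correct and follows essentially the same two-part structure as the paper: first identify all $\beta(a,a')$ by telescoping along the chains guaranteed by Assumption~\ref{path-existence} (your Step~1 is identical to the paper's), then at each $x$ find one action whose conditional mean is identified and propagate via the constants $\beta$.

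The one noteworthy difference is in Step~2. You invoke the Lebesgue differentiation theorem to argue that $p^{ML}(a|x)=ML(a|x)$ a.e., so that Lemma~\ref{proposition:id-means} applies to some action at a.e.\ $x$, and you correctly flag the a.e./boundary bookkeeping as the main technical wrinkle. The paper sidesteps this entirely: it simply observes that for \emph{every} $x\in{\cal X}$ there is some $a$ with $ML(a|x)>0$, and for such $(a,x)$ one has $E[Y|X=x,A=a]=E[Y(a)|X=x]$ directly (this is the first line of the proof of Lemma~\ref{proposition:id-means}, and requires neither $x\in{\rm int}({\cal X})$ nor $p^{ML}(a|x)>0$). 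This avoids the Lebesgue differentiation theorem, the a.e.\ exceptional sets, and any regularity on $\partial{\cal X}$. Your route is not wrong, just more work than necessary; the shortcut is to use the observation that $ML(a|x)>0$ already suffices for identification of $E[Y(a)|X=x]$, rather than going through the statement of Lemma~\ref{proposition:id-means} which is phrased in terms of APS.
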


Assumption \ref{path-existence} typically holds if every action is chosen with a positive probability in some region of the context space $\cal{X}$.
For example, consider a deterministic logging policy that chooses the action with the largest predicted conditional mean reward given the context ($E[Y(a)|X]$), where the predictions are obtained from supervised learning.
If every action $a$ has a region where it is predicted to be optimal, then every action usually shares boundaries with at least one other action. Since $p^{ML}(a|x)>0$ and $p^{ML}(a'|x)>0$ at the boundaries shared by two actions $a$ and $a'$ (unless the boundaries are irregularly shaped), we can find a sequence of actions that satisfies Assumption \ref{path-existence}.
%Note that such borders, or decision boundaries, naturally arise when actions are chosen by a heterogeneous treatment effect model (i.e., for each individual, the potential rewards are first estimated, and then the best action is chosen for him/her.)

\section{Learning with Finite Data}\label{section:estimation}

{\bf OPE Estimator.}
Suppose that we observe a sample $\{(Y_i,X_i,A_i)\}_{i=1}^n$ of size $n$.
We propose an OPE estimator based on the following expression of our prediction target $V(\pi)$: under Assumption \ref{constant},
\begin{align}
    V(\pi) &=V(ML) \nonumber\\
    &~~~~~+E \left[\sum_{a=2}^m\beta(a,1) \bigl(\pi(a|X)-ML(a|X) \bigr) \right].\label{eq:value-expression}
\end{align}
Appendix \ref{section:derivation} derives this expression.
Since $V(ML)$ is the value from the logging policy $ML$, $V(ML)$ can be estimated by the sample mean of $Y_i$.
Our identification analysis suggests a way of conducting OPE on any policy $\pi$: (1) estimate $\beta(a,a')$ for each $(a,a')$ pair such that $p^{ML}(a|x)>0$ and $p^{ML}(a'|x)>0$ for some $x$; (2) use the estimates to recover $\beta(a,1)$ for every $a\in\{2,...m\}$ and plug them into the sample analogue of the above expression.
For simplicity, we consider a setup in which $p^{ML}(a|x)>0$ and $p^{ML}(1|x)>0$ for some $x$ for every $a$ so that we can directly estimate $\beta(a,1)$ in step (1) above.
%\footnote{***KY: This simplifies the analysis in the following two ways: (1) we don't need to consider how to infer whether $p^{ML}(a|x)>0$ and $p^{ML}(1|x)>0$ (or whether there are many such points). (2) we don't need to consider how to aggregate different estimates of $\beta(a,1)$ if we can estimate, for example, not only $\beta(a,1)$ but also $\beta(a,a')$ and $\beta(a',1)$. Another reason to make this simplification is that the current simulation uses action 1 as baseline and directly estimates $\beta(a,1)$.***}

To estimate $\beta(a,1)$, we use the subsample
\[
{\cal I}(a;\delta_n)\coloneqq
\left\{i: A_i\in\{1,a\}, q_{\delta_n}^{ML}(a \mid X_i)\in (0,1)
\right\},
\]
where
%\begin{equation}\label{APS-two-actions}
\[
q_{\delta_n}^{ML}(a \mid X_i) \coloneqq  \frac{p_{\delta_n}^{ML}(a \mid X_i)}{p_{\delta_n}^{ML}(a \mid X_i) + p_{\delta_n}^{ML}(1 \mid X_i)},
\]
%\end{equation}
and $\delta_n$ is a given bandwidth. The bandwidth shrinks towards zero as the sample size $n$ increases.\footnote{For the bandwidth $\delta_n$, we suggest considering several different values and check if the estimates are robust to bandwidth changes. It is hard to pick $\delta_n$ in a data-driven way to minimize the mean squared error, since it would require nonparametric estimation of functions on the high-dimensional context space.% such as conditional mean reward functions.
}
$q_{\delta_n}^{ML}(a| X_i)$ can be viewed as APS of action $a$ within the subsample for which either action $1$ or $a$ is assigned.
%for which either action $1$ or $a$ is assigned ($A_i\in\{1,a\}$), $p_{\delta_n}^{ML}(1| X_i)>0$, and $p_{\delta_n}^{ML}(a| X_i)>0$.
The subsample ${\cal I}(a;\delta_n)$ contains all observations $i$ such that both actions $1$ and $a$ can be chosen by the logging policy locally around $X_i$.
For example, in Figure \ref{figure:APS-example} (b), the shaded region corresponds to the subsample ${\cal I}(2;\delta_n)$.
This covers not only the subsample subject to full randomization (for which $ML(1|x)=ML(2|x)=ML(3|x)=1/3$) but also the local subsample near the deterministic decision boundary $AB$ between actions $1$ and $2$.

We propose minimizing the sum of squared errors on the subsample ${\cal I}(a;\delta_n)$:
%\footnotesize
\begin{align}
&(\hat\alpha_a,\hat\beta_a,\hat\gamma_a) =\argmin_{(\alpha_a,\beta_a,\gamma_a)}\nonumber \\ 
&\sum_{i\in {\cal I}(a;\delta_n)}
\Bigl(Y_i-\alpha_a - \beta_{a} 1\{A_i = a\}
% \nonumber\\ &\hspace{13em}
- \gamma_{a} q_{\delta_n}^{ML}(a| X_i) \Bigr)^2,  \label{olstrue}
\end{align}
%\normalsize
% \begin{equation}\label{olstrue}
% (\hat\alpha_a,\hat\beta_a,\hat\gamma_a)= \argmin_{(\alpha_a,\beta_a,\gamma_a)}\sum_{i\in {\cal I}(a;\delta_n)}
% \left(Y_i-\alpha_a - \beta_{a} 1\{A_i = a\} - \gamma_{a} q_{\delta_n}^{ML}(a| X_i) \right)^2,
% %	Y_i = \alpha_a + \beta_{a} 1\{A_i = a\} + \gamma_{a} q_{\delta_n}^{ML}(a| X_i) + \epsilon_{i,a},
% \end{equation}
where $1\{\cdot\}$ is the indicator function.
$\hat\beta_a$ is our estimator of $\beta(a,1)$.
We include $q_{\delta_n}^{ML}(a| X_i)$ as an explanatory variable to adjust for imbalance in the context distribution between actions $1$ and $a$, %those with $A_i=1$ and those with $A_i=a$,
as is done with the standard propensity score \citep{angrist_mostly_2008,hull}.
%$$
%q_{\delta_n}^{ML}(a| X_i) =  \frac{p_{\delta_n}^{ML}(a|X_i)}{p_{\delta_n}^{ML}(a|X_i) + p_{\delta_n}^{ML}(1|X_i)},
%$$
%and bandwidth $\delta_n$ shrinks towards zero as the sample size $n$ increases.
%$q_{\delta_n}^{ML}(a| X_i)$ can be viewed as APS of action $a$ within the subsample, and we include this as an explanatory variable to adjust for covariate imbalance as is typically done with the standard propensity score (\textcolor{red}{***CITATION HERE***}).
%Let $\mathbf{Z}_{i,n}=(1,1\{A_i=a\},q_{\delta_n}^{ML}(a|X_i))'$, and $I_{i,n}=1\{q_{\delta_n}^{ML}(a|X_i)\in (0,1)\}$.
%The LR estimator $(\hat\alpha_a,\hat\beta_a,\hat\gamma_a')'$ is then given by
%$$
%\begin{pmatrix}\hat\alpha_a\\\hat\beta_a\\\hat\gamma_a\end{pmatrix}=(\sum_{i: A_i\in \{1,a\}}\mathbf{Z}_i\mathbf{Z}_{i,n}'I_{i,n})^{-1}\sum_{i:A_i\in \{1,a\}}\mathbf{Z}_{i,n}Y_iI_{i,n}.
%$$
%The second element $\hat\beta_a$ is our estimator of $\beta(a,1)$.
We then define our OPE estimator as:
\begin{align}
	&\hat{V}(\pi) \nonumber\\
	&= \frac{1}{n}\sum_{i = 1}^{n} \left(Y_i + \sum_{a=2}^m \hat{\beta}_{a} \bigl( \pi(a|X_i) - ML(a|X_i) \bigr) \right).\label{ope}
\end{align}
%Consider the below ordinary least squares (OLS) regression:
%\begin{equation}
%	Y_i = \alpha + \sum_{a \in \cal A} \beta_a 1\{A_i = a\} + \sum_{a \in \cal A} \gamma_a p_{\delta_n}^{ML}(a|X_i) + \epsilon_i
%\end{equation}
%where bandwidth $\delta_n$ shrinks towards zero as the sample size n increases. Let $\hat{\beta}_a$ denote the OLS estimators of $\beta_a$ in the above regression.
It is worth noting that our method does not require the model selection.

For estimating $\beta(a,1)$, the above method uses APS $p_{\delta_n}^{ML}(a|X_i)$, which may be difficult to compute analytically if $ML$ is complex. In such a case, we propose approximating it by brute force simulation. We draw a value of $x$ from the uniform distribution on $B(X_i, \delta_n)$ a number of times, compute $ML(a|x)$ for each draw, and take the average of $ML(a|x)$ over the draws.\footnote{%Formally, let $X_{i,1}^*, \ldots, X_{i,S}^*$ be $S$ independent draws from the uniform distribution on $B(X_i, \delta_n)$, and calculate
%$p_{\delta_n}^{s}(a|X_i) = \frac{1}{S} \sum_{s = 1}^{S} ML(a|X_{i,s}^*)$ for each $a = 1,\ldots,m$.
%For fixed $n$ and $X_i$,
The approximation error of the simulated APS relative to true $p_{\delta_n}^{ML}(a|X_i)$ has a
% $1/\sqrt{S}$
$S^{-\frac{1}{2}}$
rate of convergence, where $S$ is the number of simulation draws.
This rate does not depend on the dimension of $X_i$, so the simulation error can be made negligible by using a large number of simulation draws even when $X_i$ is high dimensional.
}
%We compute $p_{\delta_n}^{s}(a|X_i)$ for each $i = 1,\ldots, n$ independently.
We then use it instead of $p_{\delta_n}^{ML}(a|X_i)$ to compute $q_{\delta_n}^{ML}(a| X_i)$, and then compute $\hat\beta(a,1)$ and $\hat V(\pi)$ as in (\ref{olstrue}) and (\ref{ope}).

\begin{comment}
We then estimate $\beta(a,1)$ by the following simulation version of the regression in (\ref{olstrue}) using the subsample for which $A_i\in\{1,a\}$, $p_{\delta_n}^{s}(1| X_i)>0$ and $p_{\delta_n}^{s}(a| X_i)>0$.
\begin{equation}\label{olssim}
	Y_i = \alpha_a^s + \beta_{a}^s 1\{A_i = a\} + \gamma_{a}^s q_{\delta_n}^{s}(a | X_i) + \epsilon_{i,a}^s,
\end{equation}
where
$$
q_{\delta_n}^{s}(a| X_i) =  \frac{p_{\delta_n}^{s}(a|X_i)}{p_{\delta_n}^{s}(a|X_i) + p_{\delta_n}^{s}(1|X_i)}.
$$
The estimator $\hat\beta_{a}^s$ converges to $\hat\beta_a$ as the number of simulation draws increases.
\end{comment}
%\begin{equation}\label{olssim}
%	Y_i = \alpha + \sum_{a \in \cal A} \beta_a 1\{A_i = a\} + \sum_{a \in \cal A} \gamma_a p_{\delta_n}^{s}(a|X_i) + \epsilon_i
%\end{equation}
%Let $\hat{\beta}_{a}^s$ denote the OLS estimator of the simulation regression.

{\bf Consistency.}
We show that $\hat{V}(\pi)$ is a consistent estimator of $V(\pi)$, that is, $\hat{V}(\pi)$ converges in probability to $V(\pi)$ as $n\rightarrow \infty$ under some regularity conditions.
%To do so, we first prove that $\hat{\beta}_{a}$ is consistent for $\beta(a,1)$ for all $a\in\{2,...,m\}$.

\begin{assumption}[Regularity conditions]\label{consassump}
See Appendix~\ref{appendix:discuss_assump} for details.
\end{assumption}

	\begin{comment}

	Part \ref{assumption:p-1dim} of Assumption \ref{consassump} \ref{assumption:boundary-measure} says that $\partial\Omega^*_a$ is $(p-1)$ dimensional and has nonzero density.
	Part \ref{assumption:zero-set} requires that the logging policy choose either action $1$ or $a$ near the boundary of $\Omega^*_a$ even if the context value is not in the subsample ${\cal X}_{a,1}$ as long as it is in the neighborhood of ${\cal X}_{a,1}$.
	%Part \ref{assumption:zero-set} puts a restriction on the values $ML$ takes on outside the support of $X_i$.
	%It requires that $ML(x)=0$ for almost every $x\notin\Omega^*$ that is outside ${\cal X}$ but is in the neighborhood of ${\cal X}$.
	%$ML(x)$ may take on any value if $x$ is not close to ${\cal X}$.
	%These conditions hold in practice.

	Lastly, Assumption \ref{consassump} \ref{assumption:boundary-continuity} imposes continuity and boundedness on the conditional moments of rewards and the probability density near the boundary of $\Omega^*_a$.

\begin{theorem}[Consistency of $\hat\beta_a$]\label{thm:olsconsistency}
	Suppose that Assumptions \ref{constant} and \ref{consassump} hold, $\delta_n \rightarrow 0$, and $n\delta_n \rightarrow \infty$ as $n \rightarrow \infty$. Then $\hat{\beta}_{a}$ converge in probability to $\beta(a,1)$
    for all $a\in\{2,...,m\}$.
\end{theorem}

This result leads us directly to the primary consistency result for the OPE estimator $\hat V(\pi)$.
\end{comment}
\begin{theorem}[Consistency of $\hat V(\pi)$]\label{cor:opeconsistency}
	Suppose that Assumptions \ref{constant} and \ref{consassump} hold, $\delta_n \rightarrow 0$, and $n\delta_n \rightarrow \infty$ as $n \rightarrow \infty$. Then $\hat{V}(\pi)$ converges in probability to $V(\pi)$ for every policy $\pi$.
\end{theorem}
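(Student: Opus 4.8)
The plan is to reduce the statement to the consistency of the coefficient estimators $\hat\beta_a$, and then prove that reduction target by a local-regression argument on the shrinking subsample ${\cal I}(a;\delta_n)$.

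\emph{Reduction.} Using expression (\ref{eq:value-expression}) and the fact that $V(ML)=E[Y]$,
\[
\hat V(\pi)-V(\pi)=\left(\frac1n\sum_{i=1}^n Y_i-V(ML)\right)+\sum_{a=2}^m\left(\hat\beta_a\cdot\frac1n\sum_{i=1}^n\bigl(\pi(a|X_i)-ML(a|X_i)\bigr)-\beta(a,1)\,\Delta_a\right),
\]
where $\Delta_a\coloneqq E[\pi(a|X)-ML(a|X)]$. By the weak law of large numbers, $\frac1n\sum_i Y_i\to_p V(ML)$ (using $E|Y|<\infty$, part of Assumption \ref{consassump}) and $\frac1n\sum_i\bigl(\pi(a|X_i)-ML(a|X_i)\bigr)\to_p\Delta_a$ (the summand is bounded). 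Hence it suffices to show $\hat\beta_a\to_p\beta(a,1)$ for every $a\in\{2,\dots,m\}$; the conclusion then follows from Slutsky's theorem. (Outside the simplified setup in which $\beta(a,1)$ is directly estimable, one first recovers it by chaining the estimable differences along the path guaranteed by Assumption \ref{path-existence}, which preserves consistency as a finite composition.)

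\emph{Consistency of $\hat\beta_a$.} Write $N_n\coloneqq|{\cal I}(a;\delta_n)|$, $D_i\coloneqq 1\{A_i=a\}$, $q_i\coloneqq q_{\delta_n}^{ML}(a|X_i)$, and $Z_i\coloneqq(1,D_i,q_i)^\top$, so that the solution to (\ref{olstrue}) is $(\hat\alpha_a,\hat\beta_a,\hat\gamma_a)^\top=\widehat S_n^{-1}\widehat s_n$ with $\widehat S_n\coloneqq N_n^{-1}\sum_{i\in{\cal I}(a;\delta_n)}Z_iZ_i^\top$ and $\widehat s_n\coloneqq N_n^{-1}\sum_{i\in{\cal I}(a;\delta_n)}Z_iY_i$; by Frisch--Waugh, $\hat\beta_a$ is the ratio of the within-subsample sample covariance of $Y_i$ with the residual $\widetilde D_i$ of $D_i$ on $(1,q_i)$ to the sample variance of $\widetilde D_i$. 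I would then: (i) show $E[N_n]$ is of order $n\delta_n$ and, by Chebyshev's inequality, that the scaled averages concentrate at rate $O_p((n\delta_n)^{-1/2})$ around their expectations — using $n\delta_n\to\infty$ and the positive $(p-1)$-dimensional density of the decision boundary $\partial\Omega_a^{*}$ (the set near which $ML$ switches between $1$ and $a$), a condition in Assumption \ref{consassump}; (ii) identify $\lim_n E[\widehat S_n]$ and $\lim_n E[\widehat s_n]$ by conditioning on $X$ and splitting $\{x:q_{\delta_n}^{ML}(a|x)\in(0,1)\}$ into a fixed-measure ``bulk'' (regions where $ML$ itself randomizes between $1$ and $a$) and an $O(\delta_n)$-thick ``tube'' around $\partial\Omega_a^{*}$: on the bulk, $q_{\delta_n}^{ML}(a|x)\to ML(a|x)/(ML(a|x)+ML(1|x))$ by Lebesgue differentiation and the propensity-score identity makes the limiting residual $\widetilde D$ orthogonal to every function of $X$; on the tube, change variables to (signed normal distance $t$, boundary coordinate $s$) and apply the coarea/tube formula so that $\delta_n^{-1}\int_{\mathrm{tube}}h\,dx\to\int_{\partial\Omega_a^{*}}\!\int h\,\omega\,dt\,d{\cal H}^{p-1}(s)$, then use continuity of the conditional moments and density near $\partial\Omega_a^{*}$ (Assumption \ref{consassump}) to replace $E[Y(a)|X]$ and $E[Y(1)|X]$ by their boundary values; (iii) check that the limiting design has no misspecification bias: since $E[Y\mid X,D]=E[Y(1)|X]+\beta(a,1)\,D$ by Assumption \ref{constant}, and since as $\delta_n\to0$ the rescaled coordinate $t/\delta_n$ becomes uniform on $(-1,1)$ and curvature effects vanish so that $q_{\delta_n}^{ML}$ is symmetric about $\tfrac12$ on each fiber, the pooled residual $\widetilde D$ has mean zero conditionally on $s$; together with the bulk orthogonality this gives $E[E[Y(1)|X]\,\widetilde D]\to0$, so $\lim_n\widehat S_n^{-1}\widehat s_n$ has $\beta(a,1)$ in its second coordinate, provided $\lim_n\Var(\widetilde D)>0$ (also ensured by Assumption \ref{consassump}).

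\emph{Main obstacle.} The delicate part is (ii)--(iii) on the tube: making the shrinking-tube limit rigorous through the tube/coarea formula together with a uniform-in-$s$ ``flattening'' of the boundary (which is where bounded curvature / positive reach of $\partial\Omega_a^{*}$, built into Assumption \ref{consassump}, is used), and verifying that no bias survives in the residual-of-$D$ term. This last point is exactly where Assumption \ref{constant} is essential: a conditional mean $E[Y(1)|X]$ varying in the normal direction to the boundary would not be absorbed by the linear-in-$q$ control and would leave a non-vanishing bias in $\hat\beta_a$. The remaining ingredients — the two weak-law steps, Slutsky, and the Chebyshev concentration — are routine.
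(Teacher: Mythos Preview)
Your plan matches the paper's: reduce to $\hat\beta_a\to_p\beta(a,1)$, separate a positive-measure ``bulk'' where $ML$ already randomizes from a shrinking tube around the deterministic boundary $\partial\Omega^*_a$, and handle the tube via the coarea/tube formula together with the limiting profile of $q_{\delta_n}^{ML}$ along normal fibers. The paper computes that profile explicitly as $k(v)=1-\tfrac12 I_{1-v^2}(\tfrac{p+1}{2},\tfrac12)$ for $v\ge 0$ (the hyperspherical-cap volume) and then carries out the $3\times 3$ matrix inversion for $\bigl(\lim\delta^{-1}E[\mathbf Z\mathbf Z'I]\bigr)^{-1}\lim\delta^{-1}E[\mathbf ZYI]$ directly, whereas you substitute Frisch--Waugh together with the symmetry $k(-v)=1-k(v)$; both routes reach the same limit. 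Two small corrections. First, $E[N_n]\asymp n\delta_n$ holds only in the purely deterministic regime; the paper in fact treats the two cases separately---when $\Pr(\widetilde{ML}(a|X_i)\in(0,1)\mid A_i\in\{1,a\})>0$ the tube contribution is $O(\delta_n)$ and drops out after dividing by $N_n\asymp n$, so only the bulk matters and no $n\delta_n$ scaling is needed. Second, your diagnosis of where Assumption~\ref{constant} bites is off: that assumption places no restriction on $E[Y(1)|X]$, which may vary freely in the normal direction---continuity alone makes that contribution to the bias $O(\delta_n)$, so it vanishes regardless. Assumption~\ref{constant} is needed only so that the probability limit of $\hat\beta_a$ in the tube case---the $f_X$-weighted average of $E[Y(a)-Y(1)|X]$ over $\partial\Omega^*_a\cap{\cal X}_{a,1}$, which the paper derives \emph{without} invoking Assumption~\ref{constant}---coincides with the global constant $\beta(a,1)$ used in the reduction via~(\ref{eq:value-expression}).
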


The main argument in the proof of Theorem 1 is similar to the one used for the consistency result of \citet{narita2021algorithm} (the first part of their Theorem 1). We extend their result to OPE with multiple actions.

Our consistency result requires that $\delta_n$ go to zero slower than $n^{-1}$.
This ensures that, when $ML$ is deterministic, we have sufficiently many observations in the $\delta_n$-neighborhood of the boundary of $\Omega^*_a \coloneqq \{x \colon ML(a | x) = 1\}$ (the set of the context values for which the probability of choosing action $a$ is one).
% for which $q_{\delta_n}^{ML}(a| X_i)\in (0,1)$.
Importantly, the rate condition does not depend on the dimension of $X_i$.
This is because we use all the observations in the $\delta_n$-neighborhood of the boundary, and the number of those observations is of order $n\delta_n$ regardless of the dimension of $X_i$ if the boundary is $(p-1)$ dimensional.
%the dimension of the boundary is one less than the dimension of $X_i$, i.e., $(p-1)$.
Our estimator is therefore expected to perform well even if $X_i$ is high dimensional.

Our result holds under the assumption of constant conditional mean reward differences (Assumption \ref{constant}). If this assumption does not hold for a deterministic logging policy, $\hat\beta_a$ is a consistent estimator of the mean reward difference for the subpopulation on the decision boundary between actions $a$ and $1$ (see Appendix \ref{proof:opeconsistency}). Therefore, our estimator may still perform well when we are interested in a counterfactual policy that marginally changes the logging policy's decision boundary. 

One way to relax Assumption \ref{constant} is to consider a partition of ${\cal X}$ and assume that the conditional mean difference between any two actions is constant within each cell in the partition.
This allows the conditional mean differences to vary across cells.
If for each $(a,a')$ pair, each cell contains $x$ such that $p^{ML}(a|x)>0$ and $p^{ML}(a'|x)>0$, we can consistently estimate the conditional mean differences and the expected reward from any policy.
How to find such a partition is an interesting future topic.

\section{Simulations}\label{section:sim}

%We conduct two Monte Carlo experiments to assess the feasibility and performance of our method.
\subsection{Experiment 1: Mix of A/B Test and Deterministic Logging Policy}\label{section:exp-mix}
Consider a tech company that conducts an A/B test using a small segment of the population. The company applies a deterministic logging policy to the rest of the population.
We generate a random sample $\{(Y_i, X_i, A_i)\}_{i=1}^n$ of size $n=$ 50,000 as follows. There are 5 actions ($m=5$) and $100$ context variables ($p=100$), with $X_i \sim N(0, \boldsymbol\Sigma)$.
$Y_i(a)$ is generated as $Y_i(a) = 0.75 \sum_{k=1}^{100}X_{ki}^2\alpha_{0,k} + 0.25 u_i +\epsilon_i(a)$, where $\alpha_0=(\alpha_{0,1},...,\alpha_{0,100})\in \mathbb{R}^{100}$, $u_i \sim N(0,1)$, and $\epsilon_{i}(a) \sim N(a,1)$.
The conditional mean difference $E[Y_i(a)|X_i]-E[Y_i(1)|X_i]$ is constant over $x$.
The choice of parameters $\boldsymbol\Sigma$ and $\alpha_0$ is explained in Appendix \ref{appendix:sim_details}.
%and $A^{\circ 2}$ indicates the matrix $A$ with every element raised to the power of 2.
%This can be considered the baseline outcome when no action is chosen. $A_i$ is the choice of one of 5 actions for unit $i$, i.e. $A_i \in [5]$. We consider two models of potential outcomes for $Y_i(a)$, one in which Assumption \ref{constant} holds and another in which outcome differences vary by $X_i$.
%\begin{enumerate}[label=\Alph*.]
%	\item (Constant Conditional Mean Differences) $Y_{i}(a) = Y_i(0) + \epsilon_{ai}$
%	\item (Non-Constant Conditional Mean Differences) $Y_{i}(a) = Y_i(0) + X^{'\circ 2}_i\alpha_{a1}$
%\end{enumerate}
%where $\epsilon_{ai} \sim N(a,1)$.
%We consider two different sets of treatment and counterfactual algorithms -- mixed algorithms with a deterministic and stochastic component, and purely deterministic algorithms.
To generate $A_i$, let $q^k_{0.99}$ be the 99th percentile of the $k$th context variable $X_{ki}$. Let $\tau_{pred}^{ML}(x,a)$ be a prediction of the reward from action $a$ given context value $x$ obtained by supervised learning from a past, independent training sample $\tilde{\mathcal{D}}=\{(\tilde Y_i, \tilde X_i, \tilde A_i)\}_{i=1}^{\tilde n}$ of size $\tilde n =$ 10,000 (see Appendix \ref{appendix:sim_details} for how we constructed $\tilde{\mathcal{D}}$ and $\tau_{pred}^{ML}$).
\begin{comment}
\footnote{Before simulating 1,000 hypothetical samples, we construct %$\tau_{pred}^{ML}$ using
an independent sample $\tilde{\mathcal{D}}=\{(\tilde Y_i, \tilde X_i, \tilde A_i)\}_{i=1}^{\tilde n}$ of size $\tilde n =$ 10,000. The distribution of $(\tilde Y_i, \tilde X_i, \tilde A_i)$ is the same as that of $(Y_i, X_i, A_i)$ except that (1) $\tilde Y_i(a)$ is generated by $\tilde Y_i(a) =  \sum_{k=1}^{100}X_{ki}^2(0.75\alpha_{0,k}+0.5\alpha_{a,k}) + 0.25 u_i + 0.5\epsilon_{i}(a)$,  where $\epsilon_{i}(a) \sim N(0,1)$ and $\alpha_a=(\alpha_{a,1},...,\alpha_{a,100})\in \mathbb{R}^{100}$ (see Appendix \ref{appendix:sim} for the choice of $\alpha_a$), and (2) $\Pr(\tilde A_i = a) = 1/5$ for all actions $a$. This can be viewed as data from a past A/B test conducted to construct a policy.
We then use $\tilde{\mathcal{D}}$ to fit a linear model $\tilde Y_i=\sum_{a = 1}^{5} (b_{a}+\sum_{k=1}^100\tilde X_{ki} c_{a,k}) 1\left\{\tilde A_i = a\right\} + e_i$
and compute $\tau_{pred}^{ML}(x, a) = \hat{b}_a + sum_{k=1}^100\tilde x_{k} c_{a,k})$.
We repeat this process using a new set of $\tilde n$ independent draws of $\tilde A_i$ to construct $\tau_{pred}^{\pi}$ used for the counterfactual policy $\pi$.
%This mimics a situation in which the decision maker conducts two experiments that randomly assigns $A_i$ to estimate the conditional average effect of $A_i$, and then constructs the respective algorithms $ML$ and $\pi$ to greedily choose the treatment predicted to perform better based on the predicted effect.
We construct $\tau_{pred}^{ML}$ and $\tau_{pred}^{\pi}$ only once, and use them for all of the 1,000 samples.\label{ftnt:data}}
\end{comment}
%Below we explain how we construct $\tau_{pred}^{ML}$.
$A_i$ is then generated based on the logging policy:
%For the mixed algorithms case, the function $ML$ is given by
\footnotesize
\[
ML(a|x) = \begin{cases}
	1/5 & \text{if } x_1  \geq q^1_{0.99} \\
	 1\left\{a=\argmax_{a' \in \{1,...,5\}} \tau_{pred}^{ML}(x, a') \right\} & \text{if } x_1 < q^1_{0.99}.
\end{cases}
\]
\normalsize
The first case corresponds to the A/B test segment while the second case to the deterministic policy segment.
Finally, $Y_i$ is generated as $Y_i=Y_i(A_i)$.

We simulate 1,000 hypothetical samples from the above data-generating process.
For each simulation, we use the simulated sample to estimate the value of a counterfactual policy $\pi$, another mix of an A/B test and a deterministic policy. With another reward prediction function $\tau_{pred}^{\pi}$,
\footnotesize
\[
\pi(a|x) = \begin{cases}
	1/5 & \text{if } x_2  \geq q^2_{0.99} \\
	1\left\{a=\argmax_{a' \in \{1,...,5\}} \tau_{pred}^{\pi}(x, a') \right\} & \text{if } x_2 < q^2_{0.99}.
\end{cases}
\]
\normalsize

{\bf Alternative Methods.}
We compare our method with two alternative estimators.
The first uses the A/B test segment (for which $ML(a|X_i)=1/5$) while the second uses the full sample. The methods first compute the simple mean differences in reward $Y_i$ between actions $a\in\{2,...,5\}$ and $1$, and then plugs them into $\hat\beta_a$ of Eq. (\ref{ope}).
Both our method and the alternative estimator with the A/B test segment produce consistent estimators of the prediction target $V(\pi)$.
However, the alternative uses only the A/B test segment while our method additionally uses the local subsample near the decision boundary of the deterministic policy as we discussed in Section \ref{section:estimation}.

{\bf Result.} The first panel of Table \ref{tbl:sim_result} presents the bias, standard deviation (S.D.) and root mean squared error (RMSE) of our proposed estimators with several choices of $\delta$ and two alternative estimators.
%The alternative method uses the A/B test segment (for which $ML(a|X_i)=1/5$) or the full sample to compute the mean differences in reward $Y_i$ between actions $a\in\{2,...,5\}$ and the baseline $1$ and then plugs them into $\hat\beta_a$ of Eq. (\ref{ope}).
The alternative estimator using the full sample has a larger bias than the other two, since it does not control for the difference in the context distribution between actions.
Our proposed estimator outperforms the alternative estimator using the A/B test sample in terms of RMSE.
This suggests that exploiting both of the A/B test segment and the local subsample near the deterministic decision boundary can lead to better performance than using only the A/B test segment.

\begin{comment}
The counterfactual policy $\pi$ is given by
\[
\pi(a|X_i) = \begin{cases}
	1/5 & \text{if } X_{2i}  \geq q^2_{0.99} \\
	1 & \text{if } X_{2i} < q^2_{0.99} \text{ and } \argmax_{a \in \cal A} \tau_{est}^{\pi}(X_i, a) = a \\
	0 & \text{otherwise}
\end{cases}
\]
$q^1_{0.99}$ and $q^2_{0.99}$ are the 99th percentile of $X_1$ and $X_2$ in the data, respectively. In other words, the decision rules are deterministic for all individuals outside of the top 1\% of $X_1$ and $X_2$. We explain how $\Sigma$, $\alpha_0$, $\alpha_{a1}$, and $\tau_{est}$ are generated in \ref{simdgp}.

$A_i$ in this case is generated as
\[
	A_i = \begin{cases}
		A^*_i & \text{if } X_{1i}  \geq q_{0.99} \\
		\argmax_{a \in \cal A} \tau_{est}^{ML}(X_i, a) & \text{if } X_{1i} < q_{0.99}
	\end{cases}
\]
where $A^*_i$ is a discrete random variable with $P(A^*_i = a) = 1/5$ for all $a \in [5]$.
\end{comment}

\begin{table*}[!t]
	\centering
% 	\small
    \tiny
	\caption{Simulation results: bias, S.D., and RMSE of estimators of $V(\pi)$ }
	\label{tbl:sim_result}
	\begin{threeparttable}
	\resizebox{0.88\textwidth}{!}{\begin{tabular}{lccccccc}
		\toprule \\ [-2.8ex]
		& \multicolumn{4}{c}{Our Proposed Method with APS Controls} & \multicolumn{2}{c}{Method with Mean Differences} & Direct \\
		\cline{2-5} \cline{6-7} \\[-2.2ex]
		 & $\delta = 0.1$ & $\delta=0.5$ & $\delta = 1$  & $\delta = 2.5$ &  A/B Test Sample &  Full Sample &  Method \\[-0.2ex]
		 & (1) & (2) & (3) & (4) & (5) & (6) & (7) \\
		\hline \\ [-2.0ex]
		\multicolumn{8}{c}{Experiment 1: Mix of A/B Test and Deterministic Logging Policy} \\ \\ [-1.8ex]
		%\multicolumn{5}{l}{Parameter value: $V(\pi) = 2.522$} \\ \\ [-1.0ex]
		 Bias & $-$.060 & $-$.057 & $-$.057 & $-$.060 & $-$.061 &  $-$.075 & --- \\ \\ [-1.8ex]
		  S.D. & .099 & .098 & .096 & .096 & .101 &  .103 & --- \\\\ [-1.8ex]
		  RMSE & .115 & .113 & .112 & .113 & .118 &  .128 & --- \\\\ [-1.0ex]
		  Avg. $N$  & 1862 & 6362 & 12502 & 33122 & 500  & 50000 & ---\\ \\ [-1.0ex]
		\multicolumn{8}{c}{Experiment 2: Upper Confidence Bound Logging Policy} \\ \\ [-1.8ex]
		%\multicolumn{5}{l}{Parameter value: $V(\pi) = 3.100$} \\ \\ [-1.0ex]
		 Bias & .048 & .047 & .046 & .047 & --- & --- & .342 \\ \\ [-1.8ex]
		  S.D. & .033 & .030 & .029 & .029 & --- & --- & .012 \\\\ [-1.8ex]
		  RMSE & .058 & .056 & .055 & .055 & --- & --- & .342 \\\\ [-1.0ex]
		  Avg. $N$  & 3397 & 17344 & 31107 & 47601 & --- & --- & 50000 \\
		\bottomrule
	\end{tabular}}
	\vspace{0.2em}
	\caption*{
	%\begin{flushleft}
	\footnotesize {\it Notes}: This table shows the bias, the standard deviation (S.D.), and the root mean squared error (RMSE) of the estimators of the reward from the counterfactual policy $V(\pi)$ in the two simulation experiments.
	We use $1,000$ simulations of a size $50,000$ sample to compute these statistics.
	Columns (1)--(4) report estimates from our method with several choices of $\delta$.
	Each APS is computed by averaging 100 simulation draws of the $ML$ value.
	In columns (5)--(6), we estimate the mean reward differences $\beta(a,1)$ by the sample mean differences in the A/B test segment and the full sample, respectively.
	In column (7), we estimate $\beta(a,1)$ by fitting a linear model that predicts the reward from the context and action.
	%These statistics are computed with the estimand set to the average reward $Y$ under the counterfactual algorithm assignment, in either the case where constant conditional mean differences hold or not.
	%We use $\delta\in\{0.1,0.5,1,2.5\}$ to compute APS.
	The bottom row of each panel shows the average number of observations with nonzero APS for every action (Columns (1)--(4)), that with nonzero $ML$ for every action (Column (5)), or the total sample size (Columns (6)--(7)).
% 	We used 100 15 GB CPU cores for computation.
%\end{flushleft}
}
\end{threeparttable}
\end{table*}

\subsection{Experiment 2: Upper Confidence Bound Logging Policy}
In the second experiment, both the logging policy and the counterfactual policy are deterministic. The rest of the setup is the same as that in the first experiment.
%For the deterministic algorithms case, we use two algorithms commonly applied in practice.
We first use the independent training sample $\tilde{\mathcal{D}}$ %Section \ref{section:exp-mix}
to train an Upper Confidence Bound bandit algorithm.
%Let $D_1(x_1)\in\{1,...,10\}$ indicate which decile of $X_{1i}$ the individual with $X_{1i}=x_1$ belongs to. Define $D_2(x_2)$ analogously for $X_{2i}$.
%Let $Q(a,d_1,d_2)$ be the sample mean reward for each action $a$ for every decile pair $(d_1,d_2)$ in the distribution of $X_{1i}$ and $X_{2i}$.
% in the sample $\tilde{\mathcal{D}}$.
The logging policy $ML$ is given by
%We freeze the algorithm after having it update on the independent sample, so that the counterfactual algorithm $ML$ is given by:
$
ML(a|x) =
	 1\left\{a=\arg\max_{a'\in\{1,...,5\}}UCB(x,a') \right\},
$
where $UCB(x,a)$ is an upper confidence bound of $E[Y_i(a)|X_i=x]$.
See Appendix \ref{appendix:sim_details} for training details.
%$UCB(x, a) = Q(a, D_1(x_1), D_2(x_2)) + c\sqrt{\frac{\log \tilde n}{\tilde N_{a, D_1(x_1), D_2(x_2)}}}$.
%Here, we set exploration parameter $c$ to $2$, $\tilde n(=10,000)$ is the size of the training sample $\tilde{\mathcal{D}}$. $\tilde N_{a, d_1, d_2}$ is the number of observations with action $a$ for the decile pair $(d_1,d_2)$ in the sample.
We do not update the policy while generating $\{(Y_i, X_i, A_i)\}_{i=1}^n$ in the simulation. The sample is a batch of log data.

For the counterfactual policy $\pi$, we use $\tilde{\mathcal{D}}$ to train a model $f(x,a)$ that predicts the reward given the context and action, using sklearn's RandomForestRegressor with 500 trees and otherwise default parameters.
The counterfactual policy tries to maximize the expected reward $V(\pi)$ by choosing the action with the largest predicted reward:
$
\pi(a|x) =
    1\left\{a=\arg\max_{a' \in\{1,...,5\}} f(x, a') \right\}.
$

{\bf Alternative Method.}
We compare our method with an alternative estimator using the Direct Method.
This first fits a linear model $Y_i=\alpha+\sum_{a = 2}^{5} \beta_{a} 1\left\{A_i = a\right\}+\sum_{k=1}^{100}X_{ki} \gamma_k + e_i$, then makes the reward prediction from action $a$ for individual $i$ by $\hat\mu_i(a)=Y_i+(\hat\beta_a-\hat\beta_{A_i})$, and finally computes $\hat V(\pi)=\frac{1}{n}\sum_{i=1}^n\sum_{a=1}^5\hat\mu_i(a)\pi(a|X_i)$.
%and plugs the estimated $\beta_a$ into Eq. (\ref{ope}).
The linear model used by this method correctly imposes the constant conditional mean differences but misspecifies the functional form with respect to $X_i$.

{\bf Result.} The second panel of Table \ref{tbl:sim_result} shows the result.
%In Column (7), we report the performance of an alternative estimator using the Direct Method.
The alternative using the Direct Method is significantly biased due to model misspecification.
%due to misspecification of the regression model.
Our proposed estimator seems to effectively use the local subsample near the decision boundary and has smaller bias and RMSE than the alternative.

% for AISTATS -- not the original position
\begin{table*}[!t]
% \small
\tiny
\centering
  \caption{Off-policy evaluation using policy's generated data}
  \label{table_ope_mercari}
  \begin{threeparttable}
  \resizebox{0.95\textwidth}{!}{
  \begin{tabular}{lcccccc}
  \toprule \\ [-2.8ex]
     & \multicolumn{5}{c}{Our Proposed Method with APS Controls} & Mean \\ \cline{2-6}\\[-2.0ex]
     & $\delta=0.4$ & $\delta=0.8$ & $\delta=1.2$ & $\delta=2.0$ & $\delta=3.0$ & Differences \\[-0.2ex]
     & (1) & (2) & (3) & (4) & (5) & (6)  \\ \hline \\ [-1.8ex]
    %Purchase & 1026 & 2387 & 2681 & 1558 & 2079 & -528\\
    %Value  & (1699) & (1120) & (876) & (806) & (621) & (0.093)\\ [1ex]

    %\# of transactions & 0.62 & 0.67 & 0.94 & 0.71 & 1.05 & -0.095 \\
    % & (0.72) & (0.49) & (0.40) & (0.35) & (0.27) & (0.506) \\ [1ex]

    %Coupon & 12.90 & 24.78 & 20.05 & 16.66 & 22.58 & 23.74 \\
    %Usage & (14.70) & (10.24) & (9.21) & (7.71) & (5.96) & (0.00) \\ [1ex]

    Effect on Purchase Value & 0.35 & 0.82 & 0.92 &  0.54 & 0.72 & $-$0.17\\
    & (0.59) & (0.39) & (0.30) & (0.28) & (0.21) & (0.11) \\ [1ex]

  Effect on \# of Transactions & 0.43 & 0.47 & 0.66 & 0.49 & 0.74  & $-$0.07\\\
    &  (0.50) & (0.34) & (0.28) & (0.25) & (0.19) & (0.10)\\ [1ex]

    Effect on Point Usage & 0.37 & 0.71 & 0.57 & 0.47 & 0.64 & 0.68\\
     & (0.42) & (0.29) & (0.26) & (0.22) & (0.17) & (0.04)\\ [1.5ex]

    Coupon Cost Effectiveness Measure & 79.57 & 96.35 & 134 & 93.51 & 92.07 & \textemdash\\
     & (130) & (48.97) & (61.97) & (49.33) &  (28.45) & \\ [1.5ex]
    $N$ & 2758 & 4688 & 6016 & 8085 & 9602 & 89486  \\
    %$N$ ($p^{ML}_{\delta}(2|X_i)\in (0,1)$) & 2758 & 4688 & 6016 & 8085 & 9602 & \textemdash  \\
    %$N$ (All) & \multicolumn{6}{c}{89486} \\
    \bottomrule
  \end{tabular}
  }
  \vspace{0.2em}
  \caption*{
  %\begin{flushleft}
  \footnotesize
  {\it Notes}: The first three rows of this table report estimated effects of the policy recommendation $A_i$ on purchase behavior.
  Columns (1)--(5) report estimates from our method with several choices of $\delta$ used to compute APS.
  Column (6) reports the outcome mean differences between those with $A_i=1$ and $A_i=0$.
  Each APS is computed by averaging $100$ simulation draws of the logging policy's binary decision.
  All numbers in the first three rows are normalized by dividing the original estimates by the sample outcome means.
  The fourth row reports our measure of coupon cost effectiveness, which predicts how much the purchase value would increase in USD if we increased the cost of the campaign by 1 USD.
  Heteroskedasticity-robust standard errors are reported in parentheses.
  The last row reports the number of observations with nonzero APS for every action (Columns (1)--(5)) or the total sample size (Column (6)).
  %\end{flushleft}
 }
  \end{threeparttable}
\end{table*}

% \newpage
\section{Real-World Application}

    {\bf Setup.} We apply our method to empirically evaluate a coupon targeting policy of an online platform.
    This application uses proprietary data provided by Mercari, Inc.
    % \footnote{We follow the Privacy Policy and Terms of Service of Mercari. The data do not contain any personally identifiable information after variable names are removed.\label{ftnt:privacy}}
    %, which we described in the introduction.
    %This company uses an algorithm based on an uplift model to determine to whom they should offer a coupon.
    This company conducts the following promotional campaign.
    They target customers who signed up for Mercari 4 days ago but have not made a purchase yet.
    The company uses a logging policy based on an uplift model to determine whether they offer a promotional coupon to each target customer.
    If customers receive the coupon and make a purchase, they get 900 points (equivalent to 8.34 USD) that they can use for future purchases.
    We observe data $(Y_i,X_i,A_i)$ for each target user $i$ from this campaign, where action $A_i\in\{0,1\}$ is whether the logging policy recommended offering the coupon to the customer ($A_i=1$) or not ($A_i=0$), $X_i$ is the vector of more than 200 input features for the uplift model, and $Y_i$ is an outcome such as the customer's spending after this coupon offer.

    %This algorithm produces logged data, which we use in this application.
    %To determine to whom they should offer this coupon, the company uses an algorithm based on an uplift model.
    %In the notation of our theoretical framework, action $A_i$ is whether the algorithm recommended offering the coupon to the customer, and $X_i$ is the vector of input features like the customer's past search behaviors.
    The company's logging policy works as follows.
    They first use data from a past A/B test and XGBoost to train a model of the conditional average effect of the coupon on purchases (they use library pylift for implementation).
    Let %$\tau:{\cal X}\rightarrow \mathbb{R}$ be the trained model, where
    $\tau(x)$ be the predicted coupon effect for those whose feature value is $X_i=x$.
    The logging policy then recommends offering a coupon to customer $i$ if the predicted effect is in the top 80\% of the distribution of predicted effects.
    That is, the logging policy $ML$ is given by
    $
    ML(1|x)=1\{\tau(x)\ge c\},
    $
    where $c$ is the 20th quantile of the distribution of $\tau(X_i)$.

    {\bf Effects of Policy Recommendation.} We first apply our method to the logged data generated by the above policy to estimate the effect of the policy recommendation $A_i$ ($\beta(1,0)=E[Y_i(1)-Y_i(0)]$) on the following three outcomes: (1) the purchase value (how much the customer spent), (2) the number of transactions, and (3) point usage (how many points the customer used).
    All outcomes are sums over 18 days after the coupon offer decision.
    We compute APS with $\delta\in\{0.4,0.8,1.2,2.0,3.0\}$.%by using $100$ simulation draws of the $ML$ value for each observation.
    \footnote{Unlike the theoretical framework, the feature vector $X_i$ consists of discrete and continuous variables.
    We compute APS by fixing the
    value of the discrete part and computing by simulation the APS integral with respect to the continuous part.  See Appendix \ref{section:discrete-X} for details.}

    Columns (1)--(5) in the first three rows of Table \ref{table_ope_mercari} report the estimated effects of the policy recommendation $A_i$.
    We normalize the estimates by dividing the original numbers by the sample outcome means for confidentiality.
    The results show that the effects of the policy recommendation $A_i$ on the purchase value, the number of transactions, and point usage are 35--92\%, 43--74\%, and 37--71\% of their sample means, respectively.
    These positive effects mark a sharp contrast with Column (6), which reports the simple differences in the outcome means between those with $A_i=1$ and those with $A_i=0$.
    The simple mean differences on the purchase value and the number of transactions are negative. These negative estimates suggest that the logging policy tends to recommend a coupon to the customers who have a low propensity to make purchases.
    Our proposed method corrects for this negative selection bias by controlling for APS.
    \begin{comment}
%% original position
% \begin{table*}[!t]
% \centering
%   \caption{Off-policy evaluation using policy's generated data}
%   \label{table_ope_mercari}
%   \begin{threeparttable}
%   \resizebox{0.95\textwidth}{!}{
%   \begin{tabular}{lcccccc}
%   \toprule \\ [-2.8ex]
%      & \multicolumn{5}{c}{Our Proposed Method with APS Controls} & Mean \\ \cline{2-6}\\[-2.0ex]
%      & $\delta=0.4$ & $\delta=0.8$ & $\delta=1.2$ & $\delta=2.0$ & $\delta=3.0$ & Differences \\[-0.2ex]
%      & (1) & (2) & (3) & (4) & (5) & (6)  \\ \hline \\ [-1.8ex]
%     %Purchase & 1026 & 2387 & 2681 & 1558 & 2079 & -528\\
%     %Value  & (1699) & (1120) & (876) & (806) & (621) & (0.093)\\ [1ex]

%     %\# of transactions & 0.62 & 0.67 & 0.94 & 0.71 & 1.05 & -0.095 \\
%     % & (0.72) & (0.49) & (0.40) & (0.35) & (0.27) & (0.506) \\ [1ex]

%     %Coupon & 12.90 & 24.78 & 20.05 & 16.66 & 22.58 & 23.74 \\
%     %Usage & (14.70) & (10.24) & (9.21) & (7.71) & (5.96) & (0.00) \\ [1ex]

%     Effect on Purchase Value & 0.35 & 0.82 & 0.92 &  0.54 & 0.72 & $-$0.17\\
%     & (0.59) & (0.39) & (0.30) & (0.28) & (0.21) & (0.11) \\ [1ex]

%   Effect on \# of Transactions & 0.43 & 0.47 & 0.66 & 0.49 & 0.74  & $-$0.07\\\
%     &  (0.50) & (0.34) & (0.28) & (0.25) & (0.19) & (0.10)\\ [1ex]

%     Effect on Point Usage & 0.37 & 0.71 & 0.57 & 0.47 & 0.64 & 0.68\\
%      & (0.42) & (0.29) & (0.26) & (0.22) & (0.17) & (0.04)\\ [1.5ex]

%     Coupon Cost Effectiveness Measure & 79.57 & 96.35 & 134 & 93.51 & 92.07 & \textemdash\\
%      & (130) & (48.97) & (61.97) & (49.33) &  (28.45) & \\ [1.5ex]
%     $N$ & 2758 & 4688 & 6016 & 8085 & 9602 & 89486  \\
%     %$N$ ($p^{ML}_{\delta}(2|X_i)\in (0,1)$) & 2758 & 4688 & 6016 & 8085 & 9602 & \textemdash  \\
%     %$N$ (All) & \multicolumn{6}{c}{89486} \\
%     \bottomrule
%   \end{tabular}
%   }
%   \vspace{0.2em}
%   \begin{flushleft}
%   \footnotesize
%   {\it Notes}: The first three rows of this table report estimated effects of the policy recommendation $A_i$ on purchase behavior.
%   Columns (1)--(5) report estimates from our method with several choices of $\delta$ used to compute APS.
%   Column (6) reports the outcome mean differences between those with $A_i=1$ and $A_i=0$.
%   Each APS is computed by averaging $100$ simulation draws of the logging policy's binary decision.
%   All numbers in the first three rows are normalized by dividing the original estimates by the sample outcome means.
%   The fourth row reports our measure of coupon cost effectiveness, which predicts how much the purchase value would increase in USD if we increased the cost of the campaign by 1 USD.
%   Heteroskedasticity-robust standard errors are reported in parentheses.
%   The last row reports the number of observations with nonzero APS for every action (Columns (1)--(5)) or the total sample size (Column (6)).
%   \end{flushleft}
%   \end{threeparttable}
% \end{table*}
\end{comment}

    {\bf Evaluation of Counterfactual Policies.}
    The company needs to compensate for the discount that customers get by using points. Thus, adopting a new policy would be profitable only when the increase in revenue is sufficiently large compared to that in point usage. The company charges sellers 10\% of every payment from the buyer;
    % , which means
    the revenue increases by 10\% of the increase in purchase value. Hence, the policy change is beneficial if the ratio of the increases in the average purchase value and point usage is larger than 10.

    Suppose we change our policy from $ML$ to a counterfactual one $\pi$.
    Let $Y^1_i$ and $Y^2_i$ denote the purchase value and point usage respectively.
    Under the constant conditional effect assumption, i.e., $E[Y_i^1(1)-Y_i^1(0)|X_i]=:\beta$ and $E[Y_i^2(1)-Y_i^2(0)|X_i]=:\gamma$, the ratio is:
    % To evaluate how to improve the current policy, we next consider how much the purchase value would increase relative to cost if we switched from the status-quo to a new policy.
    % Let $Y^1_i$ and $Y^2_i$ denote the purchase value and point usage.
    % We view point usage as the cost of the campaign, since the company needs to compensate for the discount that customers get by using points.
    % Suppose we change our policy from $ML$ to a counterfactual one $\pi$.
    % If conditional effects are constant, that is, $E[Y_i^1(1)-Y_i^1(0)|X_i]=\beta$ and $E[Y_i^2(1)-Y_i^2(0)|X_i]=\gamma$, the ratio of the changes in the average purchase value and point usage are given by
        \begin{align*}
            %\frac{E[(Y_i^1(2)-Y_i^1(1))(ML'(2|X_i)-ML(2|X_i))]}{E[(Y^2_i(2)-Y^2_i(1))(ML'(2|X_i)-ML(2|X_i))]}=\frac{\beta E[(ML'(2|X_i)-ML(2|X_i))]}{\gamma E[(ML'(2|X_i)-ML(2|X_i))]} = \frac{\beta}{\gamma}.
            &\frac{E[\sum_{a=0}^1Y_i^1(a)\pi(a|X_i)]-E[\sum_{a=0}^1Y_i^1(a)ML(a|X_i)]}{E[\sum_{a=0}^1Y_i^2(a)\pi(a|X_i)]-E[\sum_{a=0}^1Y_i^2(a)ML(a|X_i)]} \\
            %&=\frac{E[E[Y_i^1(1)-Y_i^1(0)|X_i](\pi(1|X_i)-ML(1|X_i))]}{E[E[Y_i^2(1)-Y_i^2(0)|X_i](\pi(1|X_i)-ML(1|X_i))]}\\
            &=\frac{\beta E[\pi(1|X_i)-ML(1|X_i)]}{\gamma E[\pi(1|X_i)-ML(1|X_i)]} = \frac{\beta}{\gamma}.
        \end{align*}
    % The ratio $\beta/\gamma$ measures how much the purchase value would increase in USD if we increased the cost of the campaign by 1 USD by altering the policy to offer the coupon to more customers.

    The fourth row of Table \ref{table_ope_mercari} reports the estimates of the ratio $\beta/\gamma$.
    The estimates are larger than 10 for all $\delta$'s.
    % The company charges sellers 10\% of every payment from the buyer. The estimates therefore predict that if the cost increased by 1 USD, the revenue would increase by 8--13.4 USD.
    This result
    % confirms the effectiveness of the promotional offer and
    suggests that it would be profitable to expand the campaign.
    % by redesigning the policy.

    As mentioned in Section \ref{section:estimation}, without the constant conditional effect assumption, our estimator for the effect of the policy recommendation is a consistent estimator of the conditional effect for the subpopulation on the decision boundary, i.e., $E[Y_i(1) - Y_i(0) | \tau(X_i) = c]$.
    Our estimates in the fourth row of Table \ref{table_ope_mercari} therefore can be interpreted as a measure for the cost effectiveness of the counterfactual policy that slightly lowers the threshold $c$.
    Without the constant conditional effect assumption, the result still suggests that marginally expanding the campaign would be profitable.
% between actions $a$ and $1$.
\begin{comment}
The estimates may still allow us to derive a meaningful policy implication.
For example, let us focus on the binary-action case, i.e., $\mathcal A \coloneqq \{0,1\}$. (Our application in Section 6 falls into this category; $A_i=1$ means giving a coupon to user $i$.)
Although we cannot estimate the conditional average effect $E[Y(1) - Y(0)|X=x]$ for each possible vector of the user characteristics $x\in {\cal X}$, we can still estimate the \emph{local average effect}. Suppose that the logging policy $ML$ is a threshold policy such that $ML(1 \mid x) = 1$ iff $\tau(x) \geq c$ for some score function $\tau$ and threshold $c$. (This is the case in our application. We also assume $\tau(X)$ is continuous.)
Then, we can estimate the effect of coupon distribution for users near the decision boundary, i.e., $\Pi \coloneqq E[Y(1) - Y(0) \mid \tau(X) = c]$.
By giving an extra coupon to a customer whose score is slightly below the threshold $c$, the firm can increase its profit by 
$\Pi$.
\end{comment}

\section{Conclusion}\label{conclusion}

We develop an OPE method for a class of logging policies including deficient support ones.
Our method is based on the newly developed ``Approximate Propensity Score.''
We prove that our estimator is consistent and demonstrate its practical performance through simulations and a real-world application.
Promising directions for future work include developing a data-driven procedure to optimize the bandwidth.
Also, the assumption of constant conditional mean reward differences may not be plausible in some applications.
It will be challenging but interesting to relax this assumption to allow for certain types of heterogeneity.
Finally, we look forward to applications of our method in a variety of business, policy, and scientific domains using machine learning.

%We assess the performance of our method using simulation experiments under different parameters and assumptions. We show that our method performs well even with high dimensional data, across a wide range of $\delta$. The simulations also confirm our theoretical result that the estimator is unbiased under constant conditional mean differences.

\bibliography{reference}

%%%%%%%%%%%%%%%%%%%%%%%%%%%%%%%%%%%%%%%%%%%%%%%%%%%%%%%%%%%%%%%%%%%%%%%%%%%%%%%
%%%%%%%%%%%%%%%%%%%%%%%%%%%%%%%%%%%%%%%%%%%%%%%%%%%%%%%%%%%%%%%%%%%%%%%%%%%%%%%
% APPENDIX
%%%%%%%%%%%%%%%%%%%%%%%%%%%%%%%%%%%%%%%%%%%%%%%%%%%%%%%%%%%%%%%%%%%%%%%%%%%%%%%
%%%%%%%%%%%%%%%%%%%%%%%%%%%%%%%%%%%%%%%%%%%%%%%%%%%%%%%%%%%%%%%%%%%%%%%%%%%%%%%
\newpage
\appendix
\onecolumn

\section{Regularity Conditions and Discussion\label{appendix:discuss_assump}}

\begin{definition}[Twice continuously differentiable]
We say that a bounded open set $S\subset \mathbb{R}^p$ is {\it twice continuously differentiable} if for every $x\in S$, there exists a ball $B(x,\epsilon)$ and a one-to-one mapping $\psi$ from $B(x,\epsilon)$ onto an open set $D\subset \mathbb{R}^{p}$ such that $\psi$ and $\psi^{-1}$ are twice continuously differentiable, $\psi(B(x,\epsilon)\cap S)\subset \{(x_1,...,x_p)\in\mathbb{R}^p:x_p>0\}$ and $\psi(B(x,\epsilon)\cap \partial S)\subset \{(x_1,...,x_p)\in\mathbb{R}^p:x_p=0\}$, where $\partial S$ is the boundary of $S$.
\end{definition}

\begin{definition}[$k$-dimensional Hausdorff measure]
	The $k$-dimensional Hausdorff measure on $\mathbb{R}^{p}$ is defined as follows.
	Let $\Sigma$ be the Lebesgue $\sigma$-algebra on $\mathbb{R}^{p}$ (the set of all Lebesgue measurable sets on $\mathbb{R}^{p}$).
		For $S\in \Sigma$ and $\delta>0$, let
		$
		{\cal H}_\delta^k(S)=\inf\{\sum_{j=1}^\infty d(E_j)^k: S\subset \cup_{j=1}^\infty E_j, d(E_j)<\delta, E_j\subset \mathbb{R}^p$ for all $j$\},
		where $d(E)=\sup\{\|x-y\|: x,y\in E\}$.
		The $k$-dimensional Hausdorff measure of $A$ on $\mathbb{R}^{p}$ is ${\cal H}^k(S)=\lim_{\delta\rightarrow 0}{\cal H}_\delta^k(S)$.
\end{definition}

Our consistency result uses the following assumptions for the subsample assigned to one of the actions $a$ and $1$, for every $a\in\{2,...,m\}$.
Let
\begin{align*}
\mathcal{X}_{a,1}
&\coloneqq
\{x\in{\cal X}: ML(a|x)>0 ~ \text{ or } ~ ML(1|x)>0\}, \\
\widetilde{ML}(a|x)&\coloneqq
\Pr(A_i=a|A_i\in\{1,a\}, X_i=x) =\frac{ML(a|x)}{ML(a|x)+ML(1|x)}
,\\
\mathcal{X}_{a,1}^{a}
&\coloneqq
\{x\in{\cal X}: \widetilde{ML}(a|x)=1\}, \\ \mathcal{X}_{a,1}^{1}
&\coloneqq
\{x\in{\cal X}: \widetilde{ML}(a|x)=0\}.
\end{align*}
In other words, $\mathcal{X}_{a,1}$ is the set of context values for which action $1$ or $a$ can be taken, $\widetilde{ML}(a|x)$ is the probability of choosing action $a$ conditional on $A_i\in\{1,a\}$ and $X_i=x$, and $\mathcal{X}_{a,1}^{a}$ and $\mathcal{X}_{a,1}^{1}$ are the set of context values for which the conditional probability is $1$ and $0$, respectively.

\begin{assumption}[Regularity conditions]\label{consassump_complete}

The following holds for all $a\in\{2,...,m\}$.
	\begin{enumerate}[leftmargin=*,nolistsep,label={(\alph*)}]
	    \item \emph{(Existence of Subsample)}\label{assumption:existence_subsample} $\Pr(A_i\in\{1,a\})>0$.
	    \item{\rm (Almost Everywhere Continuity of $ML$)} \label{assumption:ML-continuity} $ML(a|\cdot)$ and $ML(1|\cdot)$ are continuous almost everywhere on ${\cal X}_{a,1}$ with respect to the Lebesgue measure.
		\item{\rm (Measure Zero Boundaries of ${\cal X}_{a,1}^a$ and ${\cal X}_{a,1}^1$)}. \label{assumption:ML-measure-zero-boundary} For $a'\in\{1,a\}$, ${\cal L}^p({\cal X}_{a,1}^{a'})={\cal L}^p({\rm int}({\cal X}_{a,1}^{a'}))$, where ${\cal L}^p$ is the Lebesgue measure on $\mathbb{R}^p$.
		\item \emph{(Finite Moments)}\label{finite-moments} $E[Y_i^2] < \infty$.
		\item \emph{(Nonzero Conditional Variance)}\label{nonzero-cond-var} If $\Pr(\widetilde{ML}(a|X_i) \in (0,1)| A_i\in\{1,a\}) > 0$, then $\Var(\widetilde{ML}(a|X_i)|\widetilde{ML}(a|X_i) \in (0,1),A_i\in\{1,a\}) > 0$.
	\end{enumerate}
	If $\Pr(\widetilde{ML}(a|X_i)\in (0,1)| A_i\in\{1,a\})=0$, then the following conditions \ref{assumption:deterministic}--\ref{assumption:boundary-continuity} additionally hold.
	\begin{enumerate}[leftmargin=*,nolistsep,label={(\alph*)}]
		\renewcommand{\theenumi}{(\alph{enumi})}
		\renewcommand{\labelenumi}{(\alph{enumi})}
		\setcounter{enumi}{5}
		%\item {\rm (Nonzero Variance)}\label{assumption:nondegeneracy}
		%$\Var (\widetilde{ML}(a|X_i)|A_i\in\{1,a\})>0$.
		\item {\rm (Deterministic $ML$)}\label{assumption:deterministic}
		For all $x\in\mathbb{R}^p$, either $ML(a|x)=1$ or $ML(a|x)=0$.
		%${\cal X}_{a,1}=\{x\in{\cal X}:ML(a|x)=1 ~ \text{ or } ML(1|x)=1\}$.
		%If $\widetilde{ML}(a|x)=1$, then $ML(a|x)=1$.
		%If $\widetilde{ML}(a|x)=0$, then $ML(1|x)=1$.
		\item {\rm ($C^2$ Boundary of $\Omega^*_a$)}\label{assumption:boundary-partition}
		There exists a partition $\{\Omega^*_{a,1},...,\Omega^*_{a,K}\}$ of $\Omega^*_a=\{x\in \mathbb{R}^p: ML(a|x)=1\}$ (the set of the context values for which the probability of choosing action $a$ is one) such that
		\begin{enumerate}[nolistsep,label=(\arabic*)]
			\item ${\rm dist}(\Omega^*_{a,k},\Omega^*_{a,l})>0$ for any $k,l\in\{1,...,K\}$ such that $k\neq l$. Here ${\rm dist}(S,T)=\inf_{x\in S, y\in T}\|x-y\|$ is the distance between two sets $S$ and $T\subset \mathbb{R}^p$;
			\item $\Omega^*_{a,k}$ is nonempty, bounded, open, connected and twice continuously differentiable for each $k\in \{1,...,K\}$.
% 			\footnote{See Appendix~\ref{appendix:definitions} for the definition.}
% 			\footnote{We say that a bounded open set $S\subset \mathbb{R}^p$ is {\it twice continuously differentiable} if for every $x\in S$, there exists a ball $B(x,\epsilon)$ and a one-to-one mapping $\psi$ from $B(x,\epsilon)$ onto an open set $D\subset \mathbb{R}^{p}$ such that $\psi$ and $\psi^{-1}$ are twice continuously differentiable, $\psi(B(x,\epsilon)\cap S)\subset \{(x_1,...,x_p)\in\mathbb{R}^p:x_p>0\}$ and $\psi(B(x,\epsilon)\cap \partial S)\subset \{(x_1,...,x_p)\in\mathbb{R}^p:x_p=0\}$, where $\partial S$ is the boundary of $S$.}
		\end{enumerate}
		\item {\rm (Regularity of Deterministic $ML$)} \label{assumption:boundary-measure}
		\begin{enumerate}[nolistsep,label=(\arabic*)]
		\item \label{assumption:p-1dim} ${\cal H}^{p-1}(\partial\Omega^*_a)<\infty$, $\int_{\partial\Omega^*_a\cap \partial{\cal X}_{a,1}} d{\cal H}^{p-1}(x)=0$, and $\int_{\partial\Omega^*_a\cap {\cal X}_{a,1}} f_X(x) d{\cal H}^{p-1}(x)>0$, where $\partial S$ denotes the boundary of a set $S\subset\mathbb{R}^p$, $f_{X}$ is the probability density function of $X_i$, and ${\cal H}^{k}$ is the $k$-dimensional Hausdorff measure on $\mathbb{R}^{p}$.
% 		\footnote{See Appendix~\ref{appendix:definitions} for the definition.}
% 		\footnote{The $k$-dimensional Hausdorff measure on $\mathbb{R}^{p}$ is defined as follows.
% 	Let $\Sigma$ be the Lebesgue $\sigma$-algebra on $\mathbb{R}^{p}$ (the set of all Lebesgue measurable sets on $\mathbb{R}^{p}$).
% 		For $S\in \Sigma$ and $\delta>0$, let
% 		$
% 		{\cal H}_\delta^k(S)=\inf\{\sum_{j=1}^\infty d(E_j)^k: S\subset \cup_{j=1}^\infty E_j, d(E_j)<\delta, E_j\subset \mathbb{R}^p$ for all $j$\},
% 		where $d(E)=\sup\{\|x-y\|: x,y\in E\}$.
% 		The $k$-dimensional Hausdorff measure of $A$ on $\mathbb{R}^{p}$ is ${\cal H}^k(S)=\lim_{\delta\rightarrow 0}{\cal H}_\delta^k(S)$.}
		\item \label{assumption:zero-set} There exists $\delta>0$ such that $ML(a|x)=1$ or $ML(1|x)=1$ for almost every $x\in N({\mathcal{X}_{a,1}}, \delta)\cap N(\partial\Omega^*_a,\delta)$, where $N(S,\delta)=\{x\in \mathbb{R}^p: \|x-y\|< \delta \text{ for some $y\in S$}\}$ for a set $S\subset\mathbb{R}^p$ and $\delta>0$.
		\end{enumerate}
		\item {\rm (Conditional Moments and Density near $\partial\Omega^*_a$)} \label{assumption:boundary-continuity}
		There exists $\delta>0$ such that
		\begin{enumerate}[nolistsep,label=(\arabic*)]
			\item$E[Y_{i}(a)|X_i]$, $E[Y_{i}(1)|X_i]$, and $f_{X}$ are continuous and bounded on $N(\partial\Omega^*_a,\delta)$;
			\item $E[Y_{i}(a)^2|X_i]$ and $E[Y_{i}(1)^2|X_i]$ are bounded on $N(\partial\Omega^*_a,\delta)$.
			%\item $E[Y_i^{4}|X_i \in \mathcal{X}_{a,1}]$ is bounded on $N(\partial\Omega^*,\delta)$.
		\end{enumerate}
	\end{enumerate}
\end{assumption}

\subsection*{Discussion}

    Assumption \ref{consassump_complete} \ref{assumption:existence_subsample}--\ref{nonzero-cond-var} are a set of conditions we require for proving consistency of $\hat\beta_a$ when $\widetilde{ML}(1|x)>0$ and $\widetilde{ML}(a|x)>0$ for some $x\in{\cal X}$.
    Assumption \ref{consassump_complete} \ref{assumption:ML-continuity} allows the function $ML$ to be discontinuous on a set of points with the Lebesgue measure zero.
    For example, $ML$ is allowed to be a discontinuous step function as long as it is continuous almost everywhere.
    Assumption \ref{consassump_complete} \ref{assumption:ML-measure-zero-boundary} holds if the Lebesgue measures of the boundaries of ${\cal X}_{a,1}^a$ and ${\cal X}_{a,1}^1$ are zero.

	 Assumption \ref{consassump_complete} \ref{nonzero-cond-var} rules out potential multicollinearity. 
	 If the support of $\widetilde{ML}(a|X_i)$ contains only one value in $(0,1)$, $q_{\delta_n}^{ML}(a|X_i)$ is asymptotically constant and equal to $\widetilde{ML}(a|X_i)$ conditional on $q_{\delta_n}^{ML}(a|X_i)\in (0,1)$, resulting in  multicollinearity between
	 $q_{\delta_n}^{ML}(a|X_i)$ and the intercept.
	 Although dropping the intercept from the linear regression (\ref{olstrue}) solves this issue, Assumption \ref{consassump_complete} \ref{nonzero-cond-var} allows us to only consider the regression with an intercept for the purpose of simplyfing the presentation.
	 
	Assumption \ref{consassump_complete} \ref{assumption:deterministic}--\ref{assumption:boundary-continuity} are a set of additional conditions we require for proving consistency of $\hat\beta_a$ when $\widetilde{ML}(a|x)$ is either $0$ or $1$.
	In particular, we assume by Part \ref{assumption:deterministic} that the original logging policy $ML$ is deterministic and the context space is partitioned into $m$ groups based on the action that the logging policy chooses.
	$\partial\Omega_a^*$ then corresponds to the decision boundary for action $a$.
	In this case, the subsample for which $q_{\delta_n}^{ML}(a| X_i)\in (0,1)$ is contained by the $\delta_n$-neighborhood of $\partial\Omega_a^*$.
	
	Assumption \ref{consassump_complete} \ref{assumption:boundary-partition} imposes the differentiability of $\Omega^*_a$.
	The conditions are satisfied if, for example, $\Omega^*_a=\{x\in \mathbb{R}^p:f(x)\ge 0\}$ for some twice continuously differentiable function $f:\mathbb{R}^p\rightarrow \mathbb{R}$ such that the gradient $\nabla f(x)$ is nonzero for all $x\in \mathbb{R}^p$ with $f(x)=0$.
	%$\Omega^*_a$ takes this form, for example, when the conditional treatment effect $E[Y_i(1)-Y_i(0)|X]$ is predicted by supervised learning based on smooth models such as lasso and ridge regressions, and treatment is recommended to individuals who are estimated to experience nonnegative treatment effects.
	In general, the differentiability of $\Omega^*_a$ may not hold.
	For example, if tree-based algorithms are used to partition the context space, the decision boundary $\partial\Omega^*_a$ is not differentiable.
	However, Assumption \ref{consassump_complete} \ref{assumption:boundary-partition} approximately holds in that $\Omega^*_a$ is arbitrarily well approximated by a set that satisfies the differentiability condition.
	
	Part \ref{assumption:p-1dim} of Assumption \ref{consassump_complete} \ref{assumption:boundary-measure} says that $\partial\Omega^*_a$ is $(p-1)$ dimensional and has nonzero density.
	Part \ref{assumption:zero-set} requires that the logging policy choose either action $1$ or $a$ near the boundary of $\Omega^*_a$ even if the context value is not in the subsample ${\cal X}_{a,1}$ as long as it is in the neighborhood of ${\cal X}_{a,1}$.
	%Part \ref{assumption:zero-set} puts a restriction on the values $ML$ takes on outside the support of $X_i$.
	%It requires that $ML(x)=0$ for almost every $x\notin\Omega^*$ that is outside ${\cal X}$ but is in the neighborhood of ${\cal X}$.
	%$ML(x)$ may take on any value if $x$ is not close to ${\cal X}$.
	%These conditions hold in practice.
	
	Lastly, Assumption \ref{consassump_complete} \ref{assumption:boundary-continuity} imposes continuity and boundedness on the conditional moments of rewards and the probability density near the boundary of $\Omega^*_a$.

\section{Simulation Experiments: Details and Additional Results}\label{appendix:sim}
\subsection{Implementation Details}\label{appendix:sim_details}

{\bf Parameter Choice.}
For the variance-covariance matrix $\boldsymbol\Sigma$ of $X_i$, we first create a $100 \times 100$ symmetric matrix $\boldsymbol V$ such that the diagonal elements are one, $\boldsymbol V_{ij}$ is nonzero and equal to $\boldsymbol V_{ji}$ for $(i, j) \in \{2,3,4,5,6\} \times \{35, 66, 78\}$, and everything else is zero. We draw values from $\text{Unif}(-0.5, 0.5)$ independently for the nonzero off-diagonal elements of $\boldsymbol V$. We then create matrix $\boldsymbol \Sigma = \boldsymbol V \times \boldsymbol V$, which is positive semidefinite.

For $\alpha_0$ and $\alpha_{a}$, we first draw $\tilde{\alpha}_{0,j}$, $j = 51, \ldots, 100$ from $\text{Unif}(-100, 100)$ independently across $j$, and draw $\tilde{\alpha}_{a,j}$, $j = 1, \ldots, 100$ from $\text{Unif}(-150,200)$ independently across $j$ and actions $a$. We then set $\tilde{\alpha}_{0,j} = \frac{1}{5} \sum_{a = 1}^{5} \tilde{a}_{a,j}$ for $j = 1,\ldots, 50$ and calculate $\alpha_0$ and $\alpha_{a}$ by normalizing $\tilde{\alpha}_{0}$ and $\tilde{\alpha}_{a}$ such that $\var(\sum_{k=1}^{100}X_{ki}\alpha_{0,k}) = \var(\sum_{k=1}^{100}X_{ki}\alpha_{a,k}) = 1$ for all actions $a$.

{\bf Independent Training Sample $\tilde{\mathcal{D}}$.}
Before simulating 1,000 hypothetical samples, we construct %$\tau_{pred}^{ML}$ using
an independent sample $\tilde{\mathcal{D}}=\{(\tilde Y_i, \tilde X_i, \tilde A_i)\}_{i=1}^{\tilde n}$ of size $\tilde n =$ 10,000. The distribution of $(\tilde Y_i, \tilde X_i, \tilde A_i)$ is the same as that of $(Y_i, X_i, A_i)$ except that (1) $\tilde Y_i(a)$ is generated by $\tilde Y_i(a) =  \sum_{k=1}^{100}X_{ki}^2(0.75\alpha_{0,k}+0.5\alpha_{a,k}) + 0.25 u_i + 0.5\epsilon_{i}(a)$,  where $\epsilon_{i}(a) \sim N(0,1)$ and $\alpha_a=(\alpha_{a,1},...,\alpha_{a,100})\in \mathbb{R}^{100}$, and (2) $\Pr(\tilde A_i = a) = 1/5$ for all actions $a$. This can be viewed as data from a past A/B test conducted to construct a policy.

{\bf Construction of Reward Prediction Functions $\tau_{pred}^{ML}$ and $\tau_{pred}^{\pi}$.}
We use $\tilde{\mathcal{D}}$ to fit a linear model $\tilde Y_i=\sum_{a = 1}^{5} (b_{a}+\sum_{k=1}^{100}\tilde X_{ki}  c_{a,k}) 1\left\{\tilde A_i = a\right\} + e_i$
and compute $\tau_{pred}^{ML}(x, a) = \hat{b}_a + \sum_{k=1}^{100}\tilde x_k  \hat c_{a,k}$.
We repeat this process using a new set of $\tilde n$ independent draws of $\tilde A_i$ to construct $\tau_{pred}^{\pi}$.
%This mimics a situation in which the decision maker conducts two experiments that randomly assigns $A_i$ to estimate the conditional average effect of $A_i$, and then constructs the respective algorithms $ML$ and $\pi$ to greedily choose the treatment predicted to perform better based on the predicted effect.
We construct $\tau_{pred}^{ML}$ and $\tau_{pred}^{\pi}$ only once, and use them for all of the 1,000 samples.

{\bf Training Upper Confident Bound.}
We use $\tilde{\mathcal{D}}$
to train an Upper Confidence Bound bandit algorithm as follows.
Let $D_1(x_1)\in\{1,...,10\}$ indicate which decile of $X_{1i}$ the individual with $\tilde X_{1i}=x_1$ belongs to. Define $D_2(x_2)$ analogously for $\tilde X_{2i}$.
Let $Q(a,d_1,d_2)$ be the sample mean reward for each action $a$ for every decile pair $(d_1,d_2)$ in the distribution of $\tilde X_{1i}$ and $\tilde X_{2i}$.
We then compute
$UCB(x, a) = Q(a, D_1(x_1), D_2(x_2)) + c\sqrt{\frac{\log \tilde n}{\tilde N_{a, D_1(x_1), D_2(x_2)}}}$.
Here, we set exploration parameter $c$ to $2$. $\tilde n(=10,000)$ is the size of the training sample $\tilde{\mathcal{D}}$.
$\tilde N_{a, d_1, d_2}$ is the number of observations with action $a$ for the decile pair $(d_1,d_2)$ in the sample.

\subsection{Additional Results}

We also consider the case in which the conditional mean reward differences are not constant over $x$:
$Y_i(a)$ is generated as $Y_i(a) =  \sum_{k=1}^{100}X_{ki}^2(0.75\alpha_{0,k}+\alpha_{a,k}) + 0.25 u_i$, where $\alpha_a=(\alpha_{a,1},...,\alpha_{a,100})\in \mathbb{R}^{100}$.
The rest of the experiment setup is the same as that in Section \ref{section:sim}.

Table \ref{tbl:app_sim_result} reports the result.
Our method does not necessarily outperform the alternatives, suggesting a limitation of our method when the conditional mean reward differences depend on the context.

\begin{table}[!t]
	\centering
	\caption{Simulation results: non-constant conditional mean reward differences} 
	\label{tbl:app_sim_result}
	\begin{threeparttable}
	\resizebox{\textwidth}{!}{\begin{tabular}{lccccccc}
		\toprule \\ [-2.8ex]
		& \multicolumn{4}{c}{Our Proposed Method with APS Controls} & \multicolumn{2}{c}{Method with Mean Differences} & Direct \\ 
		\cline{2-5} \cline{6-7} \\[-1.8ex]
		 & $\delta = 0.1$ & $\delta=0.5$ & $\delta = 1$  & $\delta = 2.5$ &  A/B Test Sample &  Full Sample &  Method \\
		 & (1) & (2) & (3) & (4) & (5) & (6) & (7) \\
		\hline \\ [-2.0ex]
		\multicolumn{8}{c}{Experiment 1: Mix of A/B Test and Deterministic Policy} \\ \\ [-1.8ex]
		%\multicolumn{5}{l}{Parameter value: $V(\pi) = 2.522$} \\ \\ [-1.0ex]
		\hspace{3mm} Bias & $-.012$ & .015 & .022 & .018 & $-.026$ & $-.006$ &---\\ \\ [-1.8ex]
		 \hspace{3mm} S.D. & .045 & .038 & .033 & .026 & .045 & .018 &---\\\\ [-1.8ex]
		 \hspace{3mm} RMSE & .046 & .041 & .040 & .031 & .052 & .019 &---\\\\ [-1.0ex]
		 \hspace{3mm} Avg. $N$ & 1806 & 6009 & 11627 & 30136 & 500 & 50000 &--- \\ \\ [-1.0ex]
		\multicolumn{8}{c}{Experiment 2: Upper Confidence Bound} \\ \\ [-1.8ex]
		%\multicolumn{5}{l}{Parameter value: $V(\pi) = 3.100$} \\ \\ [-1.0ex]
		\hspace{3mm} Bias & $-.118$ & $-.119$ & $-.121$ & $-.119$ &--- &--- & $-.117$ \\ \\ [-1.8ex]
		 \hspace{3mm} S.D. & .027 & .012 & .009 & .006 &---&---& .006 \\\\ [-1.8ex]
		 \hspace{3mm} RMSE & .121 & .120 & .121 & .119 &---&---& .117  \\\\ [-1.0ex]
		 \hspace{3mm} Avg. $N$ & 3397 & 17343 & 31107 & 47601 &---&---& 50000 \\
		\bottomrule
	\end{tabular}}
	\vspace{0.1em}
	\caption*{
	%\begin{flushleft}
	\footnotesize {\it Notes}: This table shows the bias, the standard deviation (S.D.), and the root mean squared error (RMSE) of the estimators of the reward from the counterfactual policy $V(\pi)$ in the two simulation experiments.
	We use $1,000$ simulations of a size $50,000$ sample to compute these statistics.
	Columns (1)--(4) report estimates from our method with several choices of $\delta$.
	Each APS is computed by averaging 100 simulation draws of the $ML$ value.
	In columns (5)--(6), we estimate the mean reward differences $\beta(a,1)$ by the sample mean differences in the A/B test segment and the full sample, respectively.
	In column (7), we estimate $\beta(a,1)$ by fitting a linear model that predicts the reward from the context and action.
	%These statistics are computed with the estimand set to the average reward $Y$ under the counterfactual algorithm assignment, in either the case where constant conditional mean differences hold or not.
	%We use $\delta\in\{0.1,0.5,1,2.5\}$ to compute APS.
	The bottom row of each panel shows the average number of observations with nonzero APS for every action (Columns (1)--(4)), that with nonzero $ML$ for every action (Column (5)), or the total sample size (Columns (6)--(7)).
%\end{flushleft}
}
\end{threeparttable}
\end{table}

\section{Approximate Propensity Score with Discrete Context Variables}\label{section:discrete-X}

	In this section, we provide the definition of APS when $X_i$ includes discrete context variables.
	Suppose that $X_i=(X_{di},X_{ci})$, where $X_{di}\in \mathbb{R}^{p_d}$ is a vector of discrete context variables, and $X_{ci}\in \mathbb{R}^{p_c}$ is a vector of continuous context variables.
	Let ${\cal X}_d$ denote the support of $X_{di}$ and be assumed to be finite.
	We also assume that $X_{ci}$ is continuously distributed conditional on $X_{di}$.

	We define APS as follows: for each $x=(x_d,x_c)\in{\cal X}$ and $a\in{\cal A}$,
	\begin{align*}
		p^{ML}_\delta(a|x)&\coloneqq\frac{\int_{B(x_c,\delta)}ML(a|x_d,x_c^{*})dx_c^*}{\int_{B(x_c,\delta)}dx_c^*},\\
		p^{ML}(a|x)&\coloneqq \lim_{\delta\rightarrow 0}p^{ML}_\delta(a|x), 
	\end{align*}
	where $B(x_c, \delta)=\{x_c^*\in\mathbb{R}^{p_c}:\|x_c-x_c^*\|\le\delta\}$ is the $\delta$-ball around $x_c\in \mathbb{R}^{p_c}$.
	In other words, we take the average of the $ML(a|x_d,x_c^*)$ values when $x_c^*$ is uniformly distributed on $B(x_c,\delta)$ holding $x_d$ fixed, and let $\delta\rightarrow 0$.

\section{Notations and Lemmas}\label{appendix:notations-lemmas}

\subsection{Basic Notations}
For a vector or matrix $X$, we use $X'$ to denote its transpose.

For a scalar-valued differentiable function $f:A\subset\mathbb{R}^n\rightarrow \mathbb{R}$, let $\nabla f:\mathbb{R}^n\rightarrow \mathbb{R}^n$ be a gradient of $f$: for every $x\in A$,
$$
\nabla f(x)=\left(\frac{\partial f(x)}{\partial x_1}, \cdots, \frac{\partial f(x)}{\partial x_n}\right)'.
$$
Also, when the second-order partial derivatives of $f$ exist, let $D^2 f(x)$ be the Hessian matrix:
$$
D^2 f(x)=\begin{bmatrix}\frac{\partial^2 f(x)}{\partial x_1^2} & \cdots & \frac{\partial^2 f(x)}{\partial x_1\partial x_n}\\
\vdots&\ddots&\vdots\\
\frac{\partial^2 f(x)}{\partial x_n\partial x_1} & \cdots & \frac{\partial^2 f(x)}{\partial x_n^2}
\end{bmatrix}
$$
for each $x\in A$.

Let $f:A\subset\mathbb{R}^m\rightarrow \mathbb{R}^n$ be a function such that its first-order partial derivatives exist.
For each $x\in A$, let $Jf(x)$ be the Jacobian matrix of $f$ at $x$:
$$
Jf(x)=\begin{bmatrix}\frac{\partial f_1(x)}{\partial x_1} & \cdots & \frac{\partial f_1(x)}{\partial x_m}\\
\vdots&\ddots&\vdots\\
\frac{\partial f_n(x)}{\partial x_1} & \cdots & \frac{\partial f_n(x)}{\partial x_m}
\end{bmatrix}.
$$

For a positive integer $n$, let $I_n$ denote the $n\times n$ identity matrix.

\subsection{Differential Geometry}\label{section:differential-geometry}

We provide some concepts and facts from differential geometry of twice continuously differentiable sets, following \cite{Crasta2007SDF}.
Let $A\subset \mathbb{R}^p$ be a twice continuously differentiable set.
%For each $x\in \partial A$, we denote by $T_A(x)\subset \mathbb{R}^p$ the tangent space of $\partial A$ at $x$ and by $\nu_A(x)\in \mathbb{R}^p$ the inward unit normal vector of $\partial A$ at $x$, that is, the unit vector orthogonal to all vectors in $T_A(x)$ that points toward the inside of $A$.
For each $x\in \partial A$, we denote by $\nu_A(x)\in \mathbb{R}^p$ the inward unit normal vector of $\partial A$ at $x$, that is, the unit vector orthogonal to all vectors in the tangent space of $\partial A$ at $x$ that points toward the inside of $A$.
For a set $A\subset\mathbb{R}^p$, let $d_A^s:\mathbb{R}^p\rightarrow \mathbb{R}$ be the signed distance function of $A$, defined by
\begin{align*}
	d_A^s(x) = \begin{cases}
		d(x,\partial A) & \ \ \ \text{if $x\in {\rm cl}(A)$}\\
		-d(x,\partial A) & \ \ \ \text{if $x\in \mathbb{R}^p\setminus {\rm cl}(A)$},
	\end{cases}
\end{align*}
where $d(x,B)=\inf_{y\in B} \|y-x\|$ for any $x\in \mathbb{R}^p$ for a set $B\subset \mathbb{R}^p$.
Note that we can write $N(\partial A,\delta)=\{x\in \mathbb{R}^p:-\delta<d_A^s(x)<\delta\}$ for $\delta>0$.
Lastly, let $\Pi_{\partial A}(x)=\{y\in\partial A:\|y-x\|=d(x,\partial A)\}$ be the set of projections of $x$ on $\partial A$.
\begin{lemma}[Corollary of Theorem 4.16, \cite{Crasta2007SDF}]\label{lemma:Crasta2007SDF-C2}
	Let $A\subset\mathbb{R}^p$ be nonempty, bounded, open, connected and twice continuously differentiable.
	Then the function $d_A^s$ is twice continuously differentiable on $N(\partial A,\mu)$ for some $\mu>0$.
	In addition, for every $x_0\in \partial A$, $\Pi_{\partial A}(x_0+t\nu_A(x_0))=\{x_0\}$ for every $t\in(-\mu,\mu)$.
	Furthermore, for every $x\in N(\partial A,\mu)$, $\Pi_{\partial A}(x)$ is a singleton, $\nabla d_A^s(x)=\nu_A(y)$ and $x=y+d_A^s(x)\nu_A(y)$ for $y\in \Pi_{\partial A}(x)$, and $\|\nabla d_A^s(x)\|=1$.
\end{lemma}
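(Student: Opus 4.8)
The plan is to obtain the lemma as a two-sided consequence of Theorem 4.16 of \cite{Crasta2007SDF}, which governs the \emph{one-sided} distance to a $C^2$ boundary, and then to glue the interior and exterior pieces into a single $C^2$ function across $\partial A$. First I would specialize the cited theorem to the Euclidean setting by taking the Minkowski gauge of \cite{Crasta2007SDF} to be the Euclidean norm (so that the dual norm is again Euclidean and the associated normal direction is the ordinary inward unit normal $\nu_A$). Since $A$ is nonempty, bounded, open, connected and twice continuously differentiable, $\partial A$ is a compact $C^2$ hypersurface, so Theorem 4.16 applies to $A$ and yields a collar width $\mu_{\mathrm{in}}>0$ on which, for $x\in N(\partial A,\mu_{\mathrm{in}})\cap A$, the nearest-point projection $\Pi_{\partial A}(x)$ is a singleton $\{y\}$, the unsigned distance $d(\cdot,\partial A)$ is $C^2$, $\nabla d(x)=\nu_A(y)$, $\|\nabla d(x)\|=1$, and $x=y+d(x)\nu_A(y)$.

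Next I would apply the same theorem to the exterior open set $\mathbb{R}^p\setminus{\rm cl}(A)$, whose boundary is again $\partial A$ (restricting to a bounded collar, or intersecting with a large ball containing $A$, so that the boundedness hypothesis is met; only the behaviour near the compact set $\partial A$ matters). This produces a collar width $\mu_{\mathrm{out}}>0$ with the analogous one-sided conclusions: the relevant inward normal of the exterior is $-\nu_A$, and since $d_A^s=-d(\cdot,\partial A)$ outside ${\rm cl}(A)$, the two sign flips cancel and give, on $N(\partial A,\mu_{\mathrm{out}})\setminus{\rm cl}(A)$, that $\Pi_{\partial A}$ is single-valued, $d_A^s$ is $C^2$, $\nabla d_A^s(x)=\nu_A(y)$, $\|\nabla d_A^s(x)\|=1$, and $x=y+d_A^s(x)\nu_A(y)$. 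Setting $\mu=\min\{\mu_{\mathrm{in}},\mu_{\mathrm{out}}\}$, both one-sided collars coincide with $d_A^s$ and satisfy every displayed identity on their respective open sides of $\partial A$.

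The substantive remaining step is to show that these two $C^2$ pieces join into a single $C^2$ function across $\partial A$ itself, where the unsigned distance has a corner and is merely Lipschitz. I would argue this through the normal coordinate map $\Phi(y,t)=y+t\nu_A(y)$, which by the previous two steps is a bijection of $\partial A\times(-\mu,\mu)$ onto $N(\partial A,\mu)$ with $d_A^s(\Phi(y,t))=t$; because $\partial A$ is $C^2$ the normal field $\nu_A$ is $C^1$, so $\Phi$ is a $C^1$ diffeomorphism and $d_A^s$ is at least $C^1$ across $\partial A$ with $\nabla d_A^s(\Phi(y,t))=\nu_A(y)$, matching the one-sided gradients at $t=0$. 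To upgrade to $C^2$ I would compute $D^2 d_A^s$ in these coordinates and exhibit its entries as continuous functions of $(y,t)$ built from the second fundamental form of $\partial A$ at $y$: along the normal direction the second derivative vanishes, while the tangential block has eigenvalues of the form $-\kappa_i(y)/(1-t\kappa_i(y))$, where $\kappa_i$ are the principal curvatures. These are continuous through $t=0$ precisely because the $\kappa_i$ are continuous on the $C^2$ surface $\partial A$, so $D^2 d_A^s$ extends continuously across $\partial A$ and hence $d_A^s\in C^2(N(\partial A,\mu))$.

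Finally the two explicit geometric claims follow from the normal representation: for $x_0\in\partial A$ the point $x_0+t\nu_A(x_0)=\Phi(x_0,t)$ lies in $N(\partial A,\mu)$ for $t\in(-\mu,\mu)$ and has unique projection $x_0$, giving $\Pi_{\partial A}(x_0+t\nu_A(x_0))=\{x_0\}$; and single-valuedness of $\Pi_{\partial A}$, the identity $x=y+d_A^s(x)\nu_A(y)$, and $\|\nabla d_A^s(x)\|=1$ on all of $N(\partial A,\mu)$ are exactly the collated one-sided conclusions, with the boundary case $x\in\partial A$ covered by continuity of the extended derivatives. The main obstacle is the gluing across $\partial A$: the unsigned distance is not differentiable there, so the full strength of the $C^2$ (rather than merely $C^1$) boundary hypothesis is needed to make the Hessian extend continuously through the boundary, and verifying this curvature formula together with its continuity is the delicate part of the argument.
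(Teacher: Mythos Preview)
Your argument is mathematically sound, but it rests on a misreading of the cited source and consequently takes a longer route than the paper. Theorem~4.16 of \cite{Crasta2007SDF} is \emph{not} a one-sided statement about the unsigned distance: in the Euclidean specialization ($\rho_K=\rho_{K_0}=\|\cdot\|$) it delivers directly that the \emph{signed} distance $d_A^s$ is $C^2$ on a full tubular neighborhood $N(\partial A,\mu)$. The paper simply invokes this, so no gluing across $\partial A$ is needed at all. For the remaining assertions the paper again quotes \cite{Crasta2007SDF}: Proposition~4.6 (specifically~(4.7), whose validity for all $t\in(-\mu,\mu)$ is part of how $\mu$ is chosen in the proof of Theorem~4.16) gives $\Pi_{\partial A}(x_0+t\nu_A(x_0))=\{x_0\}$; Lemma~4.3 gives $\nabla d_A^s(x)=\nu_A(\pi_{\partial A}(x))$ off the boundary; and Proposition~3.3(i) gives the representation $x=\pi_{\partial A}(x)+d_A^s(x)\nu_A(\pi_{\partial A}(x))$ after the sign bookkeeping $\delta_{\partial A}\nabla\delta_{\partial A}=d_A^s\nabla d_A^s$.

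Your alternative---applying the theorem separately on each side and then gluing via the normal-coordinate map $\Phi(y,t)=y+t\nu_A(y)$ and the curvature formula with tangential eigenvalues $-\kappa_i/(1-t\kappa_i)$---is the classical self-contained approach and would work. Its advantage is that it does not depend on locating the auxiliary results 3.3, 4.3, 4.6 in \cite{Crasta2007SDF}; its cost is the extra verification that $\Phi$ is a $C^1$ diffeomorphism and that the Hessian extends continuously through $t=0$, together with some hand-waving (which you acknowledge) to fit the exterior domain into the boundedness/connectedness hypotheses. The paper avoids all of that by reading the signed-distance regularity straight out of Theorem~4.16.
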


\begin{proof}
	We apply results from \cite{Crasta2007SDF}.
	%Note first that $N(\partial A,\delta)=\{x\in \mathbb{R}^p:-\delta<d_A^s(x)<\delta\}$ for $\delta>0$.
	Let $K=\{x\in\mathbb{R}^p:\|x\|\le 1\}$.
	$K$ is nonempty, compact, convex subset of $\mathbb{R}^p$ with the origin as an interior point.
	The polar body of $K$, defined as $K_0=\{y\in \mathbb{R}^p:y\cdot x\le 1\text{ for all $x\in K$}\}$, is $K$ itself.
	The gauge functions $\rho_K, \rho_{K_0}:\mathbb{R}^p\rightarrow [0, \infty]$ of $K$ and $K_0$ are given by
	\begin{align*}
	\rho_K(x) &\coloneqq {\rm inf}\{t\ge 0:x\in tK\} =\|x\|,\\
	\rho_{K_0}(x) &\coloneqq {\rm inf}\{t\ge 0:x\in tK_0\} =\|x\|.
	\end{align*}
	Given $\rho_{K_0}$, the Minkowski distance from a set $S\subset\mathbb{R}^p$ is defined as
	$$
	\delta_S(x)\coloneqq\inf_{y\in S}\rho_{K_0}(x-y),~~~x\in\mathbb{R}^p.
	$$
	Note that we can write
	\begin{align*}
		d_A^s(x) = \begin{cases}
			\delta_{\partial A}(x)& \ \ \ \text{if $x\in {\rm cl}(A)$}\\
			-\delta_{\partial A}(x)& \ \ \ \text{if $x\in \mathbb{R}^p\setminus {\rm cl}(A)$}.
		\end{cases}
	\end{align*}
	%the signed distance function $d_A^s$ is regarded as the Minkowski distance from the boundary of $A$ in terms of $\rho_{K_0}$.
	It then follows from Theorem 4.16 of \cite{Crasta2007SDF} that $d_A^s$ is twice continuously differentiable on $N(\partial A,\mu)$ for some $\mu>0$, and for every $x_0\in \partial A$,
	\begin{align*}
		\nabla d_A^s(x_0)&=\frac{\nu_A(x_0)}{\rho_K(\nu_A(x_0))}\\
		&=\frac{\nu_A(x_0)}{\|\nu_A(x_0)\|}\\
		&=\nu_A(x_0),
	\end{align*}
	where the last equality follows since $\nu_A(x_0)$ is a unit vector.
	It then follows that $\|\nabla d_A^s(x_0)\|=\|\nu_A(x_0)\|=1$ for every $x_0\in \partial A$.
	Also, it is obvious that, for every $x_0\in \partial A$, $\Pi_{\partial A}(x_0)=\{x_0\}$ and $x_0=x_0+d_A^s(x_0)\nu_A(x_0)$, since $d_A^s(x_0)=0$.
	In addition, as stated in the proof of Theorem 4.16 of \cite{Crasta2007SDF}, $\mu$ is chosen so that (4.7) in Proposition 4.6 of \cite{Crasta2007SDF} holds for every $x_0\in\partial A$ and every $t\in (-\mu,\mu)$.
	That is, $\Pi_{\partial A}(x_0+t\nabla \rho_K(\nu_A(x_0)))=\{x_0\}$ for every $x_0\in\partial A$ and every $t\in (-\mu,\mu)$.
	Since $\nabla \rho_K(\nu_A(x_0))=\frac{\nu_A(x_0)}{\|\nu_A(x_0)\|}=\nu_A(x_0)$, $\Pi_{\partial A}(x_0+t\nu_A(x_0))=\{x_0\}$ for every $x_0\in\partial A$ and every $t\in (-\mu,\mu)$.
	
	Furthermore, for every $x\in N(\partial A,\mu)\setminus \partial A$, $\Pi_{\partial A}(x)$ is a singleton as shown in the proof of Theorem 4.16 of \cite{Crasta2007SDF}.
	Let $\pi_{\partial A}(x)$ be the unique element in $\Pi_{\partial A}(x)$.
	By Lemma 4.3 of \cite{Crasta2007SDF}, for every $x\in N(\partial A,\mu)\setminus \partial A$,
	\begin{align*}
		\nabla d_A^s(x)&=\frac{\nu_A(\pi_{\partial A}(x))}{\rho_K(\nu_A(\pi_{\partial A}(x)))}\\
		&=\frac{\nu_A(\pi_{\partial A}(x))}{\|\nu_A(\pi_{\partial A}(x))\|}\\
		&=\nu_A(\pi_{\partial A}(x)),
	\end{align*}
	where the last equality follows since $\nu_A(\pi_{\partial A}(x))$ is a unit vector.
	It then follows that $\|\nabla d_A^s(x)\|=\|\nu_A(\pi_{\partial A}(x))\|=1$ for every $x\in N(\partial A,\mu)\setminus \partial A$.
	
	Lastly, note that
	\begin{align*}
		\delta_{\partial A}(x) = \begin{cases}
			d_A^s(x) & \ \ \ \text{if $x\in N(\partial A,\mu)\cap {\rm int}(A)$}\\
			-d_A^s(x) & \ \ \ \text{if $x\in N(\partial A,\mu)\setminus {\rm cl}(A)$},
		\end{cases}
	\end{align*}
	and
	\begin{align*}
		\nabla \delta_{\partial A}(x) = \begin{cases}
			\nabla d_A^s(x) & \ \ \ \text{if $x\in N(\partial A,\mu)\cap {\rm int}(A)$}\\
			-\nabla d_A^s(x) & \ \ \ \text{if $x\in N(\partial A,\mu)\setminus {\rm cl}(A)$},
		\end{cases}
	\end{align*}
	so $\delta_{\partial A}(x)\nabla \delta_{\partial A}(x)= d_A^s(x)\nabla d_A^s(x)=d_A^s(x)\nu_A(\pi_{\partial A}(x))$ for every $x\in N(\partial A,\mu)\setminus \partial A$.
	By Proposition 3.3 (i) of \cite{Crasta2007SDF}, for every $x\in N(\partial A,\mu)\setminus \partial A$,
	\begin{align*}
	\nabla \rho_K(\nabla \delta_{\partial A}(x))&=\frac{x-\pi_{\partial A}(x)}{\delta_{\partial A}(x)},
	\end{align*}
	which implies that
	\begin{align*}
		x&=\pi_{\partial A}(x)+\delta_{\partial A}(x)\nabla \rho_K(\nabla \delta_{\partial A}(x))\\
		&=\pi_{\partial A}(x)+\delta_{\partial A}(x)\frac{\nabla \delta_{\partial A}(x)}{\|\nabla \delta_{\partial A}(x)\|}\\
		&=\pi_{\partial A}(x)+d_A^s(x)\nu_A(\pi_{\partial A}(x)).
	\end{align*}
	\end{proof}

We say that a set $A\subset \mathbb{R}^n$ is a \textit{$m$-dimensional $C^1$ submanifold of $\mathbb{R}^{n}$} if for every point $x\in A$, there exist an open neighborhood $V\subset \mathbb{R}^n$ of $x$ and a one-to-one continuously differentiable function $\phi$ from an open set $U\subset \mathbb{R}^{m}$ to $\mathbb{R}^n$ such that the Jacobian matrix $J\phi(u)$ is of rank $m$ for all $u\in U$, and $\phi(U)=V\cap A$.
\begin{lemma}\label{lemma:C1-submanifold}
	Let $A\subset\mathbb{R}^p$ be nonempty, bounded, open, connected and twice continuously differentiable.
	Then $\partial A$ is a $(p-1)$-dimensional $C^1$ submanifold of $\mathbb{R}^{p}$,
\end{lemma}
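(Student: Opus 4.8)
\emph{Proof plan.} I would deduce the statement from Lemma~\ref{lemma:Crasta2007SDF-C2} together with the implicit function theorem, viewing $\partial A$ as a regular level set of the signed distance function $d_A^s$. (An equally valid route is to argue directly from the definition of a twice continuously differentiable set, restricting the local straightening chart $\psi$ to the hyperplane $\{x_p=0\}$; I remark on the one subtlety of that route at the end.)

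First I would apply Lemma~\ref{lemma:Crasta2007SDF-C2} to $A$ (which is nonempty, bounded, open, connected, and twice continuously differentiable) to get $\mu>0$ such that $d_A^s$ is twice continuously differentiable on the open tube $N(\partial A,\mu)$ and $\|\nabla d_A^s(x)\|=1$, hence $\nabla d_A^s(x)\neq 0$, for every $x\in N(\partial A,\mu)$. Since $d_A^s(x)=0$ precisely when $d(x,\partial A)=0$, i.e. precisely when $x\in\partial A$ (the boundary being closed), $\partial A$ is exactly the zero set of the $C^1$ function $d_A^s$ on the open set $N(\partial A,\mu)$, and $d_A^s$ is a submersion there. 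Then, fixing an arbitrary $x\in\partial A$ and, after permuting coordinates, assuming $\partial d_A^s/\partial x_p(x)\neq 0$, the implicit function theorem supplies an open set $U\subset\mathbb{R}^{p-1}$ containing $(x_1,\dots,x_{p-1})$, an open interval $I$ containing $x_p$ with $U\times I\subset N(\partial A,\mu)$, and a $C^1$ (indeed $C^2$) map $g:U\to I$ such that $\partial A\cap(U\times I)=\{(u,g(u)):u\in U\}$. Setting $V\coloneqq U\times I$ and $\phi(u)\coloneqq(u,g(u))$, the map $\phi:U\to\mathbb{R}^p$ is $C^1$, one-to-one (its first $p-1$ coordinates recover $u$), satisfies $\phi(U)=V\cap\partial A$, and has Jacobian $J\phi(u)=\bigl[ \begin{smallmatrix} I_{p-1}\\ \nabla g(u)' \end{smallmatrix} \bigr]$ of rank $p-1$ at every $u\in U$. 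This is exactly a submanifold chart around $x$ in the sense of the definition preceding the lemma, and since $x\in\partial A$ was arbitrary, $\partial A$ is a $(p-1)$-dimensional $C^1$ submanifold of $\mathbb{R}^p$.

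I do not expect a real obstacle here: once Lemma~\ref{lemma:Crasta2007SDF-C2} is in hand the statement is almost immediate. The only point that calls for mild care appears in the alternative, chart-based argument, where one must check that the straightening map $\psi$ sends $B(x,\epsilon)\cap\partial A$ \emph{onto}, not merely into, the slice $D\cap\{x_p=0\}$. This is handled by first shrinking $D$ to a ball centered at $\psi(x)$, so that the ``positive side'' $D\cap\{x_p>0\}$ is a convex (hence connected) half-ball; since $\psi$ is a homeomorphism, the image of $B(x,\epsilon)\cap A$ is then an open subset of this half-ball whose relative boundary lies in $\{x_p=0\}$, and connectedness forces it to be the entire half-ball, so the image of the boundary is the entire slice. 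Restricting $\psi^{-1}$ to that slice then yields the required chart, the rank condition following from invertibility of $J\psi^{-1}$.
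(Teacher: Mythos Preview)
Your proposal is correct and follows essentially the same approach as the paper: both invoke Lemma~\ref{lemma:Crasta2007SDF-C2} to obtain that $d_A^s$ is $C^2$ with nonvanishing gradient on a tube around $\partial A$, then realize $\partial A$ as a regular level set and produce local graph charts. The only cosmetic difference is that the paper first applies the Inverse Function Theorem to the straightening map $\psi(x)=(x_1,\dots,x_{p-1},d_A^s(x))$ and then restricts (using the Implicit Function Theorem at the end to check that the parameter domain is open), whereas you apply the Implicit Function Theorem directly to $d_A^s$; your route is marginally more streamlined but not substantively different.
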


\begin{proof}
	Fix any $x^*\in \partial A$.
	By Lemma \ref{lemma:Crasta2007SDF-C2}, $\nabla d_{A}^s(x^*)$ is nonzero.
	Without loss of generality, let $\frac{\partial d_ {A}^s(x^*)}{\partial x_p}\neq 0$.
	Let $\psi: \mathbb{R}^p\rightarrow \mathbb{R}^p$ be the function such that $\psi(x)=(x_1,...,x_{p-1},d_ {A}^s(x))$.
	$\psi$ is continuously differentiable, and 
	the Jacobian matrix of $\psi$ at $x^*$ is given by
	\begin{align*}
		J\psi(x^*)=
		\begin{pmatrix}
			\frac{\partial \psi_1}{\partial x_1}(x^*) & \cdots& \frac{\partial \psi_1}{\partial x_p}(x^*)\\
			\vdots &\ddots &\vdots\\
			\frac{\partial \psi_p}{\partial x_1}(x^*) & \cdots& \frac{\partial \psi_p}{\partial x_p}(x^*)
		\end{pmatrix}
		=
		\begin{pmatrix}
			&  & & 0\\
			& I_{p-1} &  &\vdots\\
			&  & &0\\
			\frac{\partial d_ {A}^s(x^*)}{\partial x_1} & \cdots &\frac{\partial d_ {A}^s(x^*)}{\partial x_{p-1}}& \frac{\partial d_ {A}^s(x^*)}{\partial x_p}
		\end{pmatrix}.
	\end{align*}
	%where $I_{p-1}$ is the $(p-1)\times (p-1)$ identity matrix.
	Since $\frac{\partial d_ {A}^s(x^*)}{\partial x_p}\neq 0$, the Jacobian matrix is invertible.
	By the Inverse Function Theorem, there exist an open set $V$ containing $x^*$ and an open set $W$ containing $\psi(x^*)$ such that $\psi: V\rightarrow W$ has an inverse function $\psi^{-1}: W\rightarrow V$ that is continuously differentiable.
	We make $V$ small enough so that $\frac{\partial d_ {A}^s(x)}{\partial x_p}\neq 0$ for every $x\in V$.
	The Jacobian matrix of $\psi^{-1}$ is given by $J\psi^{-1}(y)=J\psi(\psi^{-1}(y))^{-1}$ for all $y\in W$.
	
	Now note that $\psi(x)=(x_1,...,x_{p-1},0)$ for all $x\in V\cap \partial  A$ by the definition of $d_ {A}^s$.
	Let $U=\{(x_1,...,x_{p-1})\in \mathbb{R}^{p-1}: x\in V\cap \partial  A\}$ and $\phi:U\rightarrow \mathbb{R}^p$ be a function such that $\phi(u)=\psi^{-1}((u,0))$ for all $u\in U$.
	Below we verify that $\phi$ is one-to-one and continously differentiable, that $J\phi(u)$ is of rank $p-1$ for all $u\in U$, that $\phi(U)=V\cap \partial  A$, and that $U$ is open.
	
	First, $\phi$ is one-to-one, since $\psi^{-1}$ is one-to-one, and $(u,0)\neq (u',0)$ if $u\neq u'$.
	Second, $\phi$ is continuously differentiable, since $\psi^{-1}$ is so.
	The Jacobian matrix of $\phi$ at $u\in U$ is by definition
	\begin{align*}
		J\phi(u)=
		\begin{pmatrix}
			\frac{\partial \psi_1^{-1}}{\partial y_1}((u,0)) & \cdots& \frac{\partial \psi_1^{-1}}{\partial y_{p-1}}((u,0))\\
			\vdots &\ddots &\vdots\\
			\frac{\partial \psi_p^`-1}{\partial y_1}((u,0)) & \cdots& \frac{\partial \psi_p^{-1}}{\partial y_{p-1}}((u,0))
		\end{pmatrix}.
	\end{align*}
	Note that this is the left $p\times (p-1)$ submatrix of $J\psi^{-1}((u,0))$.
	Since $J\psi^{-1}((u,0))$ has full rank, $J\phi(u)$ is of rank $p-1$.
	Moreover,
	\begin{align*}
		\phi(U)&=\{\psi^{-1}((u,0)):u\in U\}\\
		&=\{\psi^{-1}((x_1,...,x_{p-1},0)): x\in V\cap \partial  A\}\\
		&=\{\psi^{-1}(\psi(x)): x\in V\cap \partial  A\}\\
		&=V\cap \partial  A.
	\end{align*}
	
	Lastly, we show that $U$ is open.
	Pick any $\bar u\in U$.
	Then, there exists $\bar x_p\in\mathbb{R}$ such that $(\bar u,\bar x_p)\in V\cap \partial  A$.
	As $(\bar u,\bar x_p)\in V\cap \partial  A$, $d_ {A}^s((\bar u,\bar x_p))=0$.
	Since $\frac{\partial d_ {A}^s((\bar u,\bar x_p))}{\partial x_p}\neq 0$, it follows by the Implicit Function Theorem that there exist an open set $S\subset \mathbb{R}^{p-1}$ containing $\bar u$ and a continuously differentiable function $g: S\rightarrow \mathbb{R}$ such that $g(\bar u)=\bar x_p$ and $d_ {A}^s(u,g(u))=0$ for all $u\in S$.
	Since $g$ is continuous, $(\bar u, g(\bar u))\in V$ and $V$ is open, there exists an open set $S'\subset S$ containing $\bar u$ such that $(u,g(u))\in V$ for all $u\in S'$.
	By the definition of $d_ {A}^s$, $d_ {A}^s(x)=0$ if and only if $x\in \partial  A$.
	Therefore, if $u\in S'$, $(u,g(u))$ must be contained by $\partial  A$, for otherwise $d_ {A}^s(u,g(u))\neq 0$, which is a contradiction.
	Thus, $(u,g(u))\in V\cap \partial  A$ and hence $u\in U$ for all $u\in S'$.
	This implies that $S'$ is an open subset of $U$ containing $\bar u$, which proves that $U$ is open.
\end{proof}

\subsection{Geometric Measure Theory}\label{section:geometric-theory}

We provide some concepts and facts from geometric measure theory, following \cite{Krantz2008book}.
Recall that for a function $f:A\subset \mathbb{R}^m\rightarrow \mathbb{R}^n$ and a point $x\in A$ at which $f$ is differentiable, $Jf(x)$ denotes the Jacobian matrix of $f$ at $x$.

\begin{lemma}[Coarea Formula, Lemma 5.1.4 and Corollary 5.2.6 of \cite{Krantz2008book}]\label{lemma:coarea}
	If $f:\mathbb{R}^m\rightarrow \mathbb{R}^n$ is a Lipschitz function and $m\ge n$, then
	$$
	\int_Ag(x)J_nf(x)d{\cal L}^m(x) = \int_{\mathbb{R}^n}\int_{\{x'\in A:f(x')=y\}}g(x)d{\cal H}^{m-n}(x)d{\cal L}^n(y)
	$$
	for every Lebesgue measurable subset $A$ of $\mathbb{R}^m$ and every ${\cal L}^m$-measurable function $g:A\rightarrow \mathbb{R}$, where for each $x\in \mathbb{R}^m$ at which $f$ is differentiable,
	$$
	J_n f(x) = \sqrt{{\rm det}((Jf(x))(Jf(x))')}.
	$$
\end{lemma}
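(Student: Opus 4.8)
The plan is to follow the classical route to Federer's coarea formula, reducing the general Lipschitz statement to a linear-algebraic identity and then globalizing by a linearization-and-partition argument. First I would dispose of the linear case: for a linear map $L:\mathbb{R}^m\rightarrow\mathbb{R}^n$, if $L$ is not surjective then $J_nL=\sqrt{\det(LL')}=0$ and simultaneously the image $L(\mathbb{R}^m)$ is $\mathcal{L}^n$-null, so the right-hand side also vanishes for $\mathcal{L}^n$-a.e.\ $y$; if $L$ is surjective I would use the singular value decomposition to write $L$ as a coordinate projection $\pi:\mathbb{R}^m\rightarrow\mathbb{R}^n$ pre-composed with a rotation and post-composed with an invertible map on $\mathbb{R}^n$. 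For the projection the level sets are affine $(m-n)$-planes, Fubini's theorem gives the identity directly with $J_n\pi=1$, and tracking the orthogonal and scaling factors through the change of variables recovers the general factor $J_nL=\sqrt{\det(LL')}$.

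Next I would pass from linear to $C^1$ maps. On the open set where $Jf(x)$ has full rank $n$, the map is a submersion, so by the implicit function theorem one can, on a small ball, introduce $C^1$ coordinates in which $f$ becomes a coordinate projection. Applying the linear case in these coordinates and absorbing the Jacobian of the coordinate change via the ordinary change-of-variables formula yields the coarea identity locally. A countable partition of $\{J_nf>0\}$ into such coordinate patches, together with monotone convergence, assembles the local identities into the global one for $C^1$ maps, proved first for $g$ an indicator, then for simple functions, and finally for arbitrary $\mathcal{L}^m$-measurable $g$.

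I would then reduce the Lipschitz case to the $C^1$ case using Rademacher's theorem (a Lipschitz $f$ is differentiable $\mathcal{L}^m$-a.e.) together with the Whitney/Lusin-type approximation that decomposes $A$, up to an $\mathcal{L}^m$-null set, into countably many pieces $A_k$ on each of which $f$ agrees with a $C^1$ function sharing the same derivative. Both sides of the claimed identity are countably additive in $A$, so it suffices to verify the formula on each $A_k$, where the $C^1$ case already applies, and to show that the leftover null set contributes nothing to either side.

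The main obstacle is precisely this last step, combined with the treatment of the critical set $\{J_nf=0\}$: I must show that for $\mathcal{L}^n$-a.e.\ $y$ the fiber $f^{-1}(y)$ meets both the exceptional null set and the critical set in an $\mathcal{H}^{m-n}$-negligible set. The key technical device is the Eilenberg (coarea) inequality, $\int_{\mathbb{R}^n}\mathcal{H}^{m-n}(E\cap f^{-1}(y))\,d\mathcal{L}^n(y)\le c\,(\mathrm{Lip}\,f)^n\,\mathcal{H}^m(E)$, which forces the fiberwise contribution of any $\mathcal{L}^m$-null set $E$ to vanish for a.e.\ $y$; on the critical set I would pair this with a Sard-type argument establishing that $f(\{J_nf=0\})$ is $\mathcal{L}^n$-null, so that the right-hand side there integrates to zero and matches the vanishing left-hand side. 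Getting the measure-theoretic bookkeeping between $\mathcal{H}^{m-n}$ on fibers and $\mathcal{L}^n$ on the range exactly right across these exceptional sets is the delicate part of the argument.
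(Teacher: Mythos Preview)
The paper does not prove this lemma at all: it is stated as a citation of Lemma~5.1.4 and Corollary~5.2.6 of \cite{Krantz2008book} and used as a black box in the proof of Lemma~\ref{lemma:neighborhood-integral}. There is therefore no ``paper's own proof'' to compare against; the authors simply import the coarea formula as a known tool from geometric measure theory.

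Your outline is the standard textbook route (linear case via SVD and Fubini, local $C^1$ case via the submersion/implicit function theorem, globalization by countable patching, and reduction from Lipschitz to $C^1$ via Rademacher plus a Whitney--Lusin decomposition, with the Eilenberg inequality and a Sard-type argument handling the null and critical sets). As a sketch it is faithful to how the result is proved in references such as Federer, Evans--Gariepy, or Krantz--Parks, and nothing in it is wrong in spirit. If you intend this as more than a pointer to the literature, the places that would need real work are exactly the ones you flag: the Eilenberg inequality and the Sard-type statement for Lipschitz maps are themselves nontrivial, and the measurability of $y\mapsto \mathcal{H}^{m-n}(A\cap f^{-1}(y))$ must be established before integrating it. For the purposes of this paper, however, citing the result as the authors do is entirely appropriate.
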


Let $A$ be an $m$-dimensional $C^1$ submanifold of $\mathbb{R}^n$.
Let $x\in A$ and let $\phi: U\subset \mathbb{R}^m\rightarrow \mathbb{R}^n$ be as in the definition of $m$-dimensional $C^1$ submanifold.
We denote by $T_A(x)$ the tangent space of $A$ at $x$, $\{J\phi(u) v: v\in \mathbb{R}^m\}$, where $u=\phi^{-1}(x)$.

\begin{lemma}[Area Formula, Lemma 5.3.5 and Theorem 5.3.7 of \cite{Krantz2008book}]\label{lemma:area}
	Suppose $m\le \nu$ and $f:\mathbb{R}^n\rightarrow \mathbb{R}^\nu$ is Lipschitz.
	If $A$ is an $m$-dimensional $C^1$ submanifold of $\mathbb{R}^n$, then
	$$
	\int_Ag(x)J_m^Af(x)d{\cal H}^m(x) = \int_{\mathbb{R}^\nu}\sum_{x\in A:f(x)=y}g(x)d{\cal H}^m(y)
	$$
	for every ${\cal H}^m$-measurable function $g:A\rightarrow \mathbb{R}$, where for each $x\in \mathbb{R}^n$ at which $f$ is differentiable,
	\begin{align*}
		J_m^Af(x)=\frac{{\cal H}^m(\{Jf(x)y:y\in P\})}{{\cal H}^m(P)}
	\end{align*}
	for an arbitrary $m$-dimensional parallelepiped $P$ contained in $T_A(x)$.
\end{lemma}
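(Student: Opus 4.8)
The plan is to reduce the statement to the classical area formula for Lipschitz maps defined on an open subset of Euclidean space (the special case in which $A$ is itself open in $\mathbb{R}^m$, which is the version established in \cite{Krantz2008book}), and then to patch the local identities together with a partition of unity subordinate to a countable $C^1$ atlas of $A$. Since $A\subset\mathbb{R}^n$ is second countable, it admits countably many charts $\phi_k:U_k\subset\mathbb{R}^m\rightarrow \mathbb{R}^n$ as in the definition of a $C^1$ submanifold, with $\phi_k(U_k)=V_k\cap A$ and $J\phi_k(u)$ of rank $m$; choose a partition of unity $\{\rho_k\}$ on $A$ subordinate to $\{V_k\}$. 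Writing $g=\sum_k \rho_k g$ and invoking Tonelli's theorem to interchange the sum with the integrals (first for $g\ge 0$, then for general ${\cal H}^m$-measurable $g$ by splitting into positive and negative parts), it suffices to prove the formula when $g$ is supported in a single chart image $\phi(U)$ with $\phi=\phi_k$, $U=U_k$.

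Fix such a chart and set $F\coloneqq f\circ\phi:U\rightarrow\mathbb{R}^\nu$, which is Lipschitz as the composition of a $C^1$ map and a Lipschitz map, with $JF(u)=Jf(\phi(u))J\phi(u)$ by the chain rule. The algebraic heart of the proof is the identity
\begin{align*}
J_m^Af(\phi(u))\,J_m\phi(u)=J_m F(u)\qquad\text{for all }u\in U.
\end{align*}
To see this, take any $m$-dimensional parallelepiped $Q\subset\mathbb{R}^m$ and set $P\coloneqq\{J\phi(u)v:v\in Q\}\subset T_A(\phi(u))$. Then ${\cal H}^m(P)=J_m\phi(u)\,{\cal L}^m(Q)$, while $\{Jf(\phi(u))y:y\in P\}=\{JF(u)v:v\in Q\}$ has ${\cal H}^m$-measure $J_m F(u)\,{\cal L}^m(Q)$; dividing gives $J_m^Af(\phi(u))=J_m F(u)/J_m\phi(u)$, where $J_m\phi(u)>0$ because $J\phi(u)$ has rank $m$. (This computation also shows that the parallelepiped ratio defining $J_m^Af$ is independent of the choice of $P$, so $J_m^Af$ is well defined and intrinsic.) Using the parametrization formula $\int_{\phi(U)}h\,d{\cal H}^m=\int_U h(\phi(u))\,J_m\phi(u)\,d{\cal L}^m(u)$ (itself the area formula for the injective immersion $\phi$, for which the multiplicity is one), the left-hand side with $g$ supported in $\phi(U)$ becomes $\int_U g(\phi(u))\,J_m^Af(\phi(u))\,J_m\phi(u)\,d{\cal L}^m(u)=\int_U g(\phi(u))\,J_m F(u)\,d{\cal L}^m(u)$.

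Applying the Euclidean area formula to the Lipschitz map $F$ with integrand $u\mapsto g(\phi(u))$ turns this into $\int_{\mathbb{R}^\nu}\sum_{u\in U:F(u)=y}g(\phi(u))\,d{\cal H}^m(y)$. Because $\phi$ is a bijection of $U$ onto $\phi(U)$ and $g$ vanishes off $\phi(U)$, the inner sum equals $\sum_{x\in A:f(x)=y}g(x)$, which yields the claim for single-chart $g$ and, after the partition-of-unity summation, in general. The one genuine obstacle is the Euclidean area formula itself --- it rests on Rademacher's theorem and a delicate covering/measure argument --- but this is exactly the cited ingredient from \cite{Krantz2008book}; the remaining work is the chain-rule and linear-algebra identity above together with the bookkeeping ensuring that the multiplicity sum is neither lost nor double-counted where chart images overlap, which the subordinate partition of unity handles.
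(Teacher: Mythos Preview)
The paper does not actually prove this lemma: it is stated as a citation to \cite{Krantz2008book} (Lemma 5.3.5 and Theorem 5.3.7 there) and used as a black box in the proof of Lemma \ref{lemma:neighborhood-integral}. There is thus no ``paper's own proof'' to compare against. Your localization argument---passing to a countable $C^1$ atlas with a subordinate partition of unity, establishing the multiplicative identity $J_m^Af(\phi(u))\,J_m\phi(u)=J_mF(u)$ via parallelepipeds, and then invoking the Euclidean area formula for the Lipschitz map $F=f\circ\phi$---is the standard route and matches in spirit how such results are proved in textbooks including Krantz's.

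One technical caveat worth flagging: when you write $JF(u)=Jf(\phi(u))J\phi(u)$ ``by the chain rule,'' you are presupposing that the full differential $Jf$ exists at $\phi(u)$. Rademacher's theorem only guarantees this $\mathcal{L}^n$-a.e.\ in $\mathbb{R}^n$, which says nothing about points on an $m$-dimensional submanifold when $m<n$. What your parallelepiped computation actually requires is the \emph{tangential} differential of $f$ along $A$, and that is precisely what $JF(u)$ encodes through the chart; its $\mathcal{L}^m$-a.e.\ existence on $U$ follows from Rademacher applied directly to the Lipschitz map $F$. So the identity and the rest of your argument go through once $J_m^Af$ is read in this tangential sense. (In the paper's sole application, the relevant map $\psi_{\Omega}(\cdot,\lambda)$ is in fact $C^1$, so this subtlety never bites there.) A smaller point: $\phi$ is $C^1$ but not globally Lipschitz on an open $U$; since the partition of unity localizes $g$ to compact subsets of each chart, $F$ is Lipschitz where it matters, but it is worth saying so.
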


Let $A\subset\mathbb{R}^p$.
For each $x\in\mathbb{R}^p$ at which $d_{A}^s$ is differentiable and for each $\lambda\in\mathbb{R}$, let $\psi_A(x,\lambda)=x+\lambda\nabla d_{A}^s(x)$.

\begin{lemma}\label{lemma:neighborhood-integral}
	Let $\Omega\subset \mathbb{R}^p$, and suppose that there exists a partition $\{\Omega_{1},...,\Omega_{M}\}$ of $\Omega$ such that
	\begin{enumerate}[label=(\roman*)]
		\item ${\rm dist}(\Omega_m,\Omega_{m'})>0$ for any $m,m'\in\{1,...,M\}$ such that $m\neq m'$;
		\item $\Omega_m$ is nonempty, bounded, open, connected and twice continuously differentiable for each $m\in \{1,...,M\}$.
	\end{enumerate}
	Then there exists $\mu>0$ such that $d_{\Omega}^s$ is twice continuously differentiable on $N(\partial\Omega,\mu)$ and that
	$$
	\int_{N(\partial \Omega,\delta)}g(x)dx = \int_{-\delta}^{\delta}\int_{\partial \Omega}g(u+\lambda \nu_\Omega(u))J_{p-1}^{\partial\Omega}\psi_\Omega(u,\lambda)d{\cal H}^{p-1}(u)d\lambda
	$$
	for every $\delta\in (0,\mu)$ and every function $g:\mathbb{R}^p\rightarrow \mathbb{R}$ that is integrable on $N(\partial \Omega,\delta)$, where for each fixed $\lambda\in(-\mu,\mu)$, $J_{p-1}^{\partial\Omega}\psi_\Omega(\cdot,\lambda)$ is calculated by applying the operation $J_{p-1}^{\partial \Omega}$ to the function $\psi_\Omega(\cdot,\lambda)$.
	Futhermore, $J_{p-1}^{\partial\Omega}\psi_\Omega(x,\cdot)$ is continuously differentiable in $\lambda$ and $J_{p-1}^{\partial\Omega}\psi_\Omega(x,0)=1$ for every $x\in\partial\Omega$, and  $J_{p-1}^{\partial\Omega}\psi_\Omega(\cdot,\cdot)$ and $\tfrac{\partial J_{p-1}^{\partial\Omega}\psi_\Omega(\cdot,\cdot)}{\partial\lambda}$ are bounded on $\partial\Omega\times (-\mu,\mu)$.
\end{lemma}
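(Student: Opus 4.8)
The plan is to derive the identity as a change of variables $x=u+\lambda\,\nu_\Omega(u)$ with $(u,\lambda)\in\partial\Omega\times(-\delta,\delta)$, implemented by slicing $N(\partial\Omega,\delta)$ into level sets of the signed distance function $d_\Omega^s$ (Coarea, Lemma~\ref{lemma:coarea}) and then reparametrizing each level set by the normal flow off $\partial\Omega$ (Area, Lemma~\ref{lemma:area}). The first step is to pass to a smooth collar. Because ${\rm dist}(\Omega_m,\Omega_{m'})>0$ for $m\neq m'$ and each $\Omega_m$ is bounded, the closures ${\rm cl}(\Omega_m)$, hence the boundaries $\partial\Omega_m$, are pairwise disjoint compact sets with positive mutual distance. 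Choosing $\mu$ below both half that minimal distance and every radius $\mu_m$ that Lemma~\ref{lemma:Crasta2007SDF-C2} produces for the individual $\Omega_m$, one checks that $N(\partial\Omega,\mu)$ is the disjoint union of the $N(\partial\Omega_m,\mu)$ and that $d_\Omega^s=d_{\Omega_m}^s$ on $N(\partial\Omega_m,\mu)$ (the nearest point of $\partial\Omega$ lies in $\partial\Omega_m$, and $x\in{\rm cl}(\Omega)$ iff $x\in{\rm cl}(\Omega_m)$, so the signs agree). Consequently $d_\Omega^s$ is twice continuously differentiable on $N(\partial\Omega,\mu)$ with $\|\nabla d_\Omega^s\|\equiv 1$ and $\nabla d_\Omega^s=\nu_\Omega$ on $\partial\Omega$; every $x\in N(\partial\Omega,\mu)$ has a unique projection $\pi_{\partial\Omega}(x)\in\partial\Omega$ with $x=\pi_{\partial\Omega}(x)+d_\Omega^s(x)\,\nu_\Omega(\pi_{\partial\Omega}(x))$, and $u+t\,\nu_\Omega(u)$ projects back onto $u$ for all $u\in\partial\Omega$, $|t|<\mu$. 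By Lemma~\ref{lemma:C1-submanifold}, each $\partial\Omega_m$, hence $\partial\Omega$, is a $(p-1)$-dimensional $C^1$ submanifold of $\mathbb{R}^p$.

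Next, since $d_\Omega^s$ is globally $1$-Lipschitz and $J_1 d_\Omega^s(x)=\|\nabla d_\Omega^s(x)\|\equiv 1$ on $N(\partial\Omega,\delta)$ for $\delta<\mu$, the Coarea Formula with $f=d_\Omega^s$, $A=N(\partial\Omega,\delta)$, $m=p$, $n=1$ gives $\int_{N(\partial\Omega,\delta)}g\,dx=\int_{-\delta}^{\delta}\bigl(\int_{\Gamma_\lambda}g\,d{\cal H}^{p-1}\bigr)d\lambda$, where $\Gamma_\lambda=\{x:d_\Omega^s(x)=\lambda\}$, using that $\Gamma_\lambda\subset N(\partial\Omega,\delta)$ precisely when $|\lambda|<\delta$. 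For fixed $\lambda\in(-\mu,\mu)$, the normal-flow map $\psi_\Omega(\cdot,\lambda)\colon u\mapsto u+\lambda\,\nu_\Omega(u)$, which extends to a $C^1$ map on $N(\partial\Omega,\mu)$, is a bijection from $\partial\Omega$ onto $\Gamma_\lambda$: it is injective because $u=\pi_{\partial\Omega}(\psi_\Omega(u,\lambda))$ is recovered from the image; it maps into $\Gamma_\lambda$ because $d(u+t\nu_\Omega(u),\partial\Omega)=|t|$ and $d_\Omega^s$ is nonzero with constant sign, equal to ${\rm sgn}\,\lambda$ since $\nu_\Omega$ points inward, along $t\mapsto u+t\nu_\Omega(u)$ for $t$ between $0$ and $\lambda$; and it is onto since any $x\in\Gamma_\lambda$ equals $\pi_{\partial\Omega}(x)+\lambda\nu_\Omega(\pi_{\partial\Omega}(x))$. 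Applying the Area Formula with $A=\partial\Omega$, $m=p-1$, $\nu=p$, and $f$ a global Lipschitz extension of $\psi_\Omega(\cdot,\lambda)$ (which agrees with it on $\partial\Omega$, so values are unaffected) to the integrand $g\circ\psi_\Omega(\cdot,\lambda)$, the multiplicity function is identically $1$ by bijectivity, which yields $\int_{\Gamma_\lambda}g\,d{\cal H}^{p-1}=\int_{\partial\Omega}g(u+\lambda\nu_\Omega(u))\,J_{p-1}^{\partial\Omega}\psi_\Omega(u,\lambda)\,d{\cal H}^{p-1}(u)$. Substituting into the Coarea identity gives the displayed formula.

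For the regularity statements, I would compute the Jacobian factor explicitly. On $N(\partial\Omega,\mu)$ the differential of $\psi_\Omega(\cdot,\lambda)$ is $J\psi_\Omega(\cdot,\lambda)=I_p+\lambda\,D^2 d_\Omega^s$, so its restriction to the tangent space $T_{\partial\Omega}(u)$ (the orthogonal complement of $\nu_\Omega(u)=\nabla d_\Omega^s(u)$) is affine in $\lambda$ with coefficient matrix $D^2 d_\Omega^s(u)$ continuous in $u$. Writing $J_{p-1}^{\partial\Omega}\psi_\Omega(u,\lambda)=\sqrt{\det\!\bigl(M(u,\lambda)'M(u,\lambda)\bigr)}$, where the columns of the $p\times(p-1)$ matrix $M(u,\lambda)$ are the images under $J\psi_\Omega(u,\lambda)$ of an orthonormal basis of $T_{\partial\Omega}(u)$, exhibits it as a polynomial in $\lambda$ of degree at most $p-1$ whose coefficients are continuous in $u$. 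Hence it is $C^\infty$, in particular $C^1$, in $\lambda$; it equals $1$ at $\lambda=0$, where $M(u,0)$ has orthonormal columns; and, $\partial\Omega$ being compact, both it and $\partial_\lambda J_{p-1}^{\partial\Omega}\psi_\Omega$ are bounded on $\partial\Omega\times(-\mu,\mu)$.

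The main obstacle is the middle step: verifying that the normal flow $\psi_\Omega(\cdot,\lambda)$ is a genuine bijection of $\partial\Omega$ onto the correctly-\emph{signed} level set $\Gamma_\lambda$, so that the multiplicity in the Area formula collapses to $1$. This is exactly where the full tubular-neighborhood content of Lemma~\ref{lemma:Crasta2007SDF-C2} is needed (uniqueness of the nearest point, the identity $x=\pi_{\partial\Omega}(x)+d_\Omega^s(x)\nu_\Omega(\pi_{\partial\Omega}(x))$, and $\Pi_{\partial\Omega}(x_0+t\nu_\Omega(x_0))=\{x_0\}$), together with the separation hypothesis to glue the per-piece statements into one about $\partial\Omega$. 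By comparison, the Lipschitz extension required to invoke the Area formula and the bookkeeping that the level sets produced by the Coarea slicing coincide with those parametrized in the Area step are routine.
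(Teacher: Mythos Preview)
Your proposal is correct and follows essentially the same route as the paper: reduce to the pieces via the separation hypothesis and Lemma~\ref{lemma:Crasta2007SDF-C2}, apply the Coarea formula to $d_\Omega^s$ (using $\|\nabla d_\Omega^s\|\equiv 1$), parametrize each level set by the normal flow and invoke the Area formula, and finish with an explicit computation of the tangential Jacobian. The only notable difference is in the last step: your claim that $J_{p-1}^{\partial\Omega}\psi_\Omega(u,\lambda)$ is a polynomial in $\lambda$ implicitly uses $D^2 d_\Omega^s(u)\,\nu_\Omega(u)=0$ (so the tangent space is invariant and $\sqrt{\det(M'M)}=|\det B|$ with $B=I_{p-1}+\lambda V_\Omega' D^2 d_\Omega^s V_\Omega$), and your ``coefficients continuous in $u$'' needs care since a global continuous orthonormal frame need not exist; the paper sidesteps both points by keeping the square root, using the Weinstein--Aronszajn identity, and bounding over the compact set of all orthonormal frames rather than asserting continuity in $u$.
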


\begin{proof}
	Let $\bar\mu=\frac{1}{2}\min_{m,m'\in\{1,...,M\},m\neq m'}{\rm dist}(\Omega^*_m,\Omega_{m'})$ so that $\{N(\partial\Omega_m,\bar\mu)\}_{m=1}^M$ is a partition of $N(\partial\Omega,\bar\mu)$.
	Note that for every $m\in\{1,...,M\}$, $d_{\Omega}^s(x)=d_{\Omega_m}^s(x)$ for every $x\in N(\partial\Omega_m,\bar\mu)$.
	By Lemma \ref{lemma:Crasta2007SDF-C2}, for every $m\in\{1,...,M\}$, there exists $\bar\mu_m>0$ such that $d_{\Omega_m}^s$ is twice continuously differentiable on $N(\partial\Omega_m,\bar\mu_m)$.
	Letting $\mu\in(0,\min\{\bar\mu,\bar\mu_1,...,\bar\mu_M\})$, we have that $d_{\Omega}^s$ is twice continuously differentiable on $N(\partial\Omega,\mu)$.
	This implies that $d_{\Omega}^s$ is Lipschitz on $N(\partial\Omega,\mu)$.
	For every $\delta\in (0,\mu)$ and every function $g:\mathbb{R}^p\rightarrow \mathbb{R}$ that is integrable on $N(\partial \Omega,\delta)$,
	\begin{align}
		\int_{N(\partial \Omega,\delta)}g(x)dx &= \int_{\{x'\in\mathbb{R}^p:d_{\Omega}^s(x')\in (-\delta,\delta)\}}g(x)\sqrt{{\rm det}(\|\nabla d_{\Omega}^s(x)\|)}dx \nonumber\\
		&= \int_{\{x'\in\mathbb{R}^p:d_{\Omega}^s(x')\in (-\delta,\delta)\}}g(x)\sqrt{{\rm det}(\nabla d_{\Omega}^s(x)'\nabla d_{\Omega}^s(x))}dx \nonumber\\
		&= \int_{\{x'\in\mathbb{R}^p:d_{\Omega}^s(x')\in (-\delta,\delta)\}}g(x)\sqrt{{\rm det}((J d_{\Omega}^s(x))(J d_{\Omega}^s(x))')}dx \nonumber\\
		&= \int_{\mathbb{R}}\int_{\{x'\in \mathbb{R}^p:d_{\Omega}^s(x')\in (-\delta,\delta), d_{\Omega}^s(x')=\lambda\}}g(x)d{\cal H}^{p-1}(x)d\lambda \nonumber\\
		&= \int_{-\delta}^{\delta}\int_{\{x'\in \mathbb{R}^p: d_{\Omega}^s(x')=\lambda\}}g(x)d{\cal H}^{p-1}(x)d\lambda, \label{eq:coarea}
	\end{align}
	where the first equality follows since $\|\nabla d_{\Omega}^s(x)\|=1$ for every $x\in N(\partial \Omega,\delta)$ by Lemma \ref{lemma:Crasta2007SDF-C2}, the third equality follows from the definition of the Jacobian matrix, and the fourth equality follows from Lemma \ref{lemma:coarea}.
	
	Let $\Gamma(\lambda)=\{x\in \mathbb{R}^p: d_{\Omega}^s(x)=\lambda\}$ for each $\lambda\in (-\mu,\mu)$.
	Since $\nabla d_{\Omega}^s$ is differentiable on $N(\partial\Omega,\mu)$, $\psi_\Omega(x,\lambda)$ is defined on $N(\partial\Omega,\mu)\times \mathbb{R}$.
	We show that $\{\psi_\Omega(x_0,\lambda):x_0\in\partial\Omega\}\subset\Gamma(\lambda)$ for every $\lambda\in (-\mu,\mu)$.
	By Lemma \ref{lemma:Crasta2007SDF-C2}, for every $x_0\in\partial\Omega$, $\psi_\Omega(x_0,\lambda)=x_0+\lambda \nu_\Omega(x_0)$ and
	\begin{align*}
		\Pi_{\partial \Omega}(\psi_\Omega(x_0,\lambda))
		&=\Pi_{\partial \Omega}(x_0+\lambda \nu_\Omega(x_0))\\
		&=\{x_0\}.
	\end{align*}
	Hence,
	\begin{align*}
		d(\psi_\Omega(x_0,\lambda), \partial \Omega)&=\|\psi_\Omega(x_0,\lambda)-x_0\|\\
		&=\|\lambda \nu_\Omega(x_0)\|\\
		&=|\lambda|.
	\end{align*}
	Since $\nu_\Omega(x_0)$ is an inward normal vector, $\psi_\Omega(x_0,\lambda)\in {\rm cl}(A)$ if $0\le \lambda<\mu$, and $\psi_\Omega(x,\lambda_0)\in \mathbb{R}^p\setminus {\rm cl}(A)$ if $-\mu<\lambda< 0$.
	It follows that
	\begin{align*}
		d_A^s(\psi_\Omega(x_0,\lambda)) &= \begin{cases}
			|\lambda|& \ \ \ \text{if $0\le \lambda <\mu$}\\
			-|\lambda| & \ \ \ \text{if $\mu<\lambda<0$}
		\end{cases}\\
		&=\lambda,
	\end{align*}
	so $\{\psi_\Omega(x_0,\lambda):x_0\in\partial\Omega\}\subset\Gamma(\lambda)$.
	It also holds that
	$\Gamma(\lambda)\subset \{\psi_\Omega(x_0,\lambda):x_0\in\partial\Omega\}$, since by Lemma \ref{lemma:Crasta2007SDF-C2}, for every $x\in \Gamma(\lambda)$,
	\begin{align*}
		\psi_\Omega(\pi_{\partial \Omega}(x),\lambda)&=\pi_{\partial \Omega}(x)+\lambda\nabla d_{\Omega}^s(\pi_{\partial \Omega}(x))\\
		&=\pi_{\partial \Omega}(x)+d_{\Omega}^s(x) \nu_\Omega(\pi_{\partial \Omega}(x))\\
		&=x,
	\end{align*}
	where $\pi_{\partial \Omega}(x)$ is the unique element in $\Pi_{\partial \Omega}(x)$.
	Thus, $\{\psi_\Omega(x_0,\lambda):x_0\in\partial\Omega\}=\Gamma(\lambda)$.
	
	Now note that $\{\partial\Omega_m\}_{m=1}^M$ is a partition of $\partial\Omega$, since ${\rm dist}(\Omega_m,\Omega_{m'})>0$ for any $m,m'\in\{1,...,M\}$ such that $m\neq m'$.
	By Lemma \ref{lemma:C1-submanifold}, $\partial\Omega_m$ is a $(p-1)$-dimensional $C^1$ submanifold of $\mathbb{R}^p$ for every $m\in\{1,...,M\}$, and hence $\partial\Omega$ is a $(p-1)$-dimensional $C^1$ submanifold of $\mathbb{R}^p$.
	Furthermore, since $\nabla d_{\Omega}^s$ is continuously differentiable on $N(\partial\Omega,\mu)$, $\psi_\Omega(\cdot,\lambda)$ is continuously differentiable on $N(\partial\Omega,\mu)$, which implies that $\psi_\Omega(\cdot,\lambda)$ is Lipschitz on $N(\partial\Omega,\mu)$ for every $\lambda\in\mathbb{R}$.
	Applying Lemma \ref{lemma:area}, we have that for every $\lambda\in (-\mu,\mu)$,
	\begin{align}
		\int_{\partial \Omega}g(u+\lambda \nu_\Omega(u))J_{p-1}^{\partial\Omega}\psi_\Omega(u,\lambda)d{\cal H}^{p-1}(u)&=
		\int_{\partial\Omega}g(\psi_\Omega(u,\lambda))J_{p-1}^{\partial\Omega}\psi_\Omega(u,\lambda)d{\cal H}^{p-1}(u) \nonumber\\
		& =\int_{\mathbb{R}^p}\sum_{u\in\partial \Omega:\psi_\Omega(u,\lambda)=x}g(\psi_\Omega(u,\lambda))d{\cal H}^{p-1}(x).\label{eq:area}
	\end{align}
	If $x\notin \{\psi_\Omega(u,\lambda):u\in\partial\Omega\}$, $\{u\in\partial \Omega:\psi_\Omega(u,\lambda)=x\}=\emptyset$.
	If $x\in \{\psi_\Omega(u,\lambda):u\in\partial\Omega\}$, there exists $u\in\partial\Omega$ such that $x=\psi_\Omega(u,\lambda)$.
	Since $\Pi_{\partial\Omega}(x)=\Pi_{\partial\Omega}(u+\lambda \nabla d_\Omega^s(u))=\Pi_{\partial\Omega}(u+\lambda \nu_\Omega(u))=\{u\}$ by Lemma \ref{lemma:Crasta2007SDF-C2}, such $u$ is unique, and hence $\{u\in\partial \Omega:\psi_\Omega(u,\lambda)=x\}$ is a singleton.
	 %$x=\psi_\Omega(u,\lambda)=u+\lambda \nabla d_\Omega^s(u)$ for some $u\in\partial\Omega$.
	%Since $\|u-x\|=|\lambda|=d(\psi_\Omega(u,\lambda),\partial\Omega)=d(x,\partial\Omega)$, $u\in\Pi(x)$.
	%By Lemma \ref{lemma:Crasta2007SDF-C2}, $\Pi(x)$ is a singleton, and hence $\{u\in\partial \Omega:\psi_\Omega(u,\lambda)=x\}$ is a singleton.
	It follow that
	\begin{align}
		\int_{\mathbb{R}^p}\sum_{u\in\partial \Omega:\psi_\Omega(u,\lambda)=x}g(\psi_\Omega(u,\lambda))d{\cal H}^{p-1}(x)
		&= \int_{\{\psi_\Omega(u,\lambda):u\in\partial\Omega\}}g(x)d{\cal H}^{p-1}(x) \nonumber\\
		&= \int_{\Gamma(\lambda)} g(x)d{\cal H}^{p-1}(x), \label{eq:integral-level-set}
	\end{align}
	where the last equality holds since $\{\psi_\Omega(u,\lambda):u\in\partial\Omega\}=\Gamma(\lambda)$.
	Combining (\ref{eq:coarea}), (\ref{eq:area}) and (\ref{eq:integral-level-set}), we obtain
	$$
	\int_{N(\partial \Omega,\delta)}g(x)dx = \int_{-\delta}^{\delta}\int_{\partial \Omega}g(u+\lambda \nu_\Omega(u))J_{p-1}^{\partial\Omega}\psi_\Omega(u,\lambda)d{\cal H}^{p-1}(u)d\lambda.
	$$
	
	We next show that $J_{p-1}^{\partial\Omega}\psi_\Omega(x,\cdot)$ is continuously differentiable in $\lambda$ and $J_{p-1}^{\partial\Omega}\psi_\Omega(x,0)=1$ for every $x\in\partial\Omega$.
	Fix an $x\in\partial\Omega$, and let $V_\Omega(x)$ be an arbitrary $p\times (p-1)$ matrix whose columns $v_1(x),...,v_{p-1}(x)\in\mathbb{R}^p$ form an orthonormal basis of $T_{\partial\Omega}(x)$.
	Let $P(x)\subset T_{\partial\Omega}(x)$ be a parallelepiped determined by $v_1(x),...,v_{p-1}(x)$, that is, let $P(x)=\{\sum_{k=1}^{p-1}c_k v_k(x): 0\le c_k\le 1\text{ for $k=1,...,p-1$}\}$.
	Since $v_1(x),...,v_{p-1}(x)$ are linearly independent, $P(x)$ is a $(p-1)$-dimensional parallelepiped.
	It follows that for each fixed $\lambda\in \mathbb{R}$,
	\begin{align*}
		\{J\psi_\Omega(x,\lambda) y:y\in P(x)\}&=\{J\psi_\Omega(x,\lambda) \sum_{k=1}^{p-1}c_k v_k(x): 0\le c_k\le 1\text{ for $k=1,...,p-1$}\}\\
		&=\{\sum_{k=1}^{p-1}c_k J\psi_\Omega(x,\lambda) v_k(x): 0\le c_k\le 1\text{ for $k=1,...,p-1$}\}\\
		&=\{\sum_{k=1}^{p-1}c_k w_k(x,\lambda): 0\le c_k\le 1\text{ for $k=1,...,p-1$}\},
	\end{align*}
	where $w_k(x,\lambda)=J\psi_\Omega(x,\lambda)v_k(x)$ for $k=1,...,p-1$.
	Since $J\psi_\Omega(x,\lambda)v_k(x)$ is the $k$-th column of $J\psi_\Omega(x,\lambda)V_\Omega(x)$, $\{J\psi_\Omega(x,\lambda) y:y\in P(x)\}$ is the parallelepiped determined by the columns of $J\psi_\Omega(x,\lambda)V_\Omega(x)$.
	By Proposition 5.1.2 of \cite{Krantz2008book}, we have that
	\begin{align*}
		J_{p-1}^{\partial\Omega}\psi_\Omega(x,\lambda)&=\frac{{\cal H}^{p-1}(\{\sum_{k=1}^{p-1}c_k w_k(x,\lambda): 0\le c_k\le 1\text{ for $k=1,...,p-1$}\})}{{\cal H}^{p-1}(P(x))}\\
		&=\frac{\sqrt{{\rm det}((J\psi_\Omega(x,\lambda)V_\Omega(x))'(J\psi_\Omega(x,\lambda) V_\Omega(x)))}}{\sqrt{{\rm det}(V_\Omega(x)'V_\Omega(x))}}\\
		&=\frac{\sqrt{{\rm det}((V_\Omega(x)+\lambda D^2d_\Omega^s(x)V_\Omega(x))'(V_\Omega(x)+\lambda D^2d_\Omega^s(x)V_\Omega(x)))}}{\sqrt{{\rm det}(I_{p-1})}}\\
		&=\sqrt{{\rm det}(V_\Omega(x)'V_\Omega(x)+2V_\Omega(x)' \lambda D^2d_\Omega^s(x)V_\Omega(x)+V_\Omega(x)'(\lambda D^2d_\Omega^s(x))^2V_\Omega(x))}\\
		&=\sqrt{{\rm det}(I_{p-1}+\lambda V_\Omega(x)' (2 D^2d_\Omega^s(x)+\lambda (D^2d_\Omega^s(x))^2)V_\Omega(x)))}\\
		&=\sqrt{{\rm det}(I_p+\lambda V_\Omega(x)V_\Omega(x)' (2 D^2d_\Omega^s(x)+\lambda (D^2d_\Omega^s(x))^2))},
	\end{align*}
	where we use the fact that $V_\Omega(x)'V_\Omega(x)=I_{p-1}$ and the fact that ${\rm det}(I_m+AB)={\rm det}(I_n+BA)$ for an $m\times n$ matrix $A$ and an $n\times m$ matrix $B$ (the Weinstein-Aronszajn identity).
	For every $x\in\partial\Omega$, $J_{p-1}^{\partial\Omega}\psi_\Omega(x,\cdot)$ is continuously differentiable in $\lambda$, and
	$J_{p-1}^{\partial\Omega}\psi_\Omega(x,0)=\sqrt{{\rm det}(I_p)}=1$.
	
	Lastly, we show that $J_{p-1}^{\partial\Omega}\psi_\Omega(\cdot,\cdot)$ and $\tfrac{\partial J_{p-1}^{\partial\Omega}\psi_\Omega(\cdot,\cdot)}{\partial\lambda}$ are bounded on $\partial\Omega\times (-\mu,\mu)$.
	Let $f,h:\partial\Omega \times \mathbb{R}^{p \times (p-1)}\rightarrow \mathbb{R}^{p\times p}$ be functions such that
	\begin{align*}
	f(x,A)&=2AA'D^2d_\Omega^s(x),\\
	h(x,A)&=AA'(D^2d_\Omega^s(x))^2.
	\end{align*}
	%where $A$ is a $p\times(p-1)$ matrix with columns $a_1,...,a_{p-1}\in\mathbb{R}^p$.
	Also, let $k:\partial\Omega\times \mathbb{R}\times \mathbb{R}^{p \times (p-1)}\rightarrow \mathbb{R}$ be a function such that
	$$
	k(x,\lambda,A)=\sqrt{{\rm det}(I_p+\lambda f(x,A)+\lambda^2h(x,A))}.
	$$
	Observe that
	\begin{align*}
	J_{p-1}^{\partial\Omega}\psi_\Omega(x,\lambda)=k(x,\lambda,V_\Omega(x))
	\end{align*}
	and that
	\begin{align*}
	&~\frac{\partial J_{p-1}^{\partial\Omega}\psi_\Omega(x,\lambda)}{\partial\lambda}\\
	=&~\left.\frac{\partial k(x,\lambda,A)}{\partial\lambda}\right|_{A=V_\Omega(x)}\\
	=&~\left.\frac{1}{2k(x,\lambda,A)}\sum_{i,j}\frac{\partial {\rm det}(I_p+\lambda f(x,A)+\lambda^2h(x,A))}{\partial b_{ij}}(f_{ij}(x,A)+2\lambda h_{ij}(x,A))\right|_{A=V_\Omega(x)},
	\end{align*}
	where $\tfrac{\partial {\rm det}(B)}{\partial b_{ij}}$ denotes the partial derivative of the function ${\rm det}:\mathbb{R}^{p\times p}\rightarrow \mathbb{R}$ with respect to the $(i,j)$ entry of $B$.
	
	Note that $k(\cdot,\cdot,\cdot)$ and $\tfrac{\partial k(\cdot,\cdot,\cdot)}{\partial\lambda}$ are continuous on $\partial\Omega\times \mathbb{R}\times\mathbb{R}^{p \times (p-1)}$ (except at the points for which $k(x,\lambda,A)=0$), since ${\rm det}$ is infinitely differentiable, and $f$ and $h$ are continuous on $\partial\Omega\times \mathbb{R}^{p \times (p-1)}$.
	Let $S=\{(x,\lambda,A)\in\partial\Omega\times [-\mu,\mu]\times \mathbb{R}^{p \times (p-1)}:\|a_j\|=1 \text{ for $k=1,...,p-1$}\}$, where $a_j$ denotes the $j$th column of $A$.
	Since $k(\cdot,\cdot,\cdot)$ and $\tfrac{\partial k(\cdot,\cdot,\cdot)}{\partial\lambda}$ are continuous and $S$ is closed and bounded, $\bar k =\max_{(x,\lambda,A)\in S}|k(x,\lambda,A)|$ and $\bar k' =\max_{(x,\lambda,A)\in S}|\tfrac{\partial k(x,\lambda,A)}{\partial\lambda}|$ exist.
	Since $(x,\lambda, V_{\Omega}(x))\in S$ for every $(x,\lambda)\in \partial\Omega\times (-\mu,\mu)$, it follows that
	$
	|J_{p-1}^{\partial\Omega}\psi_\Omega(x,\lambda)|\le \bar k
	$
	and
	$|\frac{\partial J_{p-1}^{\partial\Omega}\psi_\Omega(x,\lambda)}{\partial\lambda}|\le \bar k'$
	for every $(x,\lambda)\in \partial\Omega\times (-\mu,\mu)$.
\end{proof}

\subsection{Other Lemmas}

		\begin{lemma}\label{lemma:mean-plim}
			Fix any $a\in\{2,...,m\}$.
			Let $\{V_i\}_{i=1}^\infty$ be i.i.d. random variables such that $E[V_i^2]<\infty$.
			If Assumption \ref{consassump_complete} \ref{assumption:ML-continuity} -- \ref{finite-moments} hold, then for $l\ge 0$ and $m=0,1$,
			\begin{align*}
			E[V_i q_\delta^{ML}(a|X_i)^l 1\{q_\delta^{ML}(a|X_i)\in (0,1)\}^m1\{A_i\in\{1,a\}\}]\\
			\rightarrow E[V_i \widetilde{ML}(a|X_i)^l1\{\widetilde{ML}(a|X_i)\in (0,1)\} 1\{A_i\in\{1,a\}\}]
			\end{align*}
			as $\delta\rightarrow 0$.
			Moreover, if, in addition, $\delta_n\rightarrow 0$ as $n\rightarrow\infty$, then for $l\ge 0$,
			\begin{align*}
			\frac{1}{n}\sum_{i=1}^n V_i q_{\delta_n}^{ML}(a|X_i)^l 1\{q_{\delta_n}^{ML}(a|X_i)\in (0,1)\}1\{A_i\in\{1,a\}\}\\
			\stackrel{p}{\longrightarrow} E[V_i \widetilde{ML}(a|X_i)^l 1\{\widetilde{ML}(a|X_i)\in (0,1)\}1\{A_i\in\{1,a\}\}]
			\end{align*}
			as $n\rightarrow \infty$.
		\end{lemma}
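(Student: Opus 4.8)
The plan is to prove the first (deterministic) limit by dominated convergence after establishing a.e.\ pointwise convergence of the integrand, and then to obtain the second (in-probability) limit from the first by a triangular-array weak law of large numbers. Fix $a\in\{2,\dots,m\}$. The first observation I would record is that on the event $\{A_i\in\{1,a\}\}$ one has $X_i\in{\cal X}_{a,1}$ almost surely, since $\Pr(A_i\in\{1,a\}\mid X_i=x)=ML(a|x)+ML(1|x)$ is strictly positive precisely on ${\cal X}_{a,1}$; consequently only the behaviour of $q_{\delta}^{ML}(a|\cdot)$ on ${\cal X}_{a,1}$ matters, and since $X_i$ is continuously distributed it is enough to reason Lebesgue-a.e.\ there.

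The heart of the argument is to show that for Lebesgue-a.e.\ $x\in{\cal X}_{a,1}$, $q_{\delta}^{ML}(a|x)\to\widetilde{ML}(a|x)$ and, in addition, $1\{q_{\delta}^{ML}(a|x)\in(0,1)\}\to 1\{\widetilde{ML}(a|x)\in(0,1)\}$, as $\delta\to 0$. For the first convergence: at any $x$ at which $ML(a|\cdot)$ and $ML(1|\cdot)$ are both continuous — a co-null subset of ${\cal X}_{a,1}$ by Assumption~\ref{consassump_complete}~\ref{assumption:ML-continuity} — the averages $p_{\delta}^{ML}(a|x)$ and $p_{\delta}^{ML}(1|x)$ converge to $ML(a|x)$ and $ML(1|x)$ by continuity, and their sum is positive on ${\cal X}_{a,1}$, so the ratio $q_{\delta}^{ML}(a|x)$ converges to $\widetilde{ML}(a|x)$ for all small $\delta$ (on the null set where $q_{\delta}^{ML}$ happens to be undefined one may assign it any value). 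This already disposes of the factor $q_{\delta}^{ML}(a|x)^{l}$. For the indicator, partition ${\cal X}_{a,1}$ into ${\rm int}({\cal X}_{a,1}^{a})$, ${\rm int}({\cal X}_{a,1}^{1})$, the set $\{x\in{\cal X}_{a,1}:\widetilde{ML}(a|x)\in(0,1)\}$, and a remainder. On ${\rm int}({\cal X}_{a,1}^{a})$ a whole ball around $x$ lies in ${\cal X}_{a,1}^{a}$, so there $ML(1|\cdot)\equiv 0$ and $ML(a|\cdot)>0$, whence $q_{\delta}^{ML}(a|x)=1$ for small $\delta$ and the indicator equals $0=1\{\widetilde{ML}(a|x)\in(0,1)\}$; symmetrically $q_{\delta}^{ML}(a|x)=0$ on ${\rm int}({\cal X}_{a,1}^{1})$; on $\{\widetilde{ML}(a|\cdot)\in(0,1)\}$ the already-established limit puts $q_{\delta}^{ML}(a|x)$ into $(0,1)$ for small $\delta$, so the indicator equals $1=1\{\widetilde{ML}(a|x)\in(0,1)\}$. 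The remainder is contained in $\partial{\cal X}_{a,1}^{a}\cup\partial{\cal X}_{a,1}^{1}$, which is Lebesgue-null by Assumption~\ref{consassump_complete}~\ref{assumption:ML-measure-zero-boundary} (it asserts that ${\cal X}_{a,1}^{a'}$ agrees with its interior up to a null set); together with the null discontinuity set this gives the claimed a.e.\ convergence.

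Granting this, the first limit follows from the dominated convergence theorem: the integrand vanishes off $\{A_i\in\{1,a\}\}$ and converges a.s.\ on that event by the previous paragraph, and it is dominated by $|V_i|$, which is integrable because $E[V_i^2]<\infty$ (since $q_{\delta}^{ML}\in[0,1]$ and the indicators are at most $1$). For the second limit, put $W_{n,i}=V_i\,q_{\delta_n}^{ML}(a|X_i)^{l}\,1\{q_{\delta_n}^{ML}(a|X_i)\in(0,1)\}\,1\{A_i\in\{1,a\}\}$. For each fixed $n$, $W_{n,1},\dots,W_{n,n}$ are i.i.d.\ ($\delta_n$ is deterministic and $\{(V_i,X_i,A_i)\}_i$ is i.i.d.) and $|W_{n,i}|\le|V_i|$, so $\var(W_{n,i})\le E[V_i^2]<\infty$ uniformly in $n$; hence $\var\left(\tfrac{1}{n}\sum_{i=1}^{n}W_{n,i}\right)=\tfrac{1}{n}\var(W_{n,1})\to 0$, while $E\left[\tfrac{1}{n}\sum_{i=1}^{n}W_{n,i}\right]=E[W_{n,1}]$ converges to $E[V_i\widetilde{ML}(a|X_i)^{l}1\{\widetilde{ML}(a|X_i)\in(0,1)\}1\{A_i\in\{1,a\}\}]$ by the first limit (applied along $\delta_n\to 0$ with $m=1$). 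Chebyshev's inequality then gives the asserted convergence in probability.

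I expect the second paragraph to be the main obstacle. Because $1\{q_{\delta}^{ML}(a|x)\in(0,1)\}$ is a discontinuous functional of $q_{\delta}^{ML}(a|x)$, its pointwise convergence does not follow from $q_{\delta}^{ML}(a|x)\to\widetilde{ML}(a|x)$ alone; one must use both Assumption~\ref{consassump_complete}~\ref{assumption:ML-continuity} and Assumption~\ref{consassump_complete}~\ref{assumption:ML-measure-zero-boundary} to confine the exceptional $x$ — essentially the discontinuity set of $ML$ together with the topological boundaries of ${\cal X}_{a,1}^{a}$ and ${\cal X}_{a,1}^{1}$ — to a Lebesgue-null set. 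The remaining ingredients (uniform $L^2$ control, Chebyshev, dominated convergence) are routine.
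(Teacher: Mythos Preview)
Your proposal is correct and follows essentially the same route as the paper's proof: establish a.e.\ pointwise convergence of $q_\delta^{ML}(a|x)$ and of the indicator $1\{q_\delta^{ML}(a|x)\in(0,1)\}$ on ${\cal X}_{a,1}$ by combining Assumption~\ref{consassump_complete}~\ref{assumption:ML-continuity} (for points with $\widetilde{ML}(a|x)\in(0,1)$) with Assumption~\ref{consassump_complete}~\ref{assumption:ML-measure-zero-boundary} (to handle $x\in{\rm int}({\cal X}_{a,1}^a)\cup{\rm int}({\cal X}_{a,1}^1)$ and confine the rest to a null set), then apply dominated convergence with $|V_i|$ as the envelope, and finally use the uniform second-moment bound $\Var(W_{n,1})\le E[V_i^2]$ together with Chebyshev for the in-probability limit. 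The only cosmetic slip is that Assumption~\ref{consassump_complete}~\ref{assumption:ML-measure-zero-boundary} gives ${\cal L}^p({\cal X}_{a,1}^{a'}\setminus{\rm int}({\cal X}_{a,1}^{a'}))=0$, not that the full topological boundary $\partial{\cal X}_{a,1}^{a'}$ is null; since your ``remainder'' is exactly $\bigcup_{a'}({\cal X}_{a,1}^{a'}\setminus{\rm int}({\cal X}_{a,1}^{a'}))$, the conclusion stands.
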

	\begin{proof}
		Note that
		\begin{align*}
		E[\frac{1}{n}\sum_{i=1}^n V_i q_{\delta_n}^{ML}(a|X_i)^l 1\{q_{\delta_n}^{ML}(a|X_i)\in (0,1)\}1\{A_i\in\{1,a\}\}]\\
		=E[V_i q_{\delta_n}^{ML}(a|X_i)^l 1\{q_{\delta_n}^{ML}(a|X_i)\in (0,1)\}1\{A_i\in\{1,a\}\}].
		\end{align*}
		We show that
		\begin{align*}
		E[V_i q_\delta^{ML}(a|X_i)^l 1\{q_\delta^{ML}(a|X_i)\in (0,1)\}^m1\{A_i\in\{1,a\}\}]\\
		\rightarrow E[V_i \widetilde{ML}(a|X_i)^l 1\{\widetilde{ML}(a|X_i)\in (0,1)\}^m1\{A_i\in\{1,a\}\}]
		\end{align*}
		for $l\ge 0$ and $m=0,1$ as $\delta\rightarrow 0$, and that
		\begin{align*}
		&\Var(\frac{1}{n}\sum_{i=1}^n V_i q_{\delta_n}^{ML}(a|X_i)^l 1\{q_{\delta_n}^{ML}(a|X_i)\in (0,1)\}1\{A_i\in\{1,a\}\})\rightarrow 0
		\end{align*}
		for $l\ge 0$ as $n\rightarrow \infty$.
		For the first part, we have
		\begin{align*}
			&~E[V_i q_\delta^{ML}(a|X_i)^l 1\{q_\delta^{ML}(a|X_i)\in (0,1)\}^m1\{A_i\in\{1,a\}\}]\\
			=&~E[E[V_i|X_i,A_i] q_\delta^{ML}(a|X_i)^l 1\{q_\delta^{ML}(a|X_i)\in (0,1)\}^m1\{A_i\in\{1,a\}\}]\\
			=&~E[\sum_{a'\in\{1,a\}}E[V_i|X_i,A_i=a'] q_\delta^{ML}(a|X_i)^l 1\{q_\delta^{ML}(a|X_i)\in (0,1)\}^m ML(a'|X_i)]\\
			=&~ \int_{{\cal X}_{a,1}} g(x) q_\delta^{ML}(a|x)^l 1\{q_\delta^{ML}(a|x)\in (0,1)\}^mf_X(x)dx,
		\end{align*}
		where $g(x)=\sum_{a'\in\{1,a\}}E[V_i|X_i=x,A_i=a']ML(a'|x)$.
		
		Suppose $ML(a|\cdot)$ and $ML(1|\cdot)$ are continuous at $x$ and $\widetilde{ML}(a|x)\in (0,1)$. Then, with change of variables $u=\frac{x^*-x}{\delta}$, for $a'\in\{1,a\}$,
		\begin{align*}
		    p_\delta^{ML}(a'|x)&=\frac{\int_{B(x,\delta)}ML(a'|x^*)dx^*}{\int_{B(x,\delta)}dx^*}\\
		    &=\frac{\delta^p\int_{B(\bm{0},1)}ML(a'|x+\delta u)du}{\delta^p\int_{B(\bm{0},1)}du}\\
		    &\rightarrow \frac{\int_{B(\bm{0},1)}ML(a'|x)du}{\delta^p\int_{B(\bm{0},1)}du}=ML(a'|x)
		\end{align*}
		as $\delta\rightarrow 0$, where the convergence follows from the Dominated Convergence Theorem.
		It follows that
		$\lim_{\delta\rightarrow 0}q_\delta^{ML}(a|x)=\frac{ML(a|x)}{ML(a|x)+ML(1|x)}=\widetilde{ML}(a|x)\in (0,1)$, and hence $q_\delta^{ML}(a|x)\in (0,1)$ for sufficiently small $\delta>0$.
		Therefore, $1\{q_\delta^{ML}(a|x)\in (0,1)\}\rightarrow 1 = 1\{\widetilde{ML}(a|x)\in (0,1)\}$ as $\delta\rightarrow 0$.
		
		Suppose $x\in {\rm int}({\cal X}_{a,1}^a)\cup {\rm int}({\cal X}_{a,1}^1)$. Then $B(x,\delta)\subset {\cal X}_{a,1}^a$ or $B(x,\delta)\subset {\cal X}_{a,1}^1$ for sufficiently small $\delta>0$ by the fact that ${\rm int}({\cal X}_{a,1}^a)$ and ${\rm int}({\cal X}_{a,1}^1)$ are open.
		Note that if $\widetilde{ML}(a|x')=1$, then $\widetilde{ML}(1|x')=0$. Hence if $B(x,\delta)\subset {\cal X}_{a,1}^a$, $p_\delta^{ML}(1|x)=0$ so $q_\delta^{ML}(a|x)=1$.
		Likewise, if $B(x,\delta)\subset {\cal X}_{a,1}^1$, $p_\delta^{ML}(a|x)=0$ so $q_\delta^{ML}(a|x)=0$.
		It follows that $1\{q_\delta^{ML}(a|x)\in (0,1)\}\rightarrow 0 = 1\{\widetilde{ML}(a|x)\in (0,1)\}$ as $\delta\rightarrow 0$.
		
		Since $ML(a|\cdot)$ and $ML(1|\cdot)$ are continuous at $x$ for almost every $x\in{\cal X}_{a,1}$ by Assumption \ref{consassump_complete} \ref{assumption:ML-continuity}, and either $\widetilde{ML}(a|x)\in (0,1)$ or $x\in {\rm int}({\cal X}_{a,1}^a)\cup {\rm int}({\cal X}_{a,1}^1)$ for almost every $x\in {\cal X}_{a,1}$ by Assumption \ref{consassump_complete} \ref{assumption:ML-measure-zero-boundary}, the above results imply that $\lim_{\delta\rightarrow 0}q_\delta^{ML}(a|x)=\widetilde{ML}(a|x)$ and $\lim_{\delta\rightarrow 0}1\{q_\delta^{ML}(a|x)\in (0,1)\}= 1\{\widetilde{ML}(a|x)\in (0,1)\}$ for almost every $x\in{\cal X}_{a,1}$.
		By the Dominated Convergence Theorem,
		\begin{align*}
			&~E[V_i q_\delta^{ML}(a|X_i)^l 1\{q_\delta^{ML}(a|X_i)\in (0,1)\}^m1\{A_i\in\{1,a\}\}]\\
			\rightarrow &~\int_{{\cal X}_{a,1}} g(x) \widetilde{ML}(a|x)^l 1\{\widetilde{ML}(a|x)\in (0,1)\}^mf_X(x)dx\\
			=&~E[V_i\widetilde{ML}(a|X_i)^l 1\{\widetilde{ML}(a|X_i)\in (0,1)\}^m1\{A_i\in\{1,a\}\}]
		\end{align*}
		as $\delta\rightarrow 0$.
		As for variance,
		\begin{align*}
			&~\Var(\frac{1}{n}\sum_{i=1}^n V_i q_{\delta_n}^{ML}(a|X_i)^l 1\{q_{\delta_n}^{ML}(a|X_i)\in (0,1)\}1\{A_i\in\{1,a\}\})\\
			\le&~ \frac{1}{n}E[V_i^2 q_{\delta_n}^{ML}(a|X_i)^{2l} (1\{q_{\delta_n}^{ML}(a|X_i)\in (0,1)\}1\{A_i\in\{1,a\}\})^2]\\
			\le&~ \frac{1}{n}E[V_i^2]\\
			\rightarrow&~ 0
		\end{align*}
		as $n\rightarrow \infty$.
	\end{proof}

\section{Proofs}\label{appendix:proof}
\subsection{Derivation of Equation (\ref{eq:value-expression})}\label{section:derivation}
\begin{align*}
    V(\pi)&=V(ML)+E[\sum_{a\in{\cal A}}E[Y(a)|X](\pi(a|X)-ML(a|X))]\\
    &=V(ML)+E[\sum_{a\in{\cal A}}(E[Y(a)|X]-E[Y(1)|X])(\pi(a|X)-ML(a|X))]\\
    &~~~~+E[E[Y(1)|X]\sum_{a\in{\cal A}}(\pi(a|X)-ML(a|X))]\\
    &=V(ML)+E[\sum_{a\in{\cal A}}\beta(a,1)(\pi(a|X)-ML(a|X))]\\
    &=V(ML)+E[\sum_{a=2}^m\beta(a,1)(\pi(a|X)-ML(a|X))],
\end{align*}
where we use Assumption \ref{constant} and the fact that 
\[
\sum_{a\in{\cal A}}(\pi(a|X)-ML(a|X))
=\sum_a \pi(a|X) - \sum_a ML(a|X)
= 1 - 1
=0
\]
for the third equality.
\qed

\subsection{Proof of Lemma \ref{proposition:id-means}}
Suppose that Assumption \ref{continuity} holds.
		Pick $a\in{\cal A}$ and $x\in{\rm int}({\cal X})$ such that $p^{ML}(a|x)>0$.
		If $ML(a|x)>0$, $E[Y|X=x,A=a]=E[Y(a)|X=x]$, since $A$ is independent of $Y(a)$ conditional on $X$.
		$E[Y(a)|X=x]$ is thus identified.
        Suppose $ML(a|x)=0$.
		Since $x\in {\rm int}({\cal X})$, $B(x,\delta)\subset {\cal X}$ for any sufficiently small $\delta>0$.
		Moreover, since $p^{ML}(a|x)=\lim_{\delta\rightarrow 0} p_\delta^{ML}(a|x) >0$, $p_\delta^{ML}(a|x)>0$ for any sufficiently small $\delta>0$.
		This implies that we can find a point $x_{\delta}\in B(x,\delta)(\subset{\cal X})$ such that $ML(a|x_{\delta})>0$ for any sufficiently small $\delta>0$, for otherwise $p_\delta^{ML}(a|x)=0$.
		Noting that $x_{\delta}\rightarrow x$ as $\delta\rightarrow 0$,
		\begin{align*}
		\lim_{\delta\rightarrow 0} E[Y|X=x_{\delta},A=a]&=\lim_{\delta\rightarrow 0} E[Y(a)|X=x_{\delta}]\\
		&=E[Y(a)|X=x],
		\end{align*}
		where the first equality follows from conditional independence and the second from Assumption \ref{continuity}.
		\qed

\begin{comment}
\subsection{Proof of Corollary \ref{proposition:id-value}}
Pick $\pi\in\Pi^{ML}$.
Observe that
\begin{align*}
V(\pi)&=E[\sum_{a\in{\cal A}}E[Y(a)|X]\pi(a|X)]\\
&=E[\sum_{a\in{\cal A}}E[Y(a)|X]\pi(a|X)1\{X\in{\rm int}({\cal X})\}]~~\text{(by Assumption \ref{assumption:boundary-support})}\\
&=E[\sum_{a\in{\cal A}}E[Y(a)|X]\pi(a|X)1\{\pi(a|X)>0,X\in{\rm int}({\cal X})\}].
\end{align*}
Since $p^{ML}(a|x)>0$ if $\pi(a|x)>0$, Lemma \ref{proposition:id-means} implies that $E[Y(a)|X=x]$ is identified for every $(a,x)$ such that $\pi(a|x)>0$ and $x\in{\rm int}({\cal X})$.
$V(\pi)$ is thus identified.
\qed
\end{comment}

\subsection{Proof of Proposition \ref{proposition:id-value-every-policy}}
We show that $E[Y(a)|X=x]$ is identified for every $(a,x)$ pair.
Since $E[Y(a)|X=x]$ is identified for at least one $a\in{\cal A}$ for every $x\in{\cal X}$, and $E[Y(a')|X=x]=E[Y(a)|X=x]+\beta(a',a)$ by Assumption \ref{constant}, it suffices to show that $\beta(a',a)$ is identified for every $(a',a)$ pair.
This is equivalent to proving that $\beta(a,1)$ is identified for every $a\in\{2,...,m\}$, since $\beta(a',a)=\beta(a',1)-\beta(a,1)$.

Take any $a\in\{2,...,m\}$ and let $\{a_1,...,a_L\}$ be the sequence that satisfies the condition in Assumption \ref{path-existence}.
Under Assumption \ref{continuity}, Lemma \ref{proposition:id-means} implies that for every $l\in\{1,...,L-1\}$, $E[Y(a_{l+1})|X=x]$ and $E[Y(a_l)|X=x]$ are identified for some $x\in{\rm int}({\cal X})$.
This implies that $\beta(a_{l+1},a_l)$ is identified for every $l\in\{1,...,L-1\}$ under Assumption \ref{constant}.
Since
\begin{align*}
    \beta(a,1)&=\beta(a_L,a_{L-1})+\beta(a_{L-1},a_{L-2})+\cdots+\beta(a_{2},a_1),
\end{align*}
$\beta(a,1)$ is also identified.
\qed

\subsection{Proof of Theorem \ref{cor:opeconsistency}}\label{proof:opeconsistency}
Fix any $a\in\{2,...,m\}$ and consider the regression from the subsample assigned to either action $a$ or $1$ throughout the proof.
For notational simplicity, we omit the argument $a$ from $\widetilde{ML}(a|x)$ and $q_{\delta}^{ML}(a|x)$ and denote them by $\widetilde{ML}(x)$ and $q_{\delta}^{ML}(x)$.
Let $\mathbf{Z}_i=(1,1\{A_i=a\},q_{\delta_n}^{ML}(X_i))'$, and $I_i=1\{q_{\delta_n}^{ML}(X_i)\in (0,1)\}$.
Let
$$
\hat\beta=\begin{pmatrix}\hat\alpha_a\\\hat\beta_a\\\hat\gamma_a\end{pmatrix}=(\sum_{i=1}^n\mathbf{Z}_i\mathbf{Z}_i'I_i1\{A_i\in \{1,a\}\})^{-1}\sum_{i=1}^n\mathbf{Z}_iY_iI_i1\{A_i\in \{1,a\}\}.
$$

Below, we prove that $\hat\beta_a$ converges in probability to $\beta(a,1)$.
The theorem then immediately follows.
Also, the proof of Step \ref{step:consistency} shows that if Assumption \ref{constant} does not hold for a deterministic logging policy, $\hat\beta_a$ converges in probability to
$$
\frac{\int_{\partial\Omega^*\cap{\cal X}_{a,1}}E[Y_i(a)-Y_i(1)|X_i=x]f_X(x)d{\cal H}^{p-1}(x)}{\int_{\partial\Omega^*\cap{\cal X}_{a,1}}f_X(x)d{\cal H}^{p-1}(x)},
$$
which is the mean reward difference for the subpopulation on the decision boundary between a and 1.

We provide proofs separately for the two cases, the case in which $\Pr(\widetilde{ML}(X_i)\in (0,1))>0$ and the case in which $\Pr(\widetilde{ML}(X_i)\in (0,1))=0$.

\subsubsection{Consistency of $\hat\beta_{a}$ When $\Pr(\widetilde{ML}(X_i)\in (0,1)|A_i\in\{1,a\})>0$}
\label{subsubsection:consistency}
Let $\tilde{\mathbf{Z}}_i=(1,1\{A_i=a\},\widetilde{ML}(X_i))'$ and $I_i^{ML}=1\{\widetilde{ML}(X_i)\in (0,1)\}$.
By Lemma \ref{lemma:mean-plim},
\begin{align*}
\hat\beta&=(\sum_{i=1}^n \mathbf{Z}_i\mathbf{Z}_i'I_{i}1\{A_i\in \{1,a\}\})^{-1}\sum_{i=1}^n \mathbf{Z}_iY_iI_{i}1\{A_i\in \{1,a\}\}\\
&\stackrel{p}{\longrightarrow}(E[\tilde{\mathbf{Z}}_i\tilde{\mathbf{Z}}_i'I_i^{ML}1\{A_i\in \{1,a\}\}])^{-1}E[\tilde{\mathbf{Z}}_iY_iI_i^{ML}1\{A_i\in \{1,a\}\}]\\
&=(E[\tilde{\mathbf{Z}}_i\tilde{\mathbf{Z}}_i'I_i^{ML}|A_i\in \{1,a\}])^{-1}E[\tilde{\mathbf{Z}}_iY_iI_i^{ML}|A_i\in \{1,a\}]
\end{align*}
provided that $E[\tilde{\mathbf{Z}}_i\tilde{\mathbf{Z}}_i'I_i^{ML}|A_i\in \{1,a\}]$ is invertible.
After a few lines of algebra, we have
\begin{align*}
	&{\rm det}(E[\tilde{\mathbf{Z}}_i\tilde{\mathbf{Z}}_i'I_i^{ML}|A_i\in \{1,a\}])\\
	=&\Pr(I_i^{ML}=1|A_i\in \{1,a\})^2\Var(\widetilde{ML}(X_i)|I_i^{ML}=1,A_i\in \{1,a\})\\
	&\times E[\widetilde{ML}(X_i)(1-\widetilde{ML}(X_i))I_i^{ML}|A_i\in \{1,a\}]\\
	=&\Pr(I_i^{ML}=1|A_i\in \{1,a\})^2\Var(\widetilde{ML}(X_i)|I_i^{ML}=1,A_i\in \{1,a\})\\
	&\times E[\widetilde{ML}(X_i)(1-\widetilde{ML}(X_i))|A_i\in \{1,a\}].
\end{align*}
Therefore, $E[\tilde{\mathbf{Z}}_i\tilde{\mathbf{Z}}_i'I_i^{ML}|A_i\in \{1,a\}]$ is invertible, since $\Pr(I_i^{ML}=1|A_i\in \{1,a\})>0$, and $\Var(\widetilde{ML}(X_i)|I_i^{ML}=1,A_i\in \{1,a\})>0$ under Assumption \ref{consassump_complete} \ref{nonzero-cond-var}.

Another few lines of algebra gives
\begin{align*}
	(E[\tilde{\mathbf{Z}}_i\tilde{\mathbf{Z}}_i'I_i^{ML}|A_i\in \{1,a\}])^{-1}=\frac{1}{E[\widetilde{ML}(X_i)(1-\widetilde{ML}(X_i))|A_i\in \{1,a\}]}\begin{bmatrix} *~ & * & *\\
		0~ & 1 & -1\\
		*~ & * & *
	\end{bmatrix}.
\end{align*}
Observe that
\begin{align*}
    E[1\{A_i=a\}Y_{i}(a)|X_i,A_i\in\{1,a\}]&=E[1\{A_i=a\}|X_i,A_i\in\{1,a\}]E[Y_{i}(a)|X_i]\\
    &=\frac{\Pr(A_i=a|X_i)}{\Pr(A_i\in\{1,a\}|X_i)}E[Y_{i}(a)|X_i]\\
    &=\frac{ML(a|X_i)}{ML(a|X_i)+ML(1|X_i)}E[Y_{i}(a)|X_i]\\
    &=\widetilde{ML}(X_i)E[Y_{i}(a)|X_i],
\end{align*}
where the first equality follows from the assumption that $A_i$ is independent of $Y_i(\cdot)$ conditional on $X_i$.
Likewise,
$$
E[1\{A_i=1\}Y_{i}(1)|X_i,A_i\in\{1,a\}]=(1-\widetilde{ML}(X_i))E[Y_{i}(1)|X_i].
$$
Therefore, 
\begin{align*}
	\hat\beta_a\stackrel{p}{\longrightarrow}&~\frac{E[1\{A_i=a\}Y_iI_i^{ML}|A_i\in \{1,a\}]-E[\widetilde{ML}(X_i)Y_iI_i^{ML}|A_i\in \{1,a\}]}{E[\widetilde{ML}(X_i)(1-\widetilde{ML}(X_i))|A_i\in \{1,a\}]}\\
	=&~\frac{E[1\{A_i=a\}Y_{i}(a)I_i^{ML}-\widetilde{ML}(X_i)(1\{A_i=a\}Y_{i}(a)+1\{A_i=1\}Y_{i}(1))I_i^{ML}|A_i\in \{1,a\}]}{E[\widetilde{ML}(X_i)(1-\widetilde{ML}(X_i))|A_i\in \{1,a\}]}\\
	=&~\frac{E[\widetilde{ML}(X_i)E[Y_{i}(a)|X_i]I_i^{ML}-\widetilde{ML}(X_i)(\widetilde{ML}(X_i)E[Y_{i}(a)|X_i]|A_i\in \{1,a\}]}{E[\widetilde{ML}(X_i)(1-\widetilde{ML}(X_i)|A_i\in \{1,a\}]}\\
	&~+\frac{E[(1-\widetilde{ML}(X_i))E[Y_{i}(1)|X_i])I_i^{ML}|A_i\in \{1,a\}]}{E[\widetilde{ML}(X_i)(1-\widetilde{ML}(X_i)|A_i\in \{1,a\}]}\\
	=&~\frac{E[\widetilde{ML}(X_i)(1-\widetilde{ML}(X_i))E[Y_{i}(a)-Y_{i}(1)|X_i]I_i^{ML}|A_i\in \{1,a\}]}{E[\widetilde{ML}(X_i)(1-\widetilde{ML}(X_i)|A_i\in \{1,a\}]}\\
	=&~\beta(a,1).
\end{align*}
\qed

\subsubsection{Consistency of $\hat\beta_{a}$ When $\Pr(\widetilde{ML}(X_i)\in (0,1)|A_i\in\{1,a\})=0$}
\label{subsubsection:consistency-deterministic}

For notational simplicity, we omit subscript $a$ from $\Omega_a^*$ and denote it by $\Omega^*$.
We use the notation and results provided in Appendix \ref{appendix:notations-lemmas}.
By Lemma \ref{lemma:neighborhood-integral}, under Assumption \ref{consassump_complete} \ref{assumption:boundary-partition},
there exists $\mu>0$ such that $d_{\Omega^*}^s$ is twice continuously differentiable on $N(\partial\Omega^*,\mu)$ and that
$$
\int_{N(\partial \Omega^*,\delta)}g(x)dx = \int_{-\delta}^{\delta}\int_{\partial \Omega^*}g(u+\lambda \nu_{\Omega^*}(u))J_{p-1}^{\partial\Omega^*}\psi_{\Omega^*}(u,\lambda)d{\cal H}^{p-1}(u)d\lambda
$$
for every $\delta\in (0,\mu)$ and every function $g:\mathbb{R}^p\rightarrow \mathbb{R}$ that is integrable on $N(\partial \Omega^*,\delta)$.

Our proof proceeds in five steps.
\begin{step}\label{step:lim_qps}
	For every $(u,v)\in \partial\Omega^*\cap N({\cal X}_{a,1},\bar\delta)\times(-1,1)$, $\lim_{\delta\rightarrow 0}q_\delta^{ML}(u+\delta v \nu_{\Omega^*}(u))=k(v)
	$,
	where
	\begin{align*}
		k(v) =
		\begin{cases}
			1-\frac{1}{2}I_{(1-v^2)}(\frac{p+1}{2},\frac{1}{2}) & \ \ \ \text{for $v\in [0,1)$}\\
			\frac{1}{2}I_{(1-v^2)}(\frac{p+1}{2},\frac{1}{2}) & \ \ \ \text{for $v\in (-1,0)$}.
		\end{cases}
	\end{align*}
	Here $I_x(\alpha,\beta)$ is the regularized incomplete beta function (the cumulative distribution function of the beta distribution with shape parameters $\alpha$ and $\beta$).
\end{step}

\begin{proof}
	By Assumption \ref{consassump_complete} \ref{assumption:boundary-measure} \ref{assumption:zero-set}, there exists $\bar\delta \in (0,\frac{\mu}{2})$ such that $ML(a|x)=1$ or $ML(1|x)=1$ for almost every $x\in N({\cal X}_{a,1},3\bar\delta)\cap N(\partial\Omega^*,3\bar\delta)$.
	It follows that for $(u,v,\delta)\in \partial\Omega^*\cap N({\cal X}_{a,1},\bar\delta)\times (-1,1)\times(0,\bar\delta)$, $p_\delta^{ML}(a|u+\delta v \nu_{\Omega^*}(u))+p_\delta^{ML}(1|u+\delta v \nu_{\Omega^*}(u))=1$ so that $q_\delta^{ML}(u+\delta v \nu_{\Omega^*}(u))=p_\delta^{ML}(a|u+\delta v \nu_{\Omega^*}(u))$.
	
	For $(u,v,\delta)\in \partial\Omega^*\cap N({\cal X}_{a,1},\bar\delta)\times (-1,1)\times(0,\bar\delta)$,
	\begin{align*}
		&~p_\delta^{ML}(a|u+\delta v \nu_{\Omega^*}(u))\\
		=&~\frac{\int_{B(\bm{0},1)}ML(a|u+\delta v \nu_{\Omega^*}(u)+\delta w)dw}{\int_{B(\bm{0},1)}dw}\\
		=&~\frac{\int_{B(\bm{0},1)}1\{u+\delta v \nu_{\Omega^*}(u)+\delta w\in \Omega^*\}dw}{{\rm Vol}_p}\\
		=&~\frac{\int_{B(\bm{0},1)}1\{d_{\Omega^*}^s(u+\delta (v \nu_{\Omega^*}(u)+w))\ge 0)\}dw}{{\rm Vol}_p}\\
		=&~\frac{\int_{B(\bm{0},1)}1\{d_{\Omega^*}^s(u)+\nabla d_{\Omega^*}^s(y_d(u,\delta,v,w))'\delta(v \nu_{\Omega^*}(u)+w)\ge 0\}dw}{{\rm Vol}_p}\\
		=&~\frac{\int_{B(\bm{0},1)}1\{ \nu_{\Omega^*}(y_d(u,\delta,v,w))\cdot\delta(v \nu_{\Omega^*}(u)+w)\ge 0\}dw}{{\rm Vol}_p}\\
		=&~\frac{\int_{B(\bm{0},1)}1\{ \nu_{\Omega^*}(y_d(u,\delta,v,w))\cdot(v \nu_{\Omega^*}(u)+w)\ge 0\}dw}{{\rm Vol}_p},
	\end{align*}
	where ${\rm Vol}_p$ denotes the volume of the $p$-dimensional unit ball, the fourth equality follows by the mean value theorem with $y_d(u,\delta,v,w)$ on the line segment connecting $u$ with $u+\delta (v \nu_{\Omega^*}(u)+w)$, and the second last follows since $d_{\Omega^*}^s(u)=0$ for $u\in\partial\Omega^*$ and $\nabla d_{\Omega^*}^s(x)=\nu_{\Omega^*}(x)$ for $x\in N(\partial\Omega^*,\mu)$.
	Since $\lim_{\delta\rightarrow 0}y_d(u,\delta,v,w)=u$ and $\nu_{\Omega^*}$ is continuous, 
	$$
	\lim_{\delta\rightarrow 0} \nu_{\Omega^*}(y_d(u,\delta,v,w))\cdot(v \nu_{\Omega^*}(u)+w)=\nu_{\Omega^*}(u)\cdot(v \nu_{\Omega^*}(u)+w)=v+\nu_{\Omega^*}(u)\cdot w.
	$$
	Therefore,
	$$
	\lim_{\delta\rightarrow 0}1\{ \nu_{\Omega^*}(y_d(u,\delta,v,w))\cdot(v \nu_{\Omega^*}(u)+w)\ge 0\}=\begin{cases}
	    1 & ~~\text{if} ~~v+\nu_{\Omega^*}(u)\cdot w>0,\\
	    0 & ~~\text{if} ~~v+\nu_{\Omega^*}(u)\cdot w<0.
	\end{cases}
	$$
	By the Dominated Convergence Theorem,
	\begin{align*}
		\lim_{\delta\rightarrow 0}p_\delta^{ML}(a|u+\delta v \nu_{\Omega^*}(u))&=\frac{\int_{B(\bm{0},1)}1\{v+\nu_{\Omega^*}(u)\cdot w>0\}dw}{{\rm Vol}_p}.
	\end{align*}
	Note that the set $\{w\in B(\bm{0},1):v+\nu(u)\cdot w>0\}$ is a region of the $p$-dimensional unit ball cut off by the plane $\{w\in \mathbb{R}^p:v+\nu(u)\cdot w=0\}$.
	The distance from the center of the unit ball to the plane is $|v|$.
	Using the formula for the volume of a hyperspherical cap (see e.g. \cite{Li2011spherical-cap}), we have
	\begin{align*}
		\int_{B(\bm{0},1)}1\{v+\nu(u)\cdot w> 0\}dw =
		\begin{cases}
			{\rm Vol}_p-\frac{1}{2}{\rm Vol}_pI_{(2(1-v)-(1-v)^2)}(\frac{p+1}{2},\frac{1}{2}) & \ \ \ \text{for $v\in [0,1)$}\\
			\frac{1}{2}{\rm Vol}_pI_{(2(1+v)-(1+v)^2)}(\frac{p+1}{2},\frac{1}{2})  & \ \ \ \text{for $v\in (-1,0)$}.
		\end{cases}
	\end{align*}
	Therefore, $\lim_{\delta\rightarrow 0}p_\delta^{ML}(a|u+\delta v \nu_{\Omega^*}(u))=k(v)$.
\end{proof}

\begin{step}\label{step:nondegenerateQPS}
	For every $(u,v,\delta)\in \partial\Omega^*\cap N({\cal X}_{a,1},\bar\delta)\times(-1,1)\times (0,\bar\delta)$, $q^{ML}_\delta(u+\delta v\nu_{\Omega^*}(u))\in (0,1)$.
\end{step}

\begin{proof}
	Fix $(u,v,\delta)\in \partial\Omega^*\cap N({\cal X}_{a,1},\bar\delta)\times(-1,1)\times (0
	,\bar\delta)$.
	As discussed in \ref{step:lim_qps}, $q_\delta^{ML}(u+\delta v\nu_{\Omega^*}(u))=p_\delta^{ML}(a|u+\delta v\nu_{\Omega^*}(u))$.
	Suppose $v=0$.
	By Step \ref{step:lim_qps}, $p^{ML}(a|u)=\lim_{\delta'\rightarrow 0}p_{\delta'}^{ML}(a|u)=k(0)=\frac{1}{2}$.
	This implies that there exists $\delta'\in (0,\delta)$ such that $p_{\delta'}^{ML}(a|u)\in (0,1)$.
	It then follows that $0<{\cal L}^p(B(u,\delta')\cap \Omega^*)\le {\cal L}^p(B(x,\delta)\cap \Omega^*)$ and that $0<{\cal L}^p(B(x,\delta')\setminus \Omega^*)\le {\cal L}^p(B(x,\delta)\setminus \Omega^*)$.
	Therefore, $p^{ML}_{\delta}(a|u)=\frac{{\cal L}^p(B(u,\delta)\cap\Omega^*)}{{\cal L}^p(B(u,\delta))}\in (0,1)$.
	
	Suppose $v\neq 0$ and let $\epsilon\in (0, \delta(1- |v|))$.
	Note that $B(u,\epsilon)\subset B(u+\delta v\nu_{\Omega^*}(u),\delta)$, since for any $x\in B(u,\epsilon)$, $\|u+\delta v\nu_{\Omega^*}(u)-x\|\le \|\delta v\nu_{\Omega^*}(u)\|+\|u-x\|\le \delta|v|+\epsilon<\delta$.
	Since $p^{ML}(a|u)=\frac{1}{2}$, there exists $\epsilon'\in (0,\epsilon)$ such that $p^{ML}_{\epsilon'}(a|u)\in (0,1)$.
	It then follows that $0<{\cal L}^p(B(u,\epsilon')\cap \Omega^*)\le {\cal L}^p(B(u,\epsilon)\cap \Omega^*)\le {\cal L}^p(B(u+\delta v\nu_{\Omega^*}(u),\delta)\cap \Omega^*)$ and that $0<{\cal L}^p(B(x,\epsilon')\setminus \Omega^*)\le {\cal L}^p(B(x,\epsilon)\setminus \Omega^*)\le {\cal L}^p(B(u+\delta v\nu_{\Omega^*}(u),\delta)\setminus \Omega^*)$.
	Therefore, $p^{ML}_\delta(a|u+\delta v\nu_{\Omega^*}(u))=\frac{{\cal L}^p(B(u+\delta v\nu_{\Omega^*}(u),\delta)\cap \Omega^*)}{{\cal L}^p(B(u+\delta v\nu_{\Omega^*}(u),\delta))}\in (0,1)$.
\end{proof}

\begin{step}\label{step:mean-lim-2}
		Let $g:\mathbb{R}^p\rightarrow \mathbb{R}$ be a function that is bounded on $N(\partial\Omega^*,\delta')\cap N({\cal X}_{a,1},\delta')$ for some $\delta'>0$.
		Then, for $l\ge 0$, there exist $\tilde\delta>0$ and constant $C>0$ such that
		\begin{align*}
			|\delta^{-1} E[q_\delta^{ML}(X_i)^lg(X_i)1\{q_\delta^{ML}(X_i)\in (0,1)\}1\{A_i\in\{1,a\}\}]| \le C
		\end{align*}
		for every $\delta\in (0,\tilde\delta)$.
		If $g$ is continuous on $N(\partial\Omega^*,\delta')\cap N({\cal X}_{a,1},\delta')$ for some $\delta' >0$, then
		\begin{align*}
			&\delta^{-1} E[q_\delta^{ML}(X_i)^lg(X_i)1\{q_\delta^{ML}(X_i)\in (0,1)\}1\{A_i\in\{1,a\}\}]\\ &=\int_{-1}^1k(v)^ldv\int_{\partial\Omega^*\cap {\cal X}_{a,1}}g(x)f_X(x)d{\cal H}^{p-1}(x)+o(1),\\
			&\delta^{-1} E[1\{A_i=a\}q_\delta^{ML}(X_i)^lg(X_i)1\{q_\delta^{ML}(X_i)\in (0,1)\}]\\ &=\int_0^1k(v)^ldv\int_{\partial\Omega^*\cap{\cal X}_{a,1}}g(x)f_X(x)d{\cal H}^{p-1}(x)+o(1)
		\end{align*}
		for $l\ge 0$.
\end{step}

\begin{proof}
	Let $\bar\delta$ be given in Step \ref{step:lim_qps}.
	Under Assumption \ref{consassump_complete} \ref{assumption:boundary-continuity}, there exists $\tilde\delta\in (0,\bar\delta)$ such that $f_X$ is bounded and continuous on $N(\partial\Omega^*,2\tilde\delta)\cap N({\cal X}_{a,1},2\tilde\delta)$.
	Let $\tilde\delta\in (0,\bar\delta)$ be such that both $g$ and $f_X$ are bounded on $N(\partial\Omega^*,2\tilde\delta)\cap N({\cal X}_{a,1},2\tilde\delta)$ and such that $ML(a|x)=1$ or $ML(1|x)=1$ for almost every $x\in N({\mathcal{X}_{a,1}}, \tilde \delta)\cap N(\partial\Omega^*,\tilde \delta)$.
	Such $\tilde \delta$ exists under Assumption \ref{consassump_complete} \ref{assumption:boundary-measure} \ref{assumption:zero-set} and \ref{assumption:boundary-continuity}.
	
	We first show that $q_\delta^{ML}(x)\in \{0,1\}$ for every $x\in {\cal X}_{a,1}\setminus N(\partial\Omega^*,\delta)$ for every $\delta\in (0,\tilde\delta)$.
	Pick $x\in {\cal X}_{a,1}\setminus N(\partial\Omega^*,\delta)$ and $\delta\in (0,\tilde\delta)$.
	Since $B(x,\delta)\cap \partial\Omega^*=\emptyset$,
	either $B(x,\delta)\subset {\rm int}(\Omega^*)$ or $B(x,\delta)\subset{\rm int}(\mathbb{R}^p\setminus \Omega^*)$.
	If $B(x,\delta)\subset {\rm int}(\Omega^*)$, $q_\delta^{ML}(x)=1$.
	If $B(x,\delta)\subset{\rm int}(\mathbb{R}^p\setminus \Omega^*)$, $q_\delta^{ML}(x)=0$, since $ML(a|x')=0$ for all $x'\in \mathbb{R}^p\setminus \Omega^*$ by Assumption \ref{consassump_complete} \ref{assumption:deterministic}.
	Therefore, $\{x\in{\cal X}_{a,1}:q_\delta^{ML}(x)\in (0,1)\}\subset N(\partial\Omega^*,\delta)$ for every $\delta\in (0,\tilde\delta)$.
	This implies that $ML(a|x')=1$ or $ML(1|x')=1$ for almost every $x'\in\{x\in{\cal X}_{a,1}:q_\delta^{ML}(x)\in (0,1)\}$, since $ML(a|x)=1$ or $ML(1|x)=1$ for almost every $x\in N({\mathcal{X}_{a,1}}, \tilde \delta)\cap N(\partial\Omega^*,\tilde \delta)$.
	
	Using this result and Lemma \ref{lemma:neighborhood-integral}, for $\delta\in (0,\tilde\delta)$,
	\begin{align*}
		&~\delta^{-1} E[q_\delta^{ML}(X_i)^lg(X_i)1\{q_\delta^{ML}(X_i)\in (0,1)\}1\{A_i\in\{1,a\}\}]\\
		=&~\delta^{-1} E[q_\delta^{ML}(X_i)^lg(X_i)1\{q_\delta^{ML}(X_i)\in (0,1)\}(ML(a|X_i)+ML(1|X_i))1\{X_i\in{\cal X}_{a,1}\}]\\
		=&~\delta^{-1} E[q_\delta^{ML}(X_i)^lg(X_i)1\{q_\delta^{ML}(X_i)\in (0,1)\}1\{X_i\in{\cal X}_{a,1}\}]\\
		=&~\delta^{-1}\int q_\delta^{ML}(x)^lg(x)1\{q_\delta^{ML}(x)\in (0,1)\}f_X(x)1\{x\in{\cal X}_{a,1}\}dx\\
		=&~\delta^{-1}\int_{N(\partial\Omega^*,\delta)} q_\delta^{ML}(x)^lg(x)1\{q_\delta^{ML}(x)\in (0,1)\}f_X(x)1\{x\in{\cal X}_{a,1}\}dx\\
		=&~\delta^{-1}\int_{-\delta}^{\delta}\int_{\partial\Omega^*}q_\delta^{ML}(u+\lambda\nu_{\Omega^*}(u))^lg(u+\lambda\nu_{\Omega^*}(u))1\{q_\delta^{ML}(u+\lambda\nu_{\Omega^*}(u))\in (0,1)\}\\
		&~~~~~~~~~~~~~~~\times f_X(u+\lambda\nu_{\Omega^*}(u))1\{u+\lambda\nu_{\Omega^*}(u)\in{\cal X}_{a,1}\}J_{p-1}^{\partial\Omega^*}\psi_{\Omega^*}(u,\lambda)d{\cal H}^{p-1}(u)d\lambda.
	\end{align*}
	With change of variables $v = \frac{\lambda}{\delta}$, we have
	\begin{align*}
		&\delta^{-1} E[q_\delta^{ML}(X_i)^lg(X_i)1\{q_\delta^{ML}(X_i)\in (0,1)\}1\{A_i\in\{1,a\}\}]\\
		=&\int_{-1}^{1}\int_{\partial\Omega^*} q_\delta^{ML}(u+\delta v \nu_{\Omega^*}(u))^l 1\{q_\delta^{ML}(u+\delta v \nu_{\Omega^*}(u))\in (0,1)\}1\{u+\delta v \nu_{\Omega^*}(u)\in{\cal X}_{a,1}\}\\
		&~~~~~~~~~~~~~~~\times g(u+\delta v \nu_{\Omega^*}(u))f_X(u+\delta v \nu_{\Omega^*}(u))J_{p-1}^{\partial\Omega^*}\psi_{\Omega^*}(u,\delta v)d{\cal H}^{p-1}(u) dv.
	\end{align*}
	For every $(u,v,\delta)\in \partial\Omega^*\setminus N({\cal X}_{a,1},\tilde\delta)\times(-1,1)\times (0
	,\tilde\delta)$, $u+\delta v \nu_{\Omega^*}(u)\notin {\cal X}_{a,1}$, so
	\begin{align*}
		&\delta^{-1} E[q_\delta^{ML}(X_i)^lg(X_i)1\{q_\delta^{ML}(X_i)\in (0,1)\}1\{A_i\in\{1,a\}\}]\\
		=&\int_{-1}^{1}\int_{\partial\Omega^*\cap N({\cal X}_{a,1},\tilde\delta)} q_\delta^{ML}(u+\delta v \nu_{\Omega^*}(u))^l 1\{q_\delta^{ML}(u+\delta v \nu_{\Omega^*}(u))\in (0,1)\}\\
		&\times 1\{u+\delta v \nu_{\Omega^*}(u)\in{\cal X}_{a,1}\}g(u+\delta v \nu_{\Omega^*}(u))f_X(u+\delta v \nu_{\Omega^*}(u))J_{p-1}^{\partial\Omega^*}\psi_{\Omega^*}(u,\delta v)d{\cal H}^{p-1}(u) dv\\
		=&\int_{-1}^{1}\int_{\partial\Omega^*\cap N({\cal X}_{a,1},\tilde\delta)} q_\delta^{ML}(u+\delta v \nu_{\Omega^*}(u))^l1\{u+\delta v \nu_{\Omega^*}(u)\in{\cal X}_{a,1}\}\\ &~~~~~~~~~~~~~~~\times g(u+\delta v \nu_{\Omega^*}(u))f_X(u+\delta v \nu_{\Omega^*}(u))J_{p-1}^{\partial\Omega^*}\psi_{\Omega^*}(u,\delta v)d{\cal H}^{p-1}(u) dv\\
\end{align*}
	where the second equality follows from Step \ref{step:nondegenerateQPS}.
	By Lemma \ref{lemma:neighborhood-integral}, $J_{p-1}^{\partial\Omega^*}\psi_{\Omega^*}(\cdot,\cdot)$ is bounded on $\partial\Omega^*\times (-\tilde\delta,\tilde\delta)$.
	Since $g$ and $f_X$ are also bounded, for some constant $C>0$,
	\begin{align*}
		&|\delta^{-1} E[q_\delta^{ML}(X_i)^lg(X_i)1\{q_\delta^{ML}(X_i)\in (0,1)\}1\{A_i\in\{1,a\}\}]|\\
		&\le C \int_{-1}^{1}\int_{\partial\Omega^*\cap N({\cal X}_{a,1},\tilde\delta)} d{\cal H}^{p-1}(u) dv,
	\end{align*}
	which is finite by Assumption \ref{consassump_complete} \ref{assumption:boundary-measure} \ref{assumption:p-1dim}.
	
	Now suppose that $g$ and $f_X$ are continuous on $N(\partial\Omega^*,2\tilde \delta)\cap N({\cal X}_{a,1},2\tilde \delta)$.
	We can write
    \begin{align*}
		&\delta^{-1} E[q_\delta^{ML}(X_i)^lg(X_i)1\{q_\delta^{ML}(X_i)\in (0,1)\}1\{A_i\in\{1,a\}\}]\\
		=&\int_{-1}^{1}\int_{\partial\Omega^*\cap {\rm int}({\cal X}_{a,1})} q_\delta^{ML}(u+\delta v \nu_{\Omega^*}(u))^l1\{u+\delta v \nu_{\Omega^*}(u)\in{\cal X}_{a,1}\}\\ &~~~~~~~~~~~~~~~\times g(u+\delta v \nu_{\Omega^*}(u))f_X(u+\delta v \nu_{\Omega^*}(u))J_{p-1}^{\partial\Omega^*}\psi_{\Omega^*}(u,\delta v)d{\cal H}^{p-1}(u) dv\\
		&+\int_{-1}^{1}\int_{\partial\Omega^*\cap \partial{\cal X}_{a,1}} q_\delta^{ML}(u+\delta v \nu_{\Omega^*}(u))^l1\{u+\delta v \nu_{\Omega^*}(u)\in{\cal X}_{a,1}\}\\ &~~~~~~~~~~~~~~~\times g(u+\delta v \nu_{\Omega^*}(u))f_X(u+\delta v \nu_{\Omega^*}(u))J_{p-1}^{\partial\Omega^*}\psi_{\Omega^*}(u,\delta v)d{\cal H}^{p-1}(u) dv\\
		&+\int_{-1}^{1}\int_{\partial\Omega^*\cap (N({\cal X}_{a,1},\tilde\delta)\setminus{\rm cl}({\cal X}_{a,1}))} q_\delta^{ML}(u+\delta v \nu_{\Omega^*}(u))^l1\{u+\delta v \nu_{\Omega^*}(u)\in{\cal X}_{a,1}\}\\ &~~~~~~~~~~~~~~~\times g(u+\delta v \nu_{\Omega^*}(u))f_X(u+\delta v \nu_{\Omega^*}(u))J_{p-1}^{\partial\Omega^*}\psi_{\Omega^*}(u,\delta v)d{\cal H}^{p-1}(u) dv.
	\end{align*}
	The second term is zero by Assumption \ref{consassump_complete} \ref{assumption:boundary-measure} \ref{assumption:p-1dim}.
	Observe that $u+\delta v \nu_{\Omega^*}(u)\in{\cal X}_{a,1}$ for any sufficiently small $\delta>0$ if $u\in{\rm int}({\cal X}_{a,1})$ and that $u+\delta v \nu_{\Omega^*}(u)\notin{\cal X}_{a,1}$ for any sufficiently small $\delta>0$ if $u\notin{\rm cl}({\cal X}_{a,1})$.
	Therefore, by the Dominated Convergence Theorem,
	\begin{align*}
		&~\delta^{-1} E[q_\delta^{ML}(X_i)^lg(X_i)1\{q_\delta^{ML}(X_i)\in (0,1)\}1\{A_i\in\{1,a\}\}]\\ \rightarrow&\int_{-1}^1\int_{\partial\Omega^*\cap {\rm int}({\cal X}_{a,1})}k(v)^lg(u)f_X(u)J_{p-1}^{\partial\Omega^*}\psi_{\Omega^*}(u,0)d{\cal H}^{p-1}(u)dv\\
		=&\int_{-1}^1k(v)^ldv\int_{\partial\Omega^*\cap {\cal X}_{a,1}}g(u)f_X(u)d{\cal H}^{p-1}(u),
	\end{align*}
	where we use the fact from Lemma \ref{lemma:neighborhood-integral} that $J_{p-1}^{\partial\Omega^*}\psi_{\Omega^*}(u,\lambda)$ is continuous in $\lambda$ and $J_{p-1}^{\partial\Omega^*}\psi_{\Omega^*}(u,0)=1$.
	
	Now note that $ML(a|x)=1$ for every $x\in\Omega^*$ and $ML(a|x)=0$ for almost every $x\in N({\cal X}_{a,1},2\tilde\delta)\setminus\Omega^*$.
	Also, for every $(u,v,\delta)\in\partial\Omega^*\cap N({\cal X}_{a,1},\tilde\delta)\times(-1,1)\times (0
	,\tilde\delta)$, $u+\delta v \nu_{\Omega^*}(u)\in \Omega^*$ if $v\in (0,1)$ and $u+\delta v \nu_{\Omega^*}(u)\in N({\cal X}_{a,1},2\tilde\delta)\setminus\Omega^*$ if $v\in (-1,0]$.
	Therefore,
	\begin{align*}
		&~\delta^{-1} E[1\{A_i=a\}q_\delta^{ML}(X_i)^lg(X_i)1\{q_\delta^{ML}(X_i)\in (0,1)\}]\\
		=&~\delta^{-1} E[ML(a|X_i)q_\delta^{ML}(X_i)^lg(X_i)1\{q_\delta^{ML}(X_i)\in (0,1)\}1\{X_i\in{\cal X}_{a,1}\}]\\
		=&\int_{-1}^{1}\int_{\partial\Omega^*\cap N({\cal X}_{a,1},\tilde\delta)} ML(a|u+\delta v \nu_{\Omega^*}(u))q_\delta^{ML}(u+\delta v \nu_{\Omega^*}(u))^l1\{u+\delta v \nu_{\Omega^*}(u)\in{\cal X}_{a,1}\}\\ &~~~~~~~~~~~~~~~\times g(u+\delta v \nu_{\Omega^*}(u))f_X(u+\delta v \nu_{\Omega^*}(u))J_{p-1}^{\partial\Omega^*}\psi_{\Omega^*}(u,\delta v)d{\cal H}^{p-1}(u) dv\\
		=&\int_{0}^{1}\int_{\partial\Omega^*\cap N({\cal X}_{a,1},\tilde\delta)} q_\delta^{ML}(u+\delta v \nu_{\Omega^*}(u))^l1\{u+\delta v \nu_{\Omega^*}(u)\in{\cal X}_{a,1}\}\\ &~~~~~~~~~~~~~~~\times g(u+\delta v \nu_{\Omega^*}(u))f_X(u+\delta v \nu_{\Omega^*}(u))J_{p-1}^{\partial\Omega^*}\psi_{\Omega^*}(u,\delta v)d{\cal H}^{p-1}(u) dv\\
		\rightarrow&\int_{0}^1k(v)^ldv\int_{\partial\Omega^*\cap{\cal X}_{a,1}}g(u)f_X(u)d{\cal H}^{p-1}(u).
	\end{align*}
\end{proof}

\begin{step}\label{step:consistency}
	Let
	$$S_{\mathbf{Z}}=\lim_{\delta\rightarrow 0} \delta^{-1}E[\mathbf{Z}_i\mathbf{Z}_i' 1\{q_\delta^{ML}(X_i)\in (0,1)\}1\{A_i\in\{1,a\}\}]
	$$ and
	$$
	S_Y=\lim_{\delta\rightarrow 0} \delta^{-1}E[\mathbf{Z}_iY_i 1\{q_\delta^{ML}(X_i)\in (0,1)\}1\{A_i\in\{1,a\}\}].
	$$
	Then the second element of $S_{\mathbf{Z}}^{-1}S_Y$ is
	$$
	\frac{\int_{\partial\Omega^*\cap{\cal X}_{a,1}}E[Y_i(a)-Y_i(1)|X_i=x]f_X(x)d{\cal H}^{p-1}(x)}{\int_{\partial\Omega^*\cap{\cal X}_{a,1}}f_X(x)d{\cal H}^{p-1}(x)}.
	$$
	Under Assumption \ref{constant}, this is equal to $\beta(a,1)$.
\end{step}

\begin{proof}
	Note that
	\begin{align*}
	&~E[\mathbf{Z}_iY_i 1\{q_\delta^{ML}(X_i)\in (0,1)\}1\{A_i\in\{1,a\}\}]\\
	=&~E[\mathbf{Z}_i(1\{A_i=a\}Y_i(a)+1\{A_i=1\}Y_i(1))1\{q_\delta^{ML}(X_i)\in (0,1)\}]\\
	=&~E[\mathbf{Z}_i(E[1\{A_i=a\}|X_i]E[Y_i(a)|X_i]+E[1\{A_i=1\}|X_i]E[Y_i(1)|X_i])1\{q_\delta^{ML}(X_i)\in (0,1)\}]\\
	=&~E[\mathbf{Z}_i(1\{A_i=a\}E[Y_i(a)|X_i]+1\{A_i=1\}E[Y_i(1)|X_i])1\{q_\delta^{ML}(X_i)\in (0,1)\}],
	\end{align*}
	where the second equality holds since $A_i$ is independent of $Y_i(\cdot)$ conditional on $X_i$.
	By Step \ref{step:mean-lim-2},
	\begin{align*}
	S_{\mathbf{Z}}
	=&
	\bar f_X\begin{bmatrix}
	2& 1  & \int_{-1}^1 k(v)dv\\
	1 & 1 & \int_{0}^1 k(v)dv \\
	\int_{-1}^1 k(v)dv & \int_{0}^1 k(v)dv & \int_{-1}^1 k(v)^2dv
	\end{bmatrix},
	\end{align*}
	where $\bar f_X=\int_{\partial\Omega^*\cap{\cal X}_{a,1}}f_X(x)d{\cal H}^{p-1}(x)$, and
	\begin{align*}
	S_Y=
	\begin{bmatrix}
	\int_{\partial\Omega^*\cap{\cal X}_{a,1}}E[Y_{i}(a)+Y_{i}(1)|X_i=x]f_X(x)d{\cal H}^{p-1}(x) \\
	\int_{\partial\Omega^*\cap{\cal X}_{a,1}}E[Y_{i}(a)|X_i=x]f_X(x)d{\cal H}^{p-1}(x)  \\
	\int_{\partial\Omega^*\cap{\cal X}_{a,1}}(\int_{0}^1 k(v)dvE[Y_{i}(a)|X_i=x] + \int_{-1}^0 k(v)dvE[Y_{i}(1)|X_i=x])f_X(x)d{\cal H}^{p-1}(x)
	\end{bmatrix}.
	\end{align*}
	After a few lines of algebra, we have
	\begin{align*}
		{\rm det}(S_{\mathbf{Z}})
	=&\bar f_X^{-1}(\int_{-1}^0(k(v)-\int_{-1}^0k(s)ds)^2dv+\int_{0}^1(k(v)-\int_{0}^1k(s)ds)^2dv),
	\end{align*}
	which is nonzero under Assumption \ref{consassump_complete} \ref{assumption:boundary-measure} \ref{assumption:p-1dim}.
	After another few lines of algebra, we obtain that the second element of $S_{\mathbf{Z}}^{-1}S_Y$ is 
	$$
	\frac{\int_{\partial\Omega^*\cap{\cal X}_{a,1}}E[Y_i(a)-Y_i(1)|X_i=x]f_X(x)d{\cal H}^{p-1}(x)}{\int_{\partial\Omega^*\cap{\cal X}_{a,1}}f_X(x)d{\cal H}^{p-1}(x)}=\beta(a,1).
	$$
	Note that if Assumption \ref{constant} does not hold, the left-hand side still represents the mean reward difference for the subpopulation on the boundary $\partial\Omega^*\cap{\cal X}_{a,1}$.
\end{proof}

\begin{step}\label{step:consistency-2}
	If $n\delta_n\rightarrow \infty$ as $n\rightarrow \infty$, then $\hat\beta_a\stackrel{p}{\longrightarrow}\beta(a,1)$.
\end{step}

\begin{proof}
	It suffices to verify that the variance of each element of $\frac{1}{n\delta_n}\sum_{i=1}^n\mathbf{Z}_i\mathbf{Z}_i' I_{i}1\{A_i\in\{1,a\}\}$ and $\frac{1}{n\delta_n}\sum_{i=1}^n\mathbf{Z}_iY I_{i}1\{A_i\in\{1,a\}\}$ is $o(1)$.
	Here, we only verify that $\Var(\frac{1}{n\delta_n}\sum_{i=1}^nq_{\delta_n}^{ML}(X_i)Y_iI_{i}1\{A_i\in\{1,a\}\})=o(1)$.
	Note that
	\begin{align*}
	&E[Y_i^21\{A_i\in\{1,a\}\}|X_i]=E[1\{A_i=a\}Y_{i}(a)^2+1\{A_i=1\}Y_{i}(1)^2|X_i]\\
	&\le E[Y_{i}(a)^2+Y_{i}(1)^2|X_i]E[1\{A_i\in\{1,a\}\}|X_i].
	\end{align*}
	Under Assumption \ref{consassump_complete} \ref{assumption:boundary-continuity}, there exists $\delta'>0$ such that $E[Y_{i}(a)^2+Y_{i}(1)^2|X_i]$ is bounded on $N(\partial\Omega^*,\delta')$.
	We have 
	\begin{align*}
		~&\Var(\frac{1}{n\delta_n}\sum_{i=1}^nq_{\delta_n}^{ML}(X_i)Y_iI_{i}1\{A_i\in\{1,a\}\})\\
		\le~& \frac{1}{n\delta_n}\delta_n^{-1}E[q_{\delta_n}^{ML}(X_i)^2Y_i^2I_{i}1\{A_i\in\{1,a\}\}]\\
		=~&\frac{1}{n\delta_n}\delta_n^{-1}E[q_{\delta_n}^{ML}(X_i)^2E[Y_{i}(a)^2+Y_{i}(1)^2|X_i]I_{i}1\{A_i\in\{1,a\}\}]\\
		\le~& \frac{1}{n\delta_n}C
	\end{align*}
	for some $C>0$, where the last inequality follows from Step \ref{step:mean-lim-2}.
	The conclusion follows since $n\delta_n\rightarrow \infty$.
\end{proof}
%%%%%%%%%%%%%%%%%%%%%%%%%%%%%%%%%%%%%%%%%%%%%%%%%%%%%%%%%%%%%%%%%%%%%%%%%%%%%%%
%%%%%%%%%%%%%%%%%%%%%%%%%%%%%%%%%%%%%%%%%%%%%%%%%%%%%%%%%%%%%%%%%%%%%%%%%%%%%%%

\end{document}